\renewcommand*\backref[1]{\ifx#1\relax \else (Cited on #1) \fi}
\newtheorem{thm}{Theorem}
\newtheorem{defn}{Definition}
\newtheorem{property}{Property}
\newtheorem{corollary}{Corollary}
\newtheorem{lemma}{Lemma}
\newcommand{\cutparagraphup}{\vspace*{-0.1in}}
\title{Pure Transformers are Powerful Graph Learners}
\author{%
    \parbox{0.8\linewidth}{
    \centering
    \vspace{-0.1cm}
    Jinwoo Kim$^1$\thanks{Work done during an internship at LG AI Research.}\hspace{0.4em}\hspace{1em}
    Tien Dat Nguyen$^1$\hspace{1em}
    Seonwoo Min$^2$\hspace{1em}
    Sungjun Cho$^2$\hspace{1em}
    Moontae Lee$^{2,3}$\hspace{1em}
    Honglak Lee$^2$\thanks{Equal corresponding authors.}\hspace{1em}
    Seunghoon Hong$^{1,2}$\footnotemark[2]\hspace{0.4em}
    }\vspace{0.12cm}\\
    $^1$KAIST
    $^2$LG AI Research
    $^3$University of Illinois Chicago%\\
    %\texttt{\{jinwoo-kim, tiendat, seunghoon.hong\}@kaist.ac.kr},\\
    %\texttt{\{seonwoo.min0, sungjun.cho, moontae.lee, honglak\}@lgresearch.ai}
    \vspace{-0.1cm}
}
\begin{document}

\maketitle

\vspace{-0.2cm}
\begin{abstract}
\vspace{-0.2cm}
  We show that standard Transformers without graph-specific modifications can lead to promising results in graph learning both in theory and practice.
  Given a graph, we simply treat all nodes and edges as independent tokens, augment them with token embeddings, and feed them to a Transformer.
  With an appropriate choice of token embeddings, we prove that this approach is theoretically at least as expressive as an invariant graph network (2-IGN) composed of equivariant linear layers, which is already more expressive than all message-passing Graph Neural Networks (GNN).
  When trained on a large-scale graph dataset (PCQM4Mv2), our method coined \textbf{Tokenized Graph Transformer (TokenGT)} achieves significantly better results compared to GNN baselines and competitive results compared to Transformer variants with sophisticated graph-specific inductive bias.
  Our implementation is available at \url{https://github.com/jw9730/tokengt}.
\end{abstract}

\section{Introduction}\label{sec:introduction}

In recent years, Transformer~\cite{vaswani2017attention} has served as a versatile architecture in a broad class of machine learning problems, such as natural language processing~\cite{devlin2019bert, brown2020language}, computer vision~\cite{dosovitskiy2021animage}, and reinforcement learning~\cite{chen2021decision}, to name a few.
It is because the fully-attentional structure of Transformer is general and powerful enough to take, process, and relate inputs and outputs of arbitrary structures, eliminating a need for data- and task-specific inductive bias to be baked into the network architecture.
Combined with large-scale training, it opens up a new chapter for building a versatile model that can solve a wide range of problems involving diverse data modalities and even a mixture of modalities~\cite{andrew2021perceiver, andrew2021perceiverio, reed2022ageneralist}. 

In graph learning domain, inspired by the breakthroughs, multiple works tried combining self-attention into graph neural network (GNN) architecture where message passing was previously dominant~\cite{min2022transformer}.
As global self-attention across nodes cannot reflect the graph structure, however, these methods introduce \emph{graph-specific} architectural modifications.
This includes restricting self-attention to local neighborhoods~\cite{velikovic2018graph, nguyen2022universal, dwivedi2020a}, using global self-attention in conjunction with message-passing GNN~\cite{rong2020self, lin2021mesh, kim2021transformers}, and injecting edge information into global self-attention via attention bias~\cite{wang2019self, ying2021do, hussain2021edge, park2022grpe}.
Despite decent performance, such modifications can be a limiting constraint in terms of versatility, especially considering future integration to multi-task and multi-modal general-purpose attentional architectures~\cite{andrew2021perceiver}.
In addition, deviating from pure self-attention, these methods may inherit the issues of message-passing such as oversmoothing~\cite{li2018deeper, cai2020a, oono2020graph}, and become incompatible with useful engineering techniques \emph{e.g.}, linear attention~\cite{tay2020efficient} developed for standard self-attention.

Instead, we explore the opposite direction of \emph{applying a standard Transformer directly for graphs}.
For this, we treat all nodes and edges as independent tokens, augment them with appropriate token-wise embeddings, and feed the tokens as input to the standard Transformer.
The model operates identically to Transformers used in language and vision; each node or edge is treated as a token, identical to the words in a sentence or patches of an image~\cite{vaswani2017attention, dosovitskiy2021animage}.
Perhaps surprisingly, we show that this simple approach yields a powerful graph learner both in theory and practice.

As a key theoretical result, we prove that with appropriate token-wise embeddings, self-attention over the node and edge tokens can approximate \emph{any} permutation equivariant linear operator on a graph~\cite{maron2019invariant}.
Remarkably, we show that a very simple choice of embedding composed of \emph{node identifiers} and \emph{type identifiers} is sufficient for accurate approximation.
This provides a solid theoretical guarantee that, with the embeddings and enough attention heads, a Transformer is at least as expressive as a second-order invariant graph network (2-IGN)~\cite{maron2019invariant,kim2021transformers}, which is already more expressive than all message-passing GNNs~\cite{gilmer2017neural}.
This also immediately grants the model with the expressive power at least as good as the 2-dimensional Weisfeiler-Lehman (WL) graph isomorphism test~\cite{maron2019provably}, which is often sufficient for real-world graph data~\cite{zopf20221_wl}.
We further extend our theoretical result to \emph{hypergraphs} with order-$k$ hyperedges, showing that a Transformer with order-$k$ generalized token embeddings is at least as expressive as $k$-IGN and, consequently $k$-WL test.

We test our model, named Tokenized Graph Transformer (TokenGT), mainly on the PCQM4Mv2 large-scale quantum chemical property prediction dataset containing 3.7M molecular graphs~\cite{hu2021ogb}.
Even though TokenGT involves minimal graph-specific architectural modifications, it performs significantly better than all GNN baselines, showing that the advantages of Transformer architecture combined with large-scale training surpass the benefit of hard inductive bias of GNNs.
Furthermore, TokenGT achieves competitive performance compared to Transformer variants with strong graph-specific modifications~\cite{ying2021do, hussain2021edge, park2022grpe}.
Finally, we demonstrate that TokenGT can naturally utilize efficient approximations in Transformers in contrast to these variants, using kernel attention~\cite{choromanski2021rethinking} that enables linear computation cost without much degradation in performance.

\section{Tokenized Graph Transformer (TokenGT)}\label{sec:methods}

\begin{figure}[!t]
    \vspace{-0.2em}
    \centering
    \includegraphics[width=0.99\textwidth]{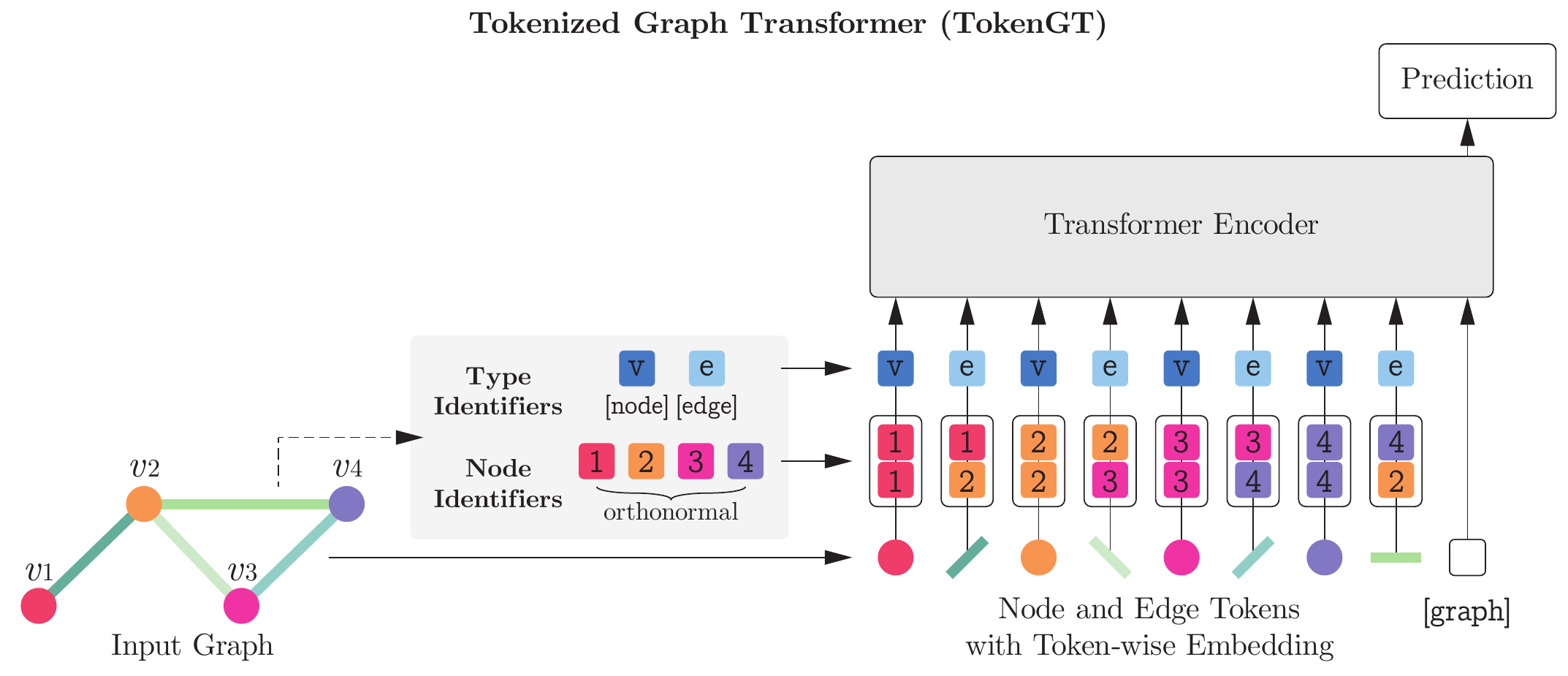}
    \caption{
    Overview of Tokenized Graph Transformer (TokenGT).
    We treat all nodes and edges of an input graph as independent tokens, augment them with orthonormal node identifiers and trainable type identifiers, and feed them to a standard Transformer encoder.
    For graph-level prediction, we follow the common practice~\cite{devlin2019bert, dosovitskiy2021animage} of using an extra trainable \texttt{[graph]} token.
    }
    \label{fig:tokengt}
    \vspace{-0.2em}
\end{figure}

In this section, we present the Tokenized Graph Transformer (TokenGT), a pure Transformer architecture for graphs with token-wise embeddings composed of \emph{node identifiers} and \emph{type identifiers} (Figure~\ref{fig:tokengt}).
Our goal in this section is to provide a practical overview -- for theoretical analysis of the architecture, we guide the readers to Section~\ref{sec:theory}.

Let $\mathcal{G}=(\mathcal{V}, \mathcal{E})$ an input graph with $n$ nodes $\mathcal{V}=\{v_1, ..., v_n\}$ and $m$ edges $\mathcal{E}=\{e_1, ..., e_m\}\subseteq\mathcal{V}^2$, associated with features $\mathbf{X}^\mathcal{V}\in\mathbb{R}^{n\times C}$ and $\mathbf{X}^\mathcal{E}\in\mathbb{R}^{m\times C}$, respectively.
We treat each node and edge as an independent token (thus $(n+m)$ tokens in total) and construct their features by $\mathbf{X}=[\mathbf{X}^\mathcal{V};\mathbf{X}^\mathcal{E}]\in\mathbb{R}^{(n+m)\times C}$.
A na\"ive way to process a graph is to directly provide the tokens $\mathbf{X}$ as input to a Transformer, but it is inappropriate as graph connectivity is discarded.
To thoroughly represent graph structure, we augment the tokens $\mathbf{X}$ with token-wise embeddings, more specifically orthonormal \emph{node identifiers} used for representing the connectivity of the tokens and trainable \emph{type identifiers} that encode whether a token is a node or an edge.
Despite the simplicity, we show that a Transformer applied on these embeddings is a theoretically powerful graph learner.

\paragraph{Node Identifiers}
The first component of token-wise embedding is the orthonormal node identifier that we use to represent the connectivity structure given in the input graph.

For a given input graph $\mathcal{G}=(\mathcal{V}, \mathcal{E})$, we first produce $n$ node-wise orthonormal vectors $\mathbf{P}\in\mathbb{R}^{n\times d_p}$ that we refer to as node identifiers.
Then, we augment the tokens $\mathbf{X}$ with node identifiers as follows.
\begin{itemize}
    \item For each node $v\in\mathcal{V}$, we augment the token $\mathbf{X}_v$ as $[\mathbf{X}_v, \mathbf{P}_v, \mathbf{P}_v]$.
    \item For each edge $(u,v)\in\mathcal{E}$, we augment the token $\mathbf{X}_{(u, v)}$ as $[\mathbf{X}_{(u,v)}, \mathbf{P}_u, \mathbf{P}_v]$.
\end{itemize}
Intuitively, a Transformer operating on the augmented tokens can fully recognize the connectivity structure of the graph since comparing the node identifiers between a pair of tokens reveals their \emph{incidence} information.
For instance, we can tell if an edge~$e=(u, v)$ is connected with a node~$k$ through dot-product (attention) since $[\mathbf{P}_u, \mathbf{P}_v][\mathbf{P}_k, \mathbf{P}_k]^\top=1$ if and only if $k\in(u, v)$ and 0 otherwise. 
This allows the Transformer to identify and exploit the connectivity structure of a graph, for instance by putting more weights on incident pairs when the local operation is important.

Notably, as the node identifiers $\mathbf{P}$ are \emph{only} required to be orthonormal, we have a large degree of freedom in implementation choices.
We outline two practical methods below as examples.
Their implementation details can be found in Appendix~\ref{sec:apdx_node_type_identifiers}.
\begin{itemize}
    \item Orthogonal random features (ORFs), \emph{e.g.}, rows of random orthogonal matrix $\mathbf{Q}\in\mathbb{R}^{n\times n}$ obtained with QR decomposition of random Gaussian matrix $\mathbf{G}\in\mathbb{R}^{n\times n}$~\cite{yu2016orthogonal, choromanski2017the}.
    \item Laplacian eigenvectors obtained from eigendecomposition of graph Laplacian matrix, \emph{i.e.}, rows of $\mathbf{U}$ from $\boldsymbol{\Delta}=\mathbf{I}-\mathbf{D}^{-1/2}\mathbf{A}\mathbf{D}^{-1/2}=\mathbf{U}^\top\boldsymbol{\Lambda}\mathbf{U}$, where $\mathbf{A}\in\mathbb{R}^{n\times n}$ is adjacency matrix, $\mathbf{D}$ is degree matrix, and $\boldsymbol{\Lambda}$, $\mathbf{U}$ correspond to eigenvalues and eigenvectors respectively~\cite{dwivedi2020benchmarking}.
\end{itemize}

Among the two methods, node identifiers generated as ORFs do not encode any information about the graph structure as they are entirely random.
This means the Transformer that operates on the ORF-based node identifiers needs to compile and recognize graph structure only from the incidence information provided by the node identifiers.
Although this is challenging, perhaps surprisingly, we empirically show in Section~\ref{sec:experiments} that Transformers are strong enough to learn meaningful structural representations out of ORF-based node identifiers and outperform GNNs on large-scale task.

In contrast to ORFs, Laplacian eigenvectors provide a kind of graph positional embeddings (graph PEs) that describes the distance between nodes on a graph.
Due to the positional information, it yields better performance compared to ORFs in our experiments in Section~\ref{sec:experiments}.
One interesting aspect of Laplacian eigenvectors is that they can be viewed as a generalization of sinusoidal positional embeddings of NLP Transformers to graphs, as the eigenvectors of 1D chain graphs are sine and cosine functions~\cite{dwivedi2020benchmarking}.
Thus, by choosing Laplacian eigenvectors as node identifiers, our approach can be interpreted as a direct extension of the NLP Transformer for inputs involving relational structures.

\paragraph{Type Identifiers}
The second component of token-wise embedding is the trainable type identifier that encodes whether a token is node or edge.
For a given input graph $\mathcal{G}=(\mathcal{V}, \mathcal{E})$, we first prepare a trainable parameter matrix $\mathbf{E}=[\mathbf{E}^\mathcal{V};\mathbf{E}^\mathcal{E}]\in\mathbb{R}^{2\times d_e}$ that contains two type identifiers $\mathbf{E}^\mathcal{V}$ and $\mathbf{E}^\mathcal{E}$ for nodes and edges respectively.
Then, we further augment the tokens with type identifiers as follows.
\begin{itemize}
    \item For each node $v\in\mathcal{V}$, we augment the token $[\mathbf{X}_v, \mathbf{P}_v, \mathbf{P}_v]$ as $[\mathbf{X}_v, \mathbf{P}_v, \mathbf{P}_v, \mathbf{E}^\mathcal{V}]$.
    \item For each edge $(u,v)\in\mathcal{E}$, we augment the token $[\mathbf{X}_{(u,v)}, \mathbf{P}_u, \mathbf{P}_v]$ as $[\mathbf{X}_{(u,v)}, \mathbf{P}_u, \mathbf{P}_v, \mathbf{E}^\mathcal{E}]$.
\end{itemize}

These embeddings provide information on whether a given token is a node or an edge, which is critical, \emph{e.g.}, when an attention head tries to attend specifically to node tokens and ignore edge tokens.

\paragraph{Main Transformer}
With node identifiers and type identifiers, we obtain augmented token features $\mathbf{X}^{in}\in\mathbb{R}^{(n+m)\times(C+2d_p+d_e)}$, which is further projected by a trainable matrix $w^{in}\in\mathbb{R}^{(C+2d_p+d_e)\times d}$ to be an input to Transformer.
For graph-level prediction, we prepend a special token \texttt{[graph]} with trainable embedding $\mathbf{X}_\texttt{[graph]}\in\mathbb{R}^d$ similar to BERT~\cite{devlin2019bert} and ViT~\cite{dosovitskiy2021animage}.
We utilize the feature of \texttt{[graph]} token at the output of the encoder as the graph representation, on which a linear prediction head is applied to produce the final graph-level prediction.
Overall, the tokens $\mathbf{Z}^{(0)}=[\mathbf{X}_\texttt{[graph]};\mathbf{X}^{in}w^{in}]\in\mathbb{R}^{(1+n+m)\times d}$ are used as the input to the main encoder.
As an encoder, we adopt the standard Transformer~\cite{vaswani2017attention}, which is an alternating stack of multihead self-attention layers (MSA) and feedforward MLP layers.
We provide further details in Appendix~\ref{sec:apdx_proofs_preliminary}.

\paragraph{Inductive Bias}
Similar to Transformers in language and vision~\cite{devlin2019bert, dosovitskiy2021animage}, Tokenized Graph Transformer treats input nodes and edges as independent tokens and applies self-attention to them.
This approach leads to much less inductive bias than current GNNs, where the sparse graph structure, or more fundamentally, \emph{permutation symmetry} of graphs is deliberately baked into each layer~\cite{gilmer2017neural, maron2019invariant, maron2019provably, kim2021transformers}.
For TokenGT, such information is provided entirely as a part of input using token-wise embeddings, and the model has to learn how to interpret and utilize the information from data.
Although such weak inductive bias might raise questions on the expressiveness of the model, our theoretical analysis in Section~\ref{sec:theory} shows that TokenGT is a powerful graph learner thanks to the token-wise embeddings and expressive power of self-attention.
For example, we show that TokenGT is more expressive than all message-passing GNNs under the framework of Gilmer~et~al.~(2017)~\cite{gilmer2017neural}.

\section{Theoretical Analysis}\label{sec:theory}

We now present our theory.
Our key result is that TokenGT, a standard Transformer with node and type identifiers presented in Section~\ref{sec:methods}, is provably \emph{at least} as expressive as the second-order Invariant Graph Network (2-IGN~\cite{maron2019invariant}), which is built upon \emph{all} possible permutation equivariant linear layers on a graph.
This provides solid theoretical guarantees for TokenGT, such as being at least as powerful as the 2-WL graph isomorphism test and more expressive than all message-passing GNNs.
Our theory is based on a general framework on hypergraphs represented as higher-order tensors, which leads to the formulation of order-$k$ TokenGT that is at least as expressive as order-$k$ IGN ($k$-IGN~\cite{maron2019invariant}).

\subsection{Preliminary: Permutation Symmetry and Invariant Graph Networks}\label{sec:preliminary}

\paragraph{Representing and Processing Sets and (Hyper)Graphs}
For a set of $n$ nodes, we often represent their features as $\mathbf{X}\in\mathbb{R}^{n\times d}$ where $\mathbf{X}_i\in\mathbb{R}^d$ is the feature of the $i$-th node.
The set is unordered and, therefore, should be treated invariant to the renumbering of the nodes.
Let $S_n$ the symmetric group or the group of permutations $\pi$ on $[n]=\{1, ..., n\}$.
By $\pi\cdot\mathbf{X}$ we denote permuting rows of $\mathbf{X}$ with $\pi$, \emph{i.e.}, $(\pi\cdot\mathbf{X})_i=\mathbf{X}_{\pi^{-1}(i)}$.
Here, $\mathbf{X}$ and $\pi\cdot\mathbf{X}$ represent the identical set for all $\pi\in S_n$.

Generally, we consider (hyper)graphs represented as order-$k$ tensor $\mathbf{X}\in\mathbb{R}^{n^k\times d}$ with feature $\mathbf{X}_\mathbf{i}=\mathbf{X}_{i_1, ..., i_k}\in\mathbb{R}^d$ attached to (hyper)edge represented as multi-index $\mathbf{i}=(i_1, ..., i_k)\in[n]^k$.
Similar to sets, the tensor should be treated invariant to node renumbering by any $\pi\in S_n$ that acts on $\mathbf{X}$ by $(\pi\cdot\mathbf{X})_{\mathbf{i}}=\mathbf{X}_{\pi^{-1}(\mathbf{i})}$ where $\pi^{-1}(\mathbf{i}) = (\pi^{-1}(i_1), ..., \pi^{-1}(i_k))$.
That is, $\mathbf{X}$ and $\pi\cdot\mathbf{X}$ represent the identical (hyper)graph for all $\pi$.
Due to such symmetry, to build a function $F(\mathbf{X})\approx T$ for tensor $\mathbf{X}$ and target $T$, a suitable way is to make them \emph{invariant} $F(\pi\cdot\mathbf{X})=F(\mathbf{X})$ when the target is a vector or \emph{equivariant} $F(\pi\cdot\mathbf{X})=\pi\cdot F(\mathbf{X})$ when the target is also a tensor, for all $\mathbf{X}\in\mathbb{R}^{n^k\times d}$ and $\pi\in S_n$.

In our theoretical analysis, we work on order-$k$ dense tensor representation $\mathbf{X}\in\mathbb{R}^{n^k\times d}$ of a graph as they can represent node features ($k=1$), edge features ($k=2$), or hyperedge features ($k>2$) in a unified manner.
This is interchangeable but slightly different from the sparse representation of a graph with edge set $\mathcal{E}$ used in Section~\ref{sec:methods}.
Nevertheless, in Section~\ref{sec:experiments} we empirically verify that our key theoretical findings work equally well for dense and sparse graphs.

\paragraph{Invariant Graph Network}

We mainly develop our theoretical analysis upon \emph{Invariant Graph Networks (IGNs)}~\cite{maron2019invariant, maron2019provably}, a family of expressive graph networks derived from the permutation symmetry of tensor representation of graphs.
Here we provide a summary.
In general, we define:
\begin{defn}\label{defn:invariant_graph_network}
An order-$k$ Invariant Graph Network ($k$-IGN) is a function $F_k:\mathbb{R}^{n^k\times d_0}\to\mathbb{R}$ written as the following:
\begin{align}
    F_k = \textnormal{MLP}\circ L_{k\to 0}\circ L_{k\to k}^{(T)}\circ\sigma\circ...\circ\sigma\circ L_{k\to k}^{(1)},\label{eqn:invariant_graph_network}
\end{align}
where each $L_{k\to k}^{(t)}$ is equivariant linear layer~\cite{maron2019invariant} from $\mathbb{R}^{n^k\times d_{t-1}}$ to $\mathbb{R}^{n^k\times d_t}$, $\sigma$ is activation function, and $L_{k\to 0}$ is a invariant linear layer from $\mathbb{R}^{n^k\times d_T}$ to $\mathbb{R}$.
\end{defn}

A body of previous work have shown appealing theoretical properties of $k$-IGN, including universal approximation~\cite{maron2019on} and alignment to $k$-Weisfeiler-Lehman ($k$-WL) graph isomorphism test~\cite{maron2019provably, chen2020can}.
In particular, it is known that $k$-IGNs are theoretically at least as powerful as the $k$-WL test~\cite{maron2019provably}.
It is also known that 2-IGNs are already more expressive~\cite{maron2019invariant, kim2021transformers} than all message-passing GNNs under the framework of Gilmer~et~al.~(2017)~\cite{gilmer2017neural}.

The core building block of IGN is invariant and equivariant \emph{linear} layers~\cite{maron2019invariant} with maximal expressiveness while respecting node permutation symmetry.
The layers are defined as follows:
\begin{defn}\label{defn:equivariant_linear_layer}
An equivariant linear layer is a function $L_{k\rightarrow l}:\mathbb{R}^{n^k\times d}\to\mathbb{R}^{n^l\times d'}$ written as follows for order-$k$ input $\mathbf{X}\in\mathbb{R}^{n^k\times d}$:
\begin{align}
    L_{k\rightarrow l}(\mathbf{X})_{\mathbf{i}} = \sum_{\mu}{\sum_{\mathbf{j}}{\mathbf{B}^{\mu}_{\mathbf{i}, \mathbf{j}}\mathbf{X}_{\mathbf{j}}w_{\mu}}} + \sum_{\lambda}{\mathbf{C}^{\lambda}_{\mathbf{i}}b_{\lambda}},\label{eqn:equivariant_linear_layer}
\end{align}
where $\mathbf{i}\in[n]^l,\mathbf{j}\in[n]^k$ are multi-indices, $w_\mu\in\mathbb{R}^{d\times d'}$, $b_\lambda\in\mathbb{R}^{d'}$ are weight and bias parameters, and $\mathbf{B}^\mu\in\mathbb{R}^{n^{l+k}}$ and $\mathbf{C}^\lambda\in\mathbb{R}^{n^l}$ are binary basis tensors corresponding to order-$(l+k)$ and order-$l$ equivalence classes $\mu$ and $\lambda$, respectively.
Invariant linear layer is a special case of $L_{k\rightarrow l}$ with $l=0$.
\end{defn}

We provide the definition of the equivalence classes and basis tensors in Appendix~\ref{sec:apdx_proofs_preliminary}.
For now, it is sufficient to know that the basis tensors are binary tensors that form the orthogonal basis of the full space of linear equivariant layers.
In general, in Eq.~\eqref{eqn:equivariant_linear_layer} it is known that there exists $\textnormal{bell}(k+l)$ number of basis tensors $\mathbf{B}^{\mu}$ for the weight and $\textnormal{bell}(l)$ number of basis tensors $\mathbf{C}^{\lambda}$ for the bias.

\subsection{Can Self-Attention Approximate Equivariant Basis?}\label{sec:intuition}

Now, we present an intuition that connects Transformer (Section~\ref{sec:methods}) and equivariant linear layer~(Definition~\ref{defn:equivariant_linear_layer}).
For that, we write out the multihead self-attention layer as follows:
\begin{align}
    \textnormal{MSA}(\mathbf{X})_i = \sum_{h=1}^H{\sum_j\boldsymbol{\alpha}_{ij}^h\mathbf{X}_j w_h^V w_h^O}\textnormal{ where }\boldsymbol{\alpha}^h = \textnormal{softmax}\left(\frac{\mathbf{X}w_h^Q(\mathbf{X}w_h^K)^\top}{\sqrt{d_H}}\right),\label{eqn:transformer_multihead_self_attention_recap}
\end{align}
where $H$ is number of heads, $d_H$ is head size, and $w_h^Q,w_h^K\in\mathbb{R}^{d\times d_H}$, $w_h^V\in\mathbb{R}^{d\times d_v}$ $w_h^O\in\mathbb{R}^{d_v\times d}$.

Our intuition is that the weighted sum of values with self-attention matrix $\boldsymbol{\alpha}^h$ in Eq.~\eqref{eqn:transformer_multihead_self_attention_recap} is analogous to the masked sum with basis tensor $\mathbf{B}^\mu$ in Eq.~\eqref{eqn:equivariant_linear_layer} up to normalization.
This naturally leads to the following question: for a given equivariant layer $L_{k\to k}:\mathbb{R}^{n^k\times d}\to\mathbb{R}^{n^k\times d}$, can we use a Transformer layer with multihead self-attention $\textnormal{MSA}:\mathbb{R}^{N\times d'}\to\mathbb{R}^{N\times d'}$ with $N=n^k$ to accurately approximate $L_{k\to k}$ by having $H = \textnormal{bell}(2k)$ attention heads approximate each equivariant basis $\mathbf{B}^\mu$?

We show that this can be possible, but only if we provide appropriate auxiliary information to input.
For example, let us consider first-order layer $L_{1\rightarrow 1}$.
The layer has $\text{bell}(2)=2$ basis tensors $\mathbf{B}^{\mu_1}=\mathbf{I}$ and $\mathbf{B}^{\mu_2}=\mathbf{11}^\top-\mathbf{I}$ for the weight, and $\text{bell}(1)=1$ basis tensor $\mathbf{C}^{\lambda_1}=\mathbf{1}$ for the bias.
Given an input set $\mathbf{X}\in\mathbb{R}^{n\times d}$ it computes the following with $w_1,w_2\in\mathbb{R}^{d\times d}$, $b\in\mathbb{R}^d$:
\begin{align}
    L_{1\rightarrow 1}(\mathbf{X}) = \mathbf{I}\mathbf{X}w_1+(\mathbf{11}^\top-\mathbf{I})\mathbf{X}w_2+\mathbf{1}b^{\top}.\label{eqn:equivariant_linear_layer_first_order}
\end{align}
Now consider approximating basis tensor $\mathbf{B}^{\mu_1}=\mathbf{I}$ with an attention matrix $\boldsymbol{\alpha}^1$.
The approximation is accurate when $i$-th query always only attends to $i$-th key and ignores the rest.
To achieve the attention structure consistently, \emph{i.e.}, agnostic to input $\mathbf{X}$, we need to provide auxiliary input that self-attention can "latch onto" to faithfully approximate $\boldsymbol{\alpha}^1\approx\mathbf{I}$.
Without this, attention must entirely rely on the inputs $\mathbf{X}$, which is unreliable and can lead to approximation failure, \emph{e.g.}, when $\mathbf{X}$ has repeated rows.

For the auxiliary information, we prepare $n$ node-wise orthonormal vectors $\mathbf{P}\in\mathbb{R}^{n\times d_p}$ (note that this is identical to node identifiers in Section~\ref{sec:methods}), and augment the input to $\mathbf{X}^{in}=[\mathbf{X}, \mathbf{P}]\in\mathbb{R}^{n\times(d+d_p)}$.
Let us assume that the query and key projections in Eq.~\eqref{eqn:transformer_multihead_self_attention_recap} ignore $\mathbf{X}$ and only leave $\mathbf{P}$ scaled by $\sqrt{a}$ with $a>0$.
Then attention matrix is computed as $\boldsymbol{\alpha}^1 = \textnormal{softmax}(\mathbf{S})\textnormal{ where }\mathbf{S}_{ij}=a\mathbf{P}_i^\top\mathbf{P}_j$.
Here, due to the orthonormality of $\mathbf{P}$, we have $\mathbf{P}_i^\top\mathbf{P}_j=1$ only if $i=j$ and otherwise 0, which leads to $\mathbf{S}=a\mathbf{I}$.
With $a\to\infty$ by scaling up the query and key projection weights, the softmax becomes arbitrarily close to the hardmax operator, and we obtain the following:
\begin{align}
    \boldsymbol{\alpha}^1 = \textnormal{softmax}(a\mathbf{I})\to \mathbf{I}\textnormal{ as }a\to\infty.
\end{align}
Thus, self-attention can utilize the auxiliary information $\mathbf{P}$ to achieve an input-agnostic approximation of $\boldsymbol{\alpha}^1$ to $\mathbf{I}$.
Notably, we can achieve a similar approximation for $\mathbf{B}^{\mu_2}=\mathbf{11}^\top-\mathbf{I}$ using the \emph{same} $\mathbf{P}$ by flipping the sign of keys, which gives $\boldsymbol{\alpha}^2=\textnormal{softmax}(-a\mathbf{I})$ due to orthonormality.
By sending $a\to\infty$, now attention from the $i$-th query to the $i$-th key is suppressed, and we obtain the following:
\begin{align}
    \boldsymbol{\alpha}^2=\textnormal{softmax}\left(-a\mathbf{I}\right)\to \frac{1}{n-1}(\mathbf{11}^\top-\mathbf{I})\textnormal{ as }a\to\infty.
\end{align}
Note that this approximation is accurate only up to row normalization as rows of $\boldsymbol{\alpha}^2$ always sum to one due to softmax, while $\mathbf{B}^{\mu_2}=\mathbf{11}^\top-\mathbf{I}$ is binary.
In our proofs of the theoretical results, we perform appropriate denormalization with MLP after MSA to achieve an accurate approximation.

Overall, we see that simple auxiliary input $\mathbf{P}$ suffices for two attention heads to approximate the equivariant basis of $L_{1\to 1}$ accurately.
We now question the following.
Given appropriate auxiliary information as input, can a Transformer layer with $\textnormal{bell}(2k)$ attention heads accurately approximate $L_{k\to k}$ by having each head approximate each equivariant basis $\mathbf{B}^\mu$?
What would be the sufficient auxiliary input?
We answer the question by showing that, with (order-$k$ generalized) node and type identifiers presented in Section~\ref{sec:methods}, Transformer layers can accurately approximate equivariant layers $L_{k\to k}$ via input-agnostic head-wise approximation of each equivariant basis.

\subsection{Pure Transformers are Powerful Graph Learners}\label{sec:theoretical_results}

We now present our main theoretical results that extend the discussions in Section~\ref{sec:intuition} to any order~$k$.
Note that $k=2$ corresponds to TokenGT for graphs presented in Section~\ref{sec:methods}.
With $k>2$, we naturally extend TokenGT to hypergraphs.
All proofs can be found in Appendix~\ref{sec:apdx_proofs}.

We first introduce generalized node and type identifiers (Section~\ref{sec:methods}) for order-$k$ tensors $\mathbf{X}\in\mathbb{R}^{n^k\times d}$.
We define the node identifier $\mathbf{P}\in\mathbb{R}^{n\times d_p}$ as an orthonormal matrix with $n$ rows, and the type identifier as a trainable matrix $\mathbf{E}\in\mathbb{R}^{\textnormal{bell}(k)\times d_e}$ that contains $\textnormal{bell}(k)$ rows $\mathbf{E}^{\gamma_1}, ..., \mathbf{E}^{\gamma_{\textnormal{bell}(k)}}$, each of which is designated for an order-$k$ equivalence class $\gamma$.
Then, we augment each entry of input tensor as $[\mathbf{X}_{i_1, ..., i_k}, \mathbf{P}_{i_1}, ...,\mathbf{P}_{i_k}, \mathbf{E}^\gamma]$
where $(i_1, ..., i_k)\in\gamma$. %where $\gamma$ is the equivalence class of multi-index $(i_1, ..., i_k)$.

Let us exemplify.
For $k=1$~(sets), each $i$-th entry is augmented as $[\mathbf{X}_i, \mathbf{P}_i, \mathbf{E}^{\gamma_1}]$, consistent with our discussion in Section~\ref{sec:intuition}.
For $k=2$~(graphs), each $(i, i)$-th entry is augmented as $[\mathbf{X}_{ii}, \mathbf{P}_i, \mathbf{P}_i, \mathbf{E}^{\gamma_1}]$ and each $(i, j)$-th entry ($i\neq j$) is augmented as $[\mathbf{X}_{ij}, \mathbf{P}_i, \mathbf{P}_j, \mathbf{E}^{\gamma_2}]$.
This is consistent with TokenGT in Section~\ref{sec:methods}, which augments nodes with $\mathbf{E}^{\mathcal{V}}=\mathbf{E}^{\gamma_1}$ and edges with $\mathbf{E}^{\mathcal{E}}=\mathbf{E}^{\gamma_2}$.

With node and type identifiers, we obtain augmented order-$k$ tensor $\mathbf{X}^{in}\in\mathbb{R}^{n^k\times(d+kd_p+d_e)}$.
We use a trainable projection $w^{in}\in\mathbb{R}^{(d+kd_p+d_e)\times d_\mathcal{T}}$ to map them to hidden dimension $d_\mathcal{T}$ of a Transformer.
We now show that self-attention on $\mathbf{X}^{in}w^{in}$ can accurately approximate equivariant basis:
\begin{lemma}\label{lemma:approximation_equivariant_basis}
For all $\mathbf{X}\in\mathbb{R}^{n^k\times d}$ and their augmentation $\mathbf{X}^{in}$, self-attention coefficients $\boldsymbol{\alpha}^h$ (Eq.~\eqref{eqn:transformer_multihead_self_attention_recap}) computed with $\mathbf{X}^{in}w^{in}$ can approximate any basis tensor $\mathbf{B}^{\mu}\in\mathbb{R}^{n^{2k}}$ of order-$k$ equivariant linear layer $L_{k\to k}$ (Definition~\ref{defn:equivariant_linear_layer}) to arbitrary precision up to normalization.
\end{lemma}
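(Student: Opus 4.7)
The plan is to construct, for each equivariant basis tensor $\mathbf{B}^\mu$ corresponding to an equivalence class $\mu$ on $[n]^{2k}$, one attention head whose pre-softmax logit $\mathbf{S}_{\mathbf{i}, \mathbf{j}}$ is proportional to a large scalar $a$ precisely when $(\mathbf{i}, \mathbf{j}) \in \mu$ and strictly smaller otherwise; as $a \to \infty$ the softmax collapses to the row-normalized indicator of $\mu$. The first step is to decompose $\mu$: the partition of $\{1, \ldots, 2k\}$ that defines $\mu$ restricts to a partition of $\{1, \ldots, k\}$ (the within-query class, call it $\gamma$), a partition of $\{k+1, \ldots, 2k\}$ (within-key class $\gamma'$), and a cross structure specifying which pairs $(p, q) \in [k]^2$ must satisfy $i_p = j_q$ (set $C_+$) and which must satisfy $i_p \neq j_q$ (set $C_-$). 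Every constraint defining $\mu$ falls into one of these four buckets, so $(\mathbf{i}, \mathbf{j}) \in \mu$ iff all four conditions hold simultaneously.

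The core construction is to choose the query/key projections $w^Q, w^K$ so that the $\mathbf{X}$-components of the augmented features are zeroed and only the identifier blocks survive, yielding stacked query and key vectors whose inner product is
\begin{align*}
\mathbf{S}_{\mathbf{i}, \mathbf{j}} = a\left(\mathbf{E}^{\gamma_\mathbf{i}}{}^\top\mathbf{E}^\gamma + \mathbf{E}^{\gamma_\mathbf{j}}{}^\top\mathbf{E}^{\gamma'} + \sum_{(p,q)\in C_+}\mathbf{P}_{i_p}^\top\mathbf{P}_{j_q} - \sum_{(p,q)\in C_-}\mathbf{P}_{i_p}^\top\mathbf{P}_{j_q}\right).
\end{align*}
This is realizable as a single query-key inner product because each identifier occupies a fixed slot in the augmented feature and sign flips can be absorbed into $w^K$. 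Assuming the trainable matrix $\mathbf{E}$ is chosen to have orthonormal rows, every dot product above becomes a Kronecker $\delta$ by orthonormality of both $\mathbf{P}$ and $\mathbf{E}$. A direct count then shows that $\mathbf{S}_{\mathbf{i}, \mathbf{j}}$ attains its maximum value $a(2 + |C_+|)$ exactly when $\gamma_\mathbf{i} = \gamma$, $\gamma_\mathbf{j} = \gamma'$, all cross equalities in $C_+$ hold, and no cross equality in $C_-$ holds, i.e., when $(\mathbf{i}, \mathbf{j}) \in \mu$; every other configuration is penalized by at least one unit of $a$. For any $\epsilon > 0$, a sufficiently large $a$ then makes each row of $\textnormal{softmax}(\mathbf{S})$ within $\epsilon$ of the uniform distribution on the support of $\mathbf{B}^\mu_{\mathbf{i}, \cdot}$, which is precisely the row-normalization of $\mathbf{B}^\mu$. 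Since $\mathbf{S}$ is independent of $\mathbf{X}$, the approximation is input-agnostic, exactly mirroring the $k=1$ intuition of Section~\ref{sec:intuition}.

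The main obstacle I anticipate is enforcing the "no extra equalities" requirement faithfully: without the negative penalty terms for pairs in $C_-$, a configuration with accidental coincidences of node indices would receive the same top score as a genuine element of $\mu$, and the softmax would concentrate on a strictly larger support. A related subtlety is that rows of $\mathbf{B}^\mu$ which are identically zero (when $\gamma_\mathbf{i} \neq \gamma$, or when $n$ is too small to realize the full equality-inequality pattern of $\mu$) cannot be matched by softmax, which always outputs a probability distribution on each row; this is precisely what "up to normalization" in the lemma statement accommodates, and in the subsequent proof approximating the full equivariant layer $L_{k\to k}$ these zero rows can be suppressed using the post-attention MLP together with the type-identifier information that already indicates $\gamma_\mathbf{i}$.
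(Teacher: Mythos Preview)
Your plan is essentially the paper's own argument: decompose $\mu$ into the within-query class $\gamma$, within-key class $\gamma'$, and the cross equality/inequality pattern (the paper packages this as Lemmas~\ref{lemma:equivalence_class_separation}--\ref{lemma:equivalence_class_test_breakdown}); build a score from identifier dot products that is maximal iff $(\mathbf{i},\mathbf{j})\in\mu$ (the paper calls this a scoring function, Definition~\ref{defn:scoring_function} and Property~\ref{property:equivalence_class_test_scoring_function}); send $a\to\infty$. Your handling of the zero-row case also matches the paper's.

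Two small points of difference worth noting. First, the paper chooses the type identifiers as radially equispaced unit vectors on a 2D plane, so distinct classes have dot product at most $1-\epsilon$ with $\epsilon=1-\cos(2\pi/\textnormal{bell}(k))$; your orthonormal choice is cleaner (the gap is exactly $1$) but needs $d_e\geq\textnormal{bell}(k)$ rather than $d_e\geq 2$. Second, your sentence ``realizable as a single query-key inner product because each identifier occupies a fixed slot and sign flips can be absorbed into $w^K$'' covers the cross terms but not the type-identifier terms: $\mathbf{E}^{\gamma_\mathbf{i}}{}^\top\mathbf{E}^\gamma$ requires the \emph{constant} vector $\mathbf{E}^\gamma$ to appear on the key side (and $\mathbf{E}^{\gamma'}$ on the query side), which the linear maps $w^Q,w^K$ alone cannot produce from $\mathbf{X}^{in}w^{in}$. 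The paper handles this by adding affine biases $b_h^Q,b_h^K$ to the query and key projections; you should make that explicit in your construction.
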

Consequently, with the node and type identifiers, a collection of $\textnormal{bell}(2k)$ attention heads can approximate the collection of all basis tensors of order-$k$ equivariant layer.
This leads to the following:
\begin{thm}\label{thm:approximation_equivariant_layer}
For all $\mathbf{X}\in\mathbb{R}^{n^k\times d}$ and their augmentation $\mathbf{X}^{in}$, a Transformer layer with $\textnormal{bell}(2k)$ self-attention heads that operates on $\mathbf{X}^{in}w^{in}$ can approximate an order-$k$ equivariant linear layer $L_{k\to k}(\mathbf{X})$ (Definition~\ref{defn:equivariant_linear_layer}) to arbitrary precision.
\end{thm}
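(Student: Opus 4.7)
The plan is to reduce Theorem~\ref{thm:approximation_equivariant_layer} to Lemma~\ref{lemma:approximation_equivariant_basis} by dedicating one of the $\textnormal{bell}(2k)$ attention heads to each weight basis tensor $\mathbf{B}^\mu$ of $L_{k\to k}$, and then absorbing the two residual discrepancies into the feedforward MLP that follows the self-attention sublayer: first, the softmax row-normalization that prevents $\boldsymbol{\alpha}^\mu$ from matching the binary $\mathbf{B}^\mu$ exactly, and second, the bias term $\sum_\lambda \mathbf{C}^\lambda_{\mathbf{i}} b_\lambda$ which is not directly produced by MSA.

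First I would apply Lemma~\ref{lemma:approximation_equivariant_basis} to each of the $\textnormal{bell}(2k)$ equivalence classes $\mu$, obtaining query/key projections such that, as the temperature scaling $a\to\infty$, the attention coefficients $\boldsymbol{\alpha}^\mu$ converge uniformly to the row-normalized basis $\mathbf{B}^\mu / \|\mathbf{B}^\mu_{\mathbf{i},\cdot}\|_1$. For the value and output projections $w_h^V w_h^O$ of head $h=\mu$, I would choose them to zero out the $\mathbf{P}$ and $\mathbf{E}$ blocks of the augmented features $\mathbf{X}^{in} w^{in}$ and apply $w_\mu$ to the remaining $\mathbf{X}$ block. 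Each head then contributes $\sum_{\mathbf{j}} \boldsymbol{\alpha}^\mu_{\mathbf{i},\mathbf{j}} \mathbf{X}_{\mathbf{j}} w_\mu$, which matches the corresponding summand of Eq.~\eqref{eqn:equivariant_linear_layer} up to the missing scalar $\|\mathbf{B}^\mu_{\mathbf{i},\cdot}\|_1$.

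The key structural observation that makes the remaining corrections tractable is that both $\|\mathbf{B}^\mu_{\mathbf{i},\cdot}\|_1$ and $\mathbf{C}^\lambda_{\mathbf{i}}$ depend on the multi-index $\mathbf{i}$ only through the order-$k$ equivalence class $\gamma\ni\mathbf{i}$, by definition of the $S_n$-orbit decomposition underlying the basis tensors. Since the token at position $\mathbf{i}$ carries the type identifier $\mathbf{E}^\gamma$, the class label $\gamma$ is linearly readable from the residual stream. I would then invoke universal approximation of MLPs to realize two class-dependent maps simultaneously: multiplying head $\mu$'s contribution by the denormalization constant $\|\mathbf{B}^\mu_{\mathbf{i},\cdot}\|_1$, and adding $\sum_\lambda \mathbf{C}^\lambda_{\mathbf{i}} b_\lambda$ as a bias that depends only on $\mathbf{E}^\gamma$. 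Summing across heads reproduces the full $L_{k\to k}(\mathbf{X})_{\mathbf{i}}$.

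The main obstacle is coupling the two limits: the softmax outputs only approach their targets as $a\to\infty$, while MLP universal approximation holds only on a compact domain. I would resolve this with a standard two-stage argument: fix a compact set of admissible inputs $\mathbf{X}$, choose $a$ large enough to make every $\boldsymbol{\alpha}^\mu$ within $\varepsilon/2$ of its target uniformly on this set, then (working on the resulting compact image of the MSA output) choose an MLP that uniformly $\varepsilon/2$-approximates the composite denormalization-plus-bias map. A triangle inequality then yields an $\varepsilon$-approximation of $L_{k\to k}(\mathbf{X})$ as required, completing the reduction.
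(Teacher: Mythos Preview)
Your overall strategy matches the paper's: invoke Lemma~\ref{lemma:approximation_equivariant_basis} head-by-head, push each $w_\mu$ through the value/output projections, and let the token-wise MLP use the type identifier $\mathbf{E}^\gamma$ to supply the class-dependent denormalization factors and the bias $\sum_\lambda \mathbf{C}^\lambda_{\mathbf{i}} b_\lambda$. Your $\varepsilon/2$ treatment of the two limits is in fact more careful than what the paper spells out.

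There is, however, one genuine gap. You intend the MLP to ``multiply head $\mu$'s contribution by the denormalization constant $\|\mathbf{B}^\mu_{\mathbf{i},\cdot}\|_1$'', but in a standard Transformer layer the MLP sees only the \emph{sum} $\sum_\mu (\boldsymbol{\alpha}^\mu \mathbf{X} w_\mu)_{\mathbf{i}}$ (plus the residual stream), not the individual head outputs. Since the required scalar depends on \emph{both} $\mu$ and the class $\gamma\ni\mathbf{i}$, the correctly reweighted sum $\sum_\mu \|\mathbf{B}^\mu_{\mathbf{i},\cdot}\|_1\,(\boldsymbol{\alpha}^\mu \mathbf{X} w_\mu)_{\mathbf{i}}$ is not a function of the unweighted sum together with $\mathbf{E}^\gamma$: different per-head contributions that happen to have the same total would need different outputs. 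Your choice of $w_h^V w_h^O$ (``apply $w_\mu$ to the remaining $\mathbf{X}$ block'') writes every head into the same $d$ channels, so this information is lost before the MLP.

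The paper's fix is to enlarge the hidden width to $d_\mathcal{T}=(d+kd_p+d_e)+\textnormal{bell}(2k)\,d$ and have each $w_h^O$ route head $h$'s output into its own reserved block of $d$ channels. After the residual connection, the MLP then reads each block separately, multiplies it by the $(\mu,\gamma)$-dependent constant (which it can compute from the type identifier still present in the residual stream), sums across blocks, adds the bias, and cancels the residual $\mathbf{X}_{\mathbf{i}}$. With this channel-separation made explicit, the rest of your argument goes through.
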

While the approximation in Lemma~\ref{lemma:approximation_equivariant_basis} is only accurate up to normalization over inputs (keys) due to softmax normalization, for the approximation in Theorem~\ref{thm:approximation_equivariant_layer} we perform appropriate denormalization using MLP after multihead self-attention and can obtain an accurate approximation.

By extending the result to multiple layers, we arrive at the following:
\begin{thm}\label{thm:approximation_order_k_ign}
For all $\mathbf{X}\in\mathbb{R}^{n^k\times d}$ and their augmentation $\mathbf{X}^{in}$, a Transformer composed of $T$ layers that operates on $\mathbf{X}^{in}w^{in}$ followed by sum-pooling and MLP can approximate an $k$-IGN $F_k(\mathbf{X})$ (Definition~\ref{defn:invariant_graph_network}) to arbitrary precision.
\end{thm}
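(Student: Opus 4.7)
The plan is to build the approximating Transformer by stacking one block per equivariant layer of the target $k$-IGN and handling the terminal invariant contraction via the sum-pooling plus MLP at the end. By Theorem~\ref{thm:approximation_equivariant_layer}, the MSA sub-layer of each Transformer block (with $\textnormal{bell}(2k)$ heads) can approximate a target $L_{k\to k}^{(t)}$ to arbitrary precision on the augmented input. The activation $\sigma$ and the denormalization correction needed after softmax can both be absorbed into the MLP sub-layer that immediately follows MSA, using the standard fact that MLPs are universal approximators of continuous functions on compact sets. Composing $T$ such blocks therefore mimics the intermediate $\sigma \circ L_{k\to k}^{(t)}$ cascade of Definition~\ref{defn:invariant_graph_network}.

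A necessary structural step is to guarantee that the node identifiers $\mathbf{P}$ and type identifiers $\mathbf{E}$ remain accessible to every subsequent layer, because Theorem~\ref{thm:approximation_equivariant_layer} is conditioned on the presence of these augmentations at the input. I would allocate a dedicated slice of the hidden dimension $d_\mathcal{T}$ that stores the identifiers and is kept invariant through the residual stream: the output projections $w_h^O$ of MSA and the MLP sub-layer output zero on those coordinates, so after each residual addition the identifiers persist unchanged while the remaining coordinates carry the evolving approximation of the $k$-IGN activations. In this way every block has access to the same auxiliary information that Lemma~\ref{lemma:approximation_equivariant_basis} requires.

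For the terminal invariant head, I would exploit the fact that by Definition~\ref{defn:equivariant_linear_layer} an invariant linear layer $L_{k\to 0}$ is a finite linear combination of per-equivalence-class sums over multi-indices, and such per-class sums can be produced by a final Transformer block that uses the type identifiers to route each token's contribution into a distinct block of coordinates, followed by the global sum-pooling stated in the theorem. The outer MLP of the theorem then absorbs both the coefficients of $L_{k\to 0}$ and the outer MLP of the $k$-IGN, again by MLP universality.

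The main obstacle is controlling error propagation across the $T$ layers, since each invocation of Theorem~\ref{thm:approximation_equivariant_layer} only provides approximation on a compact input domain. I would restrict to a compact set of inputs (standard in universal approximation results of this kind), observe that the exact $k$-IGN maps compact sets to compact sets by continuity, and note that the constructed Transformer blocks are Lipschitz on bounded inputs. A backward induction then lets me choose the per-layer accuracy small enough so that the accumulated error, which scales with the product of per-layer Lipschitz constants, stays below any prescribed $\varepsilon$. A secondary subtlety is the softmax normalization: since $\boldsymbol{\alpha}^h$ is stochastic but the target bases $\mathbf{B}^\mu$ are binary with varying row sums, the per-basis denormalization factor depends on the token indices and must be supplied by the MLP sub-layer as a function of the (preserved) type identifiers, which is again a continuous function on a compact set and thus within reach of MLP approximation.
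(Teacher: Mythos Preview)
Your proposal is correct and follows essentially the same construction as the paper: reserve channels in the residual stream to carry the node and type identifiers unchanged across layers, use Theorem~\ref{thm:approximation_equivariant_layer} once per block with the activation absorbed into the subsequent MLP, and handle $L_{k\to 0}$ by having the last block's MLP route each token into a type-identifier-indexed block of coordinates so that sum-pooling yields the per-equivalence-class sums that the outer MLP then combines. Your explicit Lipschitz-on-compacta argument for controlling accumulated error is a welcome addition that the paper's proof leaves implicit.
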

This directly leads to the following corollary:
\begin{corollary}\label{corollary:expressiveness_ign}
A Transformer on node and type identifiers in Theorem~\ref{thm:approximation_order_k_ign} is at least as expressive as $k$-IGN composed of order-$k$ equivariant linear layers.
\end{corollary}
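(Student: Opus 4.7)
The plan is to derive the corollary as an almost immediate consequence of Theorem~\ref{thm:approximation_order_k_ign}. The standard formalization of ``at least as expressive as'' for permutation-invariant network classes is inclusion in closure: a class $\mathcal{F}$ is at least as expressive as a class $\mathcal{G}$ if, for every $g \in \mathcal{G}$, every compact input set $K$, and every $\varepsilon > 0$, there exists $f \in \mathcal{F}$ with $\sup_{\mathbf{X} \in K} |f(\mathbf{X}) - g(\mathbf{X})| \le \varepsilon$. Under this definition, the discriminative statement (any pair of hypergraphs that $\mathcal{F}$ fails to distinguish must also be indistinguishable by $\mathcal{G}$) follows by a triangle-inequality argument: if $g$ separates two inputs with margin $\delta$, choosing $\varepsilon < \delta/2$ yields an $f \in \mathcal{F}$ that separates them with margin at least $\delta - 2\varepsilon > 0$.

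First, I would fix an arbitrary $k$-IGN $F_k$ as in Definition~\ref{defn:invariant_graph_network}, a compact $K \subset \mathbb{R}^{n^k \times d_0}$, and a tolerance $\varepsilon > 0$. Theorem~\ref{thm:approximation_order_k_ign} then directly supplies a Transformer, with depth, head count, and input projection $w^{in}$ chosen appropriately, that, when applied to the node/type-identifier augmentation $\mathbf{X}^{in}$ of $\mathbf{X}$ and followed by sum-pooling and an MLP, approximates $F_k(\mathbf{X})$ within $\varepsilon$ uniformly on $K$. This is precisely the inclusion-in-closure condition above, so the corollary follows.

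The only check required beyond invoking Theorem~\ref{thm:approximation_order_k_ign} is that the augmentation-plus-Transformer pipeline defines a legitimate permutation-invariant function of the raw input $\mathbf{X}$, so that it can be compared to $k$-IGN in the same symmetry class. This holds because the node identifier matrix $\mathbf{P}$ can be assigned equivariantly in the node ordering (permuting nodes permutes rows of $\mathbf{P}$), while the type identifier $\mathbf{E}^\gamma$ depends only on the equivalence class $\gamma$ of the multi-index, which is itself permutation-invariant; composing with the permutation-invariant sum-pooling readout yields an invariant scalar output. I do not anticipate any real obstacle, as the corollary is essentially a one-line rephrasing of Theorem~\ref{thm:approximation_order_k_ign} in the language of expressiveness; the only mild care needed is to state approximation on compact subsets (or per fixed $n$), which is the regime in which the softmax-and-MLP approximation of Lemma~\ref{lemma:approximation_equivariant_basis} and Theorem~\ref{thm:approximation_equivariant_layer} is guaranteed.
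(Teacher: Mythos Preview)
Your proposal is correct and matches the paper's approach: the paper treats Corollary~\ref{corollary:expressiveness_ign} as an immediate consequence of Theorem~\ref{thm:approximation_order_k_ign} with no separate proof (``This directly leads to the following corollary''), and you do the same, simply spelling out the inclusion-in-closure formalization of ``at least as expressive'' and adding a permutation-invariance sanity check that the paper leaves implicit.
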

Corollary~\ref{corollary:expressiveness_ign} allows us to draw previous theoretical results on the expressiveness of $k$-IGN~\cite{maron2019provably, maron2019invariant, kim2021transformers} and use them to lower-bound the provable expressiveness of a standard Transformer:
\begin{corollary}\label{corollary:expressiveness_wl_gnn}
A Transformer on node and type identifiers in Theorem~\ref{thm:approximation_order_k_ign} is at least as powerful as $k$-WL graph isomorphism test and is more expressive than all message-passing GNNs within the framework of Gilmer~et~al.~(2017)~\cite{gilmer2017neural}.
\end{corollary}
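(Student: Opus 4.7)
The plan is to obtain this corollary essentially for free by chaining Corollary~\ref{corollary:expressiveness_ign} with known expressiveness results about $k$-IGN from the literature. Since Corollary~\ref{corollary:expressiveness_ign} already establishes that a Transformer with node and type identifiers can realize (up to arbitrary precision) any function computed by a $k$-IGN composed of order-$k$ equivariant linear layers, it suffices to show that the class of such $k$-IGNs already possesses both claimed properties and then transfer them through the approximation.

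First, I would invoke the result of Maron~et~al.~(2019b) showing that $k$-IGNs distinguish any pair of graphs that the $k$-WL test distinguishes; that is, whenever $k$-WL returns different colorings on two graphs, there exists a $k$-IGN producing different outputs on them. Composing this with Corollary~\ref{corollary:expressiveness_ign}, for any such pair I can pick a $k$-IGN separating them and then use Theorem~\ref{thm:approximation_order_k_ign} to construct a Transformer that approximates the $k$-IGN closely enough to still separate the two graphs. This yields the first half of the statement: the Transformer is at least as powerful as $k$-WL in the graph distinguishing sense. I would be careful to phrase expressiveness in the standard separation-based sense used by Maron~et~al., so that ``approximation to arbitrary precision'' suffices to preserve distinguishing power.

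For the second half, I would specialize to $k=2$ and invoke the result of Maron~et~al.~(2019a) and Kim~et~al.~(2021) that 2-IGN strictly subsumes the message-passing GNN framework of Gilmer~et~al.~(2017), since every MPNN update is an instance of an order-2 equivariant linear layer followed by entrywise nonlinearity, while 2-IGN additionally contains basis tensors (e.g.\ global pooling, transpose, diagonal extraction) not expressible within message passing. Again using Corollary~\ref{corollary:expressiveness_ign} with $k=2$, the approximation of any 2-IGN by a TokenGT transfers this strict containment to TokenGT, establishing that it is strictly more expressive than all MPNNs.

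Since both arguments are transitive applications of known results via Corollary~\ref{corollary:expressiveness_ign}, there is no real technical obstacle; the only subtle point is verifying that the arbitrary-precision approximation in Theorem~\ref{thm:approximation_order_k_ign} is sufficient to preserve graph-distinguishing power (as opposed to merely function approximation on a bounded domain). I would address this by noting that for any finite set of graphs on which a $k$-IGN produces distinct outputs, the minimum gap between outputs is positive, so choosing the approximation precision below half of this gap guarantees that the approximating Transformer also separates them. This completes the proof.
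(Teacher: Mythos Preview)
Your proposal is correct and matches the paper's approach: the paper treats Corollary~\ref{corollary:expressiveness_wl_gnn} as an immediate consequence of Corollary~\ref{corollary:expressiveness_ign} combined with the cited expressiveness results of $k$-IGN from \cite{maron2019provably, maron2019invariant, kim2021transformers}, without giving a separate detailed proof. Your write-up is in fact more careful than the paper's, since you explicitly address the gap argument needed to transfer graph-distinguishing power through arbitrary-precision approximation.
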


\section{Related Work}\label{sec:related_work}
\vspace{-0.1cm}
We outline relevant work including equivariant neural networks, theory on expressive power of Transformers and their connection to modeling equivariance, and Transformers for graphs.

\vspace{-0.1cm}
\paragraph{Equivariant Neural Networks}
A machine learning task is often invariant or equivariant to specific symmetry of input data, \emph{e.g.}, image classification is invariant to the translation of an input image.
A large body of literature advocated baking the invariance or equivariance into a neural network as a type of inductive bias (\emph{e.g.}, translation equivariance of image convolution), showing that it reduces the number of parameters and improves generalization for a wide range of learning tasks involving various geometric structures~\cite{cohen2016group, cohen2017steerable, weiler20183d, thomas2018tensor, maron2020onlearning, pan2022permutation, serviansky2020set, bronstein2017geometric, kim2021transformers, lee2019set}.
Ravanbakhsh~et~al.~(2017)~\cite{ravanbakhsh2017equivariance} showed that any equivariant layer for discrete group actions is equivalent to a specific parameter sharing structure.
Zaheer~et~al.~(2017)~\cite{zaheer2017deep} and Maron~et~al.~(2019)~\cite{maron2019invariant} derived the parameter sharing for node permutation-symmetric data (sets and (hyper)graphs), which gives the maximally expressive equivariant linear layers and $k$-IGN in Section~\ref{sec:preliminary}.
The work on equivariant neural networks underlie our theory of how a standard Transformer can be a powerful learner for sets and (hyper)graphs.
%Recently, Wang~et~al.~(2022)~\cite{wang2022approximately} explored flexible equivariant networks with relaxed (\emph{soft}) equivariance constraints, and Yeh~et~al.~(2022)~\cite{yeh2022equivariance} studied discovering equivariance from data by optimizing parameter sharing structure.

\vspace{-0.1cm}
\paragraph{Expressive Power of Transformers and Its Connection to Equivariance}
%In contrast to the equivariant neural networks, 
Recent work involving Transformers often focus on minimizing the domain- and task-specific inductive bias and scaling the model and data so that any useful computation structure can be learned~\cite{dosovitskiy2021animage, andrew2021perceiver, andrew2021perceiverio, brown2020language, devlin2019bert, chen2021decision, lee2019set}.
The success of this approach is, to some degree, attributed to the high expressive power of Transformers that allows learning diverse functions suited for the data at hand~\cite{yun2020are, lee2019set, bhattamishra2020on, bhojanapalli2020low, likhosherstov2021on}.
Recent theory has shown that Transformers are expressive enough to even model certain equivariant functions~\cite{andreoli2019convolution, cordonnier2020on, lee2019set}.
Andreoli~et~al.~(2019)~\cite{andreoli2019convolution} cast self-attention and convolution into a unified framework using basis tensors similar to ones in Section~\ref{sec:preliminary}.
Cordonnier~et~al.~(2020)~\cite{cordonnier2020on} advanced the idea and showed that Transformers with relative positional encodings can approximate any image convolution layers.
%if the number of heads can grow quadratically to kernel width.
Lee~et~al.~(2019)~\cite{lee2019set} and Kim~et~al.~(2021)~\cite{kim2021transformers} showed that Transformers can model equivariant linear layers for sets~\cite{zaheer2017deep}, which can be viewed as the first-order case of our theory (see Section~\ref{sec:intuition}).
To our knowledge, our work is the first to show that standard Transformers are expressive enough to provably model maximally expressive equivariant layers and $k$-IGN for (hyper)graphs with $k\geq 2$.
%Furthermore, our approximation uses a fixed number of heads (15 for graphs) and does not require relative positional encodings.

\vspace{-0.1cm}
\paragraph{Transformers for Graphs}
Unlike in language and vision, developing Transformers for graphs is challenging due to \textbf{(1)} the presence of edge connectivity and \textbf{(2)} the absence of canonical node ordering that prevents adopting simple positional encodings~\cite{min2022transformer}.
To incorporate the connectivity of edges, early methods restricted self-attention to local neighborhoods (thus reducing to message-passing)~\cite{dwivedi2020a, nguyen2022universal, velikovic2018graph} or used global self-attention with auxiliary message-passing modules~\cite{rong2020self, lin2021mesh}.
As message-passing suffers from limited expressive power~\cite{xu2019how} and oversmoothing~\cite{li2018deeper, cai2020a, oono2020graph}, recent works often discard them and use global self-attention on nodes with heuristic modifications to process edges~\cite{ying2021do, hussain2021edge, park2022grpe, kreuzer2021rethinking, lim2022sign}.
Ying~et~al.~(2021)~\cite{ying2021do} proposed to inject edge encoding based on shortest paths through self-attention bias.
Kreuzer~et~al.~(2021)~\cite{kreuzer2021rethinking} proposed to incorporate edges into self-attention matrix via elementwise multiplication.
On the contrary, we leave the self-attention unmodified and provide both nodes and edges with certain token-wise embeddings (Section~\ref{sec:methods}) as its input.
%To our knowledge, our approach gives the first unmodified standard Transformer that works well for graph learning.
To incorporate graph structure into nodes, on the other hand, some approaches focus on developing graph positional encoding, \emph{e.g.}, based on Laplacian eigenvectors~\cite{dwivedi2020benchmarking, lim2022sign, kreuzer2021rethinking}.
While these can be directly incorporated into our work via auxiliary node identifiers for better performance, we leave this as future work.
We further note that current graph Transformers that utilize Laplacian positional encoding rely heavily on heuristic edge encoding~\cite{hussain2021edge, kreuzer2021rethinking} while ours does not.
Another closely related approach is the Higher-order Transformer~\cite{kim2021transformers} which generalizes $k$-IGN with masked self-attention.
While it is highly complex to implement due to hard-coded head-wise equivariant masks, our method can be implemented effortlessly using any available implementation of standard Transformer.
Furthermore, our method is more flexible as the model can choose to use different attention heads to focus on a specific equivariant operator (\emph{e.g.}, local propagation) if needed.
We further discuss the difficulty in applying linear attention to graph Transformers in Appendix~\ref{sec:apdx_extended_related_work}.

\cutparagraphup
\section{Experiments}\label{sec:experiments}
\vspace{-0.1cm}

We first conduct a synthetic experiment that directly confirms our key claims in Lemma~\ref{lemma:approximation_equivariant_basis}~(Section~\ref{sec:theory}).
Then, we empirically explore the capability of Tokenized Graph Transformer~(TokenGT)~(Section~\ref{sec:methods}) using the PCQM4Mv2 large-scale quantum chemistry regression dataset~\cite{hu2021ogb}.
We further present experiments on transductive node classification datasets involving large graphs in Appendix~\ref{sec:apdx_additional_results_transductive_node_classification}.

\vspace{-0.1cm}
\subsection{Approximating Second-Order Equivariant Basis}\label{sec:experiment_synthetic}
\begin{table}[!t]
\caption{
Second-order equivariant basis approximation.
We report average and standard deviation of L2 error averaged over heads over 3 runs.
For Random/ORF~(first-order), we sample random embeddings independently for each token.
}
\centering
\begin{adjustbox}{width=0.9\textwidth}
    \begin{tabular}{lc|cc|cc}\label{table:equivariant_basis_approximation_std}
        \\\Xhline{2\arrayrulewidth}\\[-1em]
         & & \multicolumn{2}{c|}{dense input} & \multicolumn{2}{c}{sparse input} \\
        node id. & type id. & train L2 $\downarrow$ & test L2 $\downarrow$ & train L2 $\downarrow$ & test L2 $\downarrow$ \\
        \Xhline{2\arrayrulewidth}\\[-1em]
        $\times$ & $\times$ & 47.95 $\pm$ 0.600 & 53.93 $\pm$ 1.426 & 29.88 $\pm$ 0.450 & 34.70 $\pm$ 1.167 \\
        \Xhline{2\arrayrulewidth}\\[-1em]
        $\times$ & $\bigcirc$ & 32.38 $\pm$ 0.448 & 40.06 $\pm$ 1.202 & 15.92 $\pm$ 0.275 & 20.39 $\pm$ 0.765 \\
        Random (first-order) & $\bigcirc$ & 32.19 $\pm$ 0.476 & 32.49 $\pm$ 3.687 & 15.87 $\pm$ 0.247 & 16.56 $\pm$ 0.904 \\
        ORF (first-order) & $\bigcirc$ & 32.35 $\pm$ 0.369  & 39.87 $\pm$ 1.263 & 15.87 $\pm$ 0.247 & 16.56 $\pm$ 0.908  \\
        \Xhline{2\arrayrulewidth}\\[-1em]
        Random & $\times$ & 5.909 $\pm$ 0.019  & 5.548 $\pm$ 0.090 & 8.152 $\pm$ 0.042 & 8.270 $\pm$ 0.285 \\
        ORF & $\times$ & 5.472 $\pm$ 0.035 & 5.143 $\pm$ 0.078 & 7.167 $\pm$ 0.025 & 7.190 $\pm$ 0.217 \\
        Laplacian eigenvector & $\times$ & 1.899 $\pm$ 3.050 & 1.702 $\pm$ 2.912 & 0.288 $\pm$ 0.019 & 0.064 $\pm$ 0.010 \\
        \Xhline{2\arrayrulewidth}\\[-1em]
        Random & $\bigcirc$ & 0.375 $\pm$ 0.009 & 0.234 $\pm$ 0.011 & 0.990 $\pm$ 0.108 & 0.875 $\pm$ 0.042 \\
        ORF & $\bigcirc$ & 0.080 $\pm$ 0.001 & 0.009 $\pm$ 5e-5 & 0.129 $\pm$ 0.002 & \textbf{0.011 $\pm$ 0.002} \\
        Laplacian eigenvector & $\bigcirc$ & \textbf{0.053 $\pm$ 1.5e-5} & \textbf{0.005 $\pm$ 1e-4} & \textbf{0.101 $\pm$ 0.003} & 0.019 $\pm$ 0.007 \\
        \Xhline{2\arrayrulewidth}
    \end{tabular}
\end{adjustbox}
%\vspace{-0.1cm}
\end{table}
\begin{figure}[!t]
    \vspace{-0.2cm}
    \centering
    \includegraphics[width=0.99\textwidth]{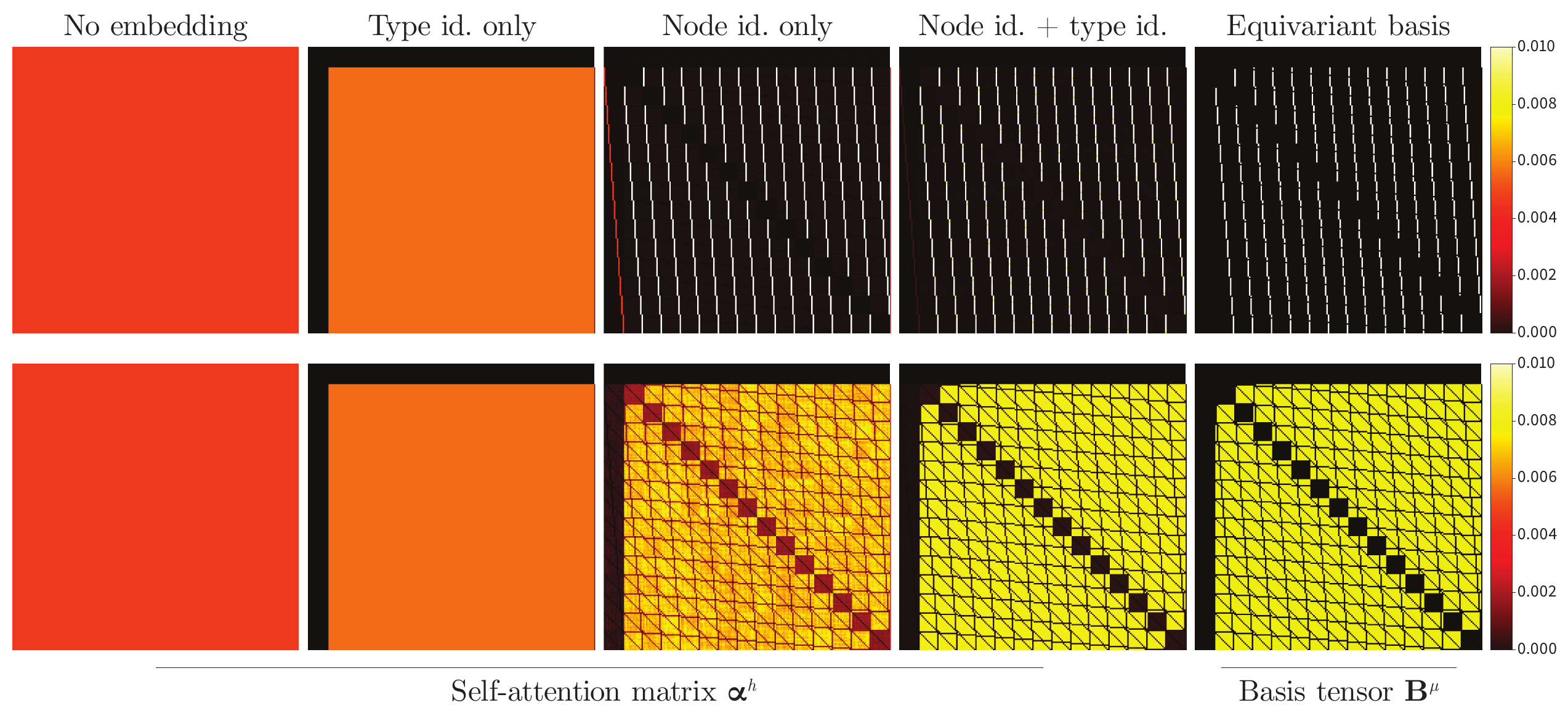}
    \vspace{-0.2cm}
    \caption{
    Self-attention maps learned under various node and type identifier configurations for two target equivariant basis tensors (out of 15).
    For better visualization, we clamp the entries by 0.01.
    Self-attention learns acute patterns coherent to equivariant basis when orthonormal node identifiers and type identifiers are both provided as input.
    More images can be found in Appendix~\ref{sec:apdx_additional_results_synthetic}.
    }
    \label{fig:self_attention_maps}
    \vspace{-0.5cm}
\end{figure}

As in Theorem~\ref{thm:approximation_equivariant_layer} and \ref{thm:approximation_order_k_ign}~(Section~\ref{sec:theory}), our argument on the expressive power of TokenGT relies on its capability to approximate order-$k$ permutation equivariant linear layers $L_{k\to k}$ (Definition~\ref{defn:equivariant_linear_layer}).
Specifically, Lemma~\ref{lemma:approximation_equivariant_basis} states that such capability depends on the ability of each self-attention head $\boldsymbol{\alpha}^1, ..., \boldsymbol{\alpha}^H$~(Eq.~\eqref{eqn:transformer_multihead_self_attention_recap}) to accurately approximate each equivariant basis $\mathbf{B}^{\mu_1}, ..., \mathbf{B}^{\mu_{\textnormal{bell}(2k)}}$~(Definition~\ref{defn:equivariant_linear_layer}) up to normalization.

We verify this claim for $k=2$ (second-order; graphs) in a synthetic setup using Barab\'asi-Albert random graphs.
We use a multihead self-attention layer (Eq.~\eqref{eqn:transformer_multihead_self_attention_recap}) with $\textnormal{bell}(2+2)=15$ heads and explicitly supervise head-wise attention scores $\boldsymbol{\alpha}^h$ to approximate each (normalized) equivariant basis tensor $\mathbf{B}^{\mu_h}$ by minimizing L2 loss.
Having the layer hyperparameters fixed, we provide different combinations of node and type identifiers, and test if multihead self-attention can jointly approximate \emph{all} 15 equivariant basis on unseen graphs.
We experiment with both dense and sparse graph representations; for graphs with $n$ nodes and $m$ edges, the dense graph considers all $n^2$ pair-wise edges as input as in Section~\ref{sec:theory}, whereas the sparse graph considers only the present $m$ edges as in Section~\ref{sec:methods}.
Further details can be found in Appendix~\ref{sec:apdx_experiment_details_synthetic}.

We outline the results in Table~\ref{table:equivariant_basis_approximation_std}.
Consistent with Lemma~\ref{lemma:approximation_equivariant_basis}, self-attention achieves accurate approximation of equivariant basis only when both the orthonormal node identifiers and type identifiers are given.
Here, Laplacian eigenvectors (Lap,~$\bigcirc$) often yield slightly better results than orthogonal random features (ORF,~$\bigcirc$) presumably due to less stochasticity.
Interestingly, we see that self-attention transfers the learned (pseudo-)equivariant self-attention structure to unseen graphs near perfectly.
Non-orthogonal random embeddings lead to inaccurate approximation (Random,~$\bigcirc$), highlighting the importance of orthogonality of node identifiers.
The approximation is also inaccurate when we sample ORF $\mathbf{P}_t$ independently for each token $t$ (ORF~(first-order),~$\bigcirc$) instead of using concatenated node identifiers $[\mathbf{P}_u, \mathbf{P}_v]$ for token $(u, v)$.
This supports our argument in Section~\ref{sec:methods} that the incidence information implicitly provided via node identifiers plays a key role in approximation.

In Figure~\ref{fig:self_attention_maps}, we provide a visualization of self-attention maps learned under various node and type identifier choices.
Additional results can be found in Appendix~\ref{sec:apdx_additional_results_synthetic}.

\vspace{-0.15cm}
\subsection{Large-Scale Graph Learning}\label{sec:experiment_pcqm4mv2}
\vspace{-0.15cm}
\begin{table}[!t]
\vspace{-0.2cm}
\caption{
Results on PCQM4Mv2 large-scale graph regression benchmark.
We report the Mean Absolute Error (MAE) on the validation set, and report MAE on the unavailable test set if possible.
}
\vspace{-0.2cm}
\centering
\begin{adjustbox}{width=0.75\textwidth}
\begin{tabular}{lcccl}\label{table:pcqm4mv2}
    \\\Xhline{2\arrayrulewidth}\\[-1em]
    method & \# parameters & valid MAE $\downarrow$ & test-dev MAE $\downarrow$ & asymptotics \\
    \Xhline{2\arrayrulewidth}\\[-1em]
    \multicolumn{4}{l}{\emph{Message-passing GNNs}} \\
    \hline\\[-1em]
    GCN~\cite{hu2021ogb} & 2.0M & 0.1379 & 0.1398 & $\mathcal{O}(n + m)$ \\
    GIN~\cite{hu2021ogb} & 3.8M & 0.1195 & 0.1218 & $\mathcal{O}(n + m)$ \\
    GAT & 6.7M & 0.1302 & N/A & $\mathcal{O}(n + m)$ \\
    GCN-VN~\cite{hu2021ogb} & 4.9M & 0.1153 & 0.1152 & $\mathcal{O}(n + m)$ \\
    GIN-VN~\cite{hu2021ogb} & 6.7M & 0.1083 & 0.1084 & $\mathcal{O}(n + m)$ \\
    GAT-VN & 6.7M & 0.1192 & N/A & $\mathcal{O}(n + m)$ \\
    GAT-VN (large) & 55.2M & 0.1361 & N/A & $\mathcal{O}(n + m)$ \\
    \Xhline{2\arrayrulewidth}\\[-1em]
    \multicolumn{4}{l}{\emph{Transformers with strong graph-specific modifications}} \\
    \hline\\[-1em]
    Graphormer~\cite{shi2022benchmarking} & 48.3M & 0.0864 & N/A & $\mathcal{O}(n^2)$ \\
    EGT~\cite{hussain2021edge} & 89.3M & 0.0869 & 0.0872 & $\mathcal{O}(n^2)$ \\
    GRPE~\cite{park2022grpe} & 46.2M & 0.0890 & 0.0898 & $\mathcal{O}(n^2)$ \\
    \Xhline{2\arrayrulewidth}\\[-1em]
    \multicolumn{4}{l}{\emph{Pure Transformers}} \\
    \hline\\[-1em]
    Transformer & 48.5M & 0.2340 & N/A & $\mathcal{O}((n+m)^2)$ \\
    TokenGT~(ORF) & 48.6M & 0.0962 & N/A & $\mathcal{O}((n+m)^2)$ \\
    TokenGT~(Lap) & 48.5M & 0.0910 & 0.0919 & $\mathcal{O}((n+m)^2)$ \\
    TokenGT~(Lap) + Performer & 48.5M & 0.0935 & N/A & $\mathcal{O}(n+m)$ \\
    \Xhline{2\arrayrulewidth}
\end{tabular}
\end{adjustbox}
\vspace{-0.35cm}
\end{table}
\begin{figure}[!t]
    \centering
    \includegraphics[width=0.73\textwidth]{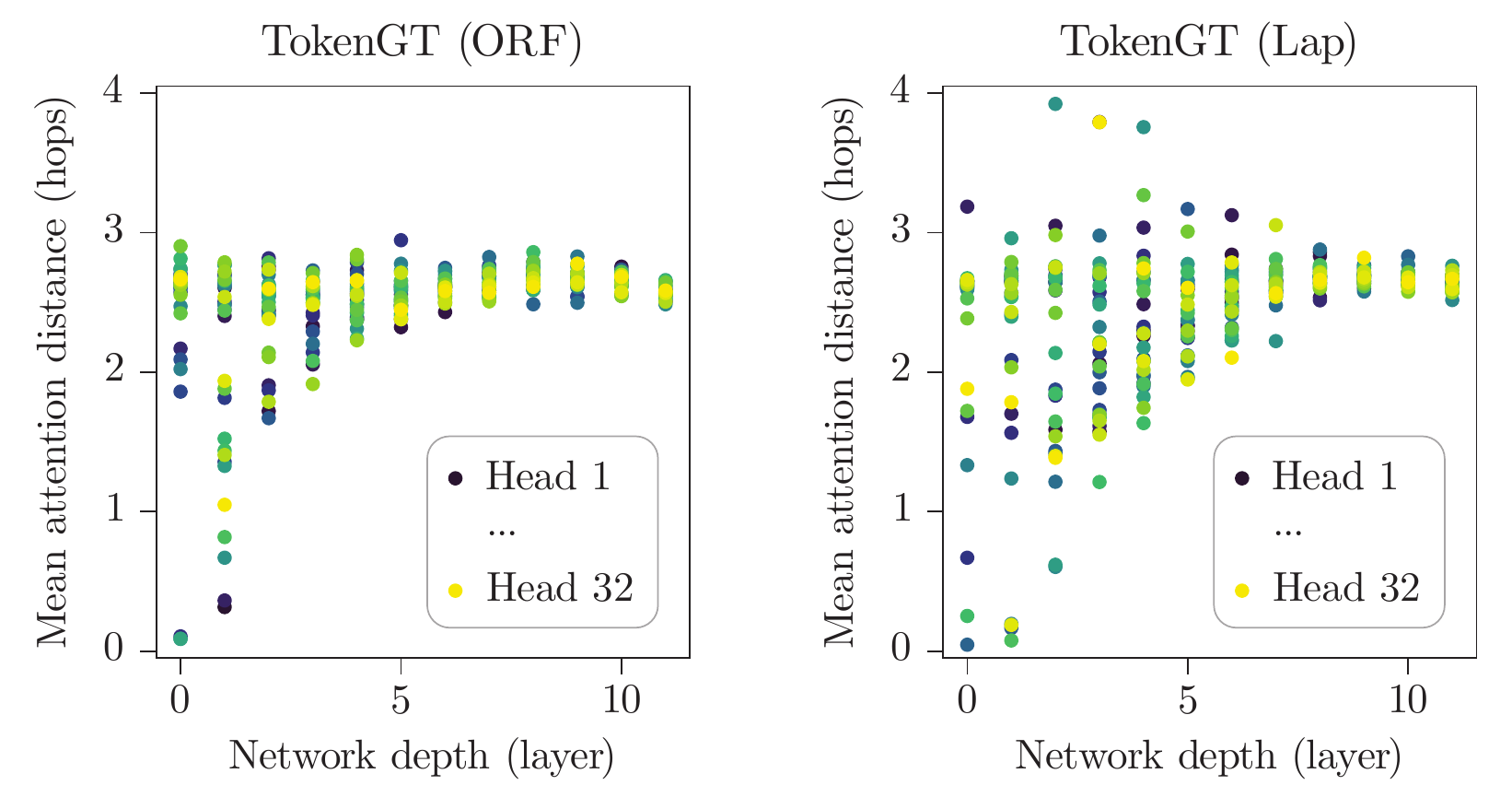}
    \vspace{-0.15cm}
    \caption{
    Attention distance by head and network depth.
    Each dot shows mean attention distance in hops across graphs of a head at a layer.
    The visualization is inspired by Dosovitskiy~et~al.~(2020)~\cite{dosovitskiy2021animage}.
    More images can be found in Appendix~\ref{sec:apdx_additional_results_pcqm4mv2}.
    }
    \label{fig:self_attention_distance}
    \vspace{-0.5cm}
\end{figure}

An exclusive characteristic of TokenGT is its minimal graph-specific inductive bias, which requires it to learn internal computation structure largely from data.
As such models are commonly known to work well with large-scale data~\cite{vaswani2017attention, dosovitskiy2021animage}, we explore the capability of TokenGT on the PCQM4Mv2 quantum chemistry regression dataset~\cite{hu2021ogb}, one of the current largest with 3.7M molecular graphs.

For TokenGT, we use both node and type identifiers, and use main Transformer encoder configuration based on Graphormer~\cite{ying2021do} with 12 layers, 768 hidden dimension, and 32 attention heads.
We try both ORF and Laplacian eigenvector as node identifiers, and denote corresponding models as \textbf{TokenGT~(ORF)} and \textbf{TokenGT~(Lap)} respectively.
As an ablation, we also experiment with the same Transformer without node and type identifiers, which we denote as \textbf{Transformer}.
Finally, we apply the kernel attention~\cite{choromanski2021rethinking} that approximates the attention computation to linear cost~(\textbf{TokenGT~(Lap)~+~Performer}).
We use AdamW optimizer with $(\beta_1, \beta_2)=(0.99, 0.999)$ and weight decay 0.1, and 60k learning rate warmup steps followed by linear decay over 1M iteration with batch size 1024.
For fine-tuning, we use 1k warmup, 0.1M training steps, and cosine learning rate decay.
We train the models on 8 RTX 3090 GPUs for 3 days. %(fine-tuning for 12 hours).
Further details are in Appendix~\ref{sec:apdx_experiment_details_pcqm4mv2}.

We provide the results in Table~\ref{table:pcqm4mv2}.
A standard Transformer on the node and edge tokens cannot recognize graph structure and shows low performance (0.2340 valid MAE).
Yet, the picture changes as soon as we augment the tokens with node and type identifiers.
Notably, TokenGT~(ORF) achieves 0.0962~MAE, which is already better than all GNN baselines.
This is a somewhat surprising result, as both ORF and the Transformer are not aware of graph structures.
This implies Transformer is strong enough to learn to interpret and reason over the incidence structure of tokens provided only implicitly by the node and type identifiers.
By further switching to Laplacian eigenvectors that encode position on graphs~\cite{dwivedi2020benchmarking}, we observe a performance boost to 0.0910~MAE, competitive to Transformers with sophisticated graph-specific modifications (\emph{e.g.}, shortest path-based spatial encoding~\cite{ying2021do}).
While such methods inject graph structure into attention matrix via bias term and therefore strictly require $\mathcal{O}(n^2)$ cost, TokenGT enables adopting kernelization for pure self-attention~\cite{choromanski2021rethinking}, resulting in TokenGT~(Lap)~+~Performer with the best performance among $\mathcal{O}(n+m)$ models (0.0935~MAE).
Further discussion on the empirical performance of TokenGT can be found in Appendix~\ref{sec:apdx_extended_discussion_pcqm4mv2}.

While our theory in Section~\ref{sec:theory} \emph{guarantees} that TokenGT can reduce to an equivariant layer by learning fixed equivariant basis at each attention head, in practice, it can freely utilize multihead self-attention to learn less restricted and more useful computation structure from data.
To analyze such a structure, we compute the attention distance across heads and network depth by averaging pairwise token distances on a graph weighted by their attention scores~(Figure~\ref{fig:self_attention_distance}).
This distance is analogous to the number of hops in message-passing.
In both TokenGT~(ORF) and TokenGT~(Lap), in the lowest layers, some heads attend globally over the graph while others consistently have small receptive fields (acting like a local message-passing operator).
In deeper layers, the attention distances increase, and most heads attend globally.
Interestingly, this behavior is highly consistent with Vision Transformers on image patches~\cite{dosovitskiy2021animage}, suggesting that hybrid architectures based on convolution to aid ViT~\cite{dai2021coatnet, yuan2021incorporating} might also work well for graphs.
While TokenGT~(ORF) shows relatively consistent attention distance over heads, TokenGT~(Lap) shows higher variance, implying that it learns more diverse attention patterns.
Judging from the higher performance of TokenGT~(Lap), this suggests that the graph structure information of the Laplacian eigenvector facilitates learning useful and diverse attention structures, which calls for future exploration of better node identifiers based on graph PEs~\cite{kreuzer2021rethinking, lim2022sign}.

\vspace{-0.2cm}
\section{Conclusion}\label{sec:conclusion}
\vspace{-0.2cm}
We showed that Transformers directly applied to graphs can work well in both theory and practice.
In the theoretical aspect, we proved that with appropriate token-wise embeddings, a Transformer on node and edge tokens is at least as expressive as $k$-IGN and $k$-WL test, making it more expressive than all message-passing GNNs.
For such token-wise embeddings, we showed that a combination of simple orthonormal node identifiers and trainable type identifiers suffices, which we also verified with a synthetic experiment.
In an experiment with PCQM4Mv2 large-scale dataset, we show that Tokenized Graph Transformer (TokenGT) performs significantly better than all GNNs and is competitive with Transformer variants with strong graph-specific architectural components~\cite{ying2021do, hussain2021edge, park2022grpe}.

While the results suggest a promising research direction, there are challenges to be addressed in future work.
First, treating each node and edge as tokens requires $\mathcal{O}((n+m)^2)$ asymptotic cost due to the quadratic nature of self-attention.
While we address this to some degree with kernelization and achieve $\mathcal{O}(n+m)$ cost, other types of efficient Transformers (\emph{e.g.}, sparse) that can deliver better performance are left to be tested.
Another issue is slightly lower performance compared to the state-of-the-art.
Adopting Transformer engineering techniques from vision and language domains, such as data scaling~\cite{brown2020language, dosovitskiy2021animage}, deepening~\cite{wang2022deepnet, wies2021which}, hybrid architectures~\cite{dai2021coatnet, yuan2021incorporating}, and self-supervision~\cite{devlin2019bert, brown2020language, he2021masked}, are promising.
In the societal aspect, to prevent the potential risky behavior in, \emph{e.g.}, decision making from graph-structured inputs, interpretability research regarding self-attention on graphs is desired.

We finish with interesting research directions that stem from our work.
As our approach advocates viewing a graph as $(n+m)$ tokens~\cite{zico2019champs}, it opens up new paradigms of graph learning, including autoregressive decoding, in-context learning, prompting, and multimodal learning.
Another interesting direction is to extend our theory and use self-attention to approximate equivariant basis for general discrete group actions, which might be a viable approach for \emph{learning equivariance from data}.

\cutparagraphup
\paragraph{Acknowledgement}
This work was supported in part by Institute of Information \& communications
Technology Planning \& Evaluation (IITP) (No. 2022-0-00926, 2022-0-00959, 2021-0-02068, and 2019-0-00075) and the National Research Foundation of Korea (NRF) (No. 2021R1C1C1012540) grants funded by the Korea government (MSIT).

{\small
\bibliography{main}
}

% arXiv
\newpage
\appendix
\section{Appendix}\label{sec:appendix}
\subsection{Proofs}\label{sec:apdx_proofs}
\subsubsection{Extended Preliminary (Cont. from Section~\ref{sec:preliminary})}\label{sec:apdx_proofs_preliminary}
Before proceeding to the proofs, we first provide additional preliminary material that supplements Section~\ref{sec:preliminary}.
We begin by formally defining multihead self-attention and Transformer.
Our definition is equivalent to Vaswani~et~al.~(2017)~\cite{vaswani2017attention}, except we omit layer normalization for simplicity as in \cite{yun2020are, hanin2017approximating, kim2021transformers}.
Specifically, a multihead self-attention layer $\textnormal{MSA}:\mathbb{R}^{n\times d}\to\mathbb{R}^{n\times d}$ is defined as:
\begin{align}
    \boldsymbol{\alpha}^h &= \textnormal{softmax}\left(\mathbf{X}w_h^Q(\mathbf{X}w_h^K)^\top/\sqrt{d_H}\right),\label{eqn:apdx_transformer_attention_coefficient}\\
    \textnormal{MSA}(\mathbf{X})_i &= \sum_{h=1}^H{\sum_{j=1}^n{\boldsymbol{\alpha}^h_{i j}\mathbf{X}_j w_h^V w_h^O}},\label{eqn:apdx_transformer_multihead_self_attention}
\end{align}
where $H$ is number of heads, $d_H$ is head size, and $w_h^Q,w_h^K\in\mathbb{R}^{d\times d_H}$, $w_h^V\in\mathbb{R}^{d\times d_v}$ $w_h^O\in\mathbb{R}^{d_v\times d}$.
In our proofs, we use biases for query and key projections as in \cite{yun2020are} but omit them here for brevity.
With multihead self-attention, a Transformer layer $\mathcal{T}:\mathbb{R}^{n\times d }\to\mathbb{R}^{n\times d}$ is defined as:
\begin{align}
    \mathbf{H} &= \mathbf{X} + \textnormal{MSA}(\mathbf{X}),\label{eqn:transformer_residual_msa}\\
    \mathcal{T}(\mathbf{X}) &= \mathbf{H} +
    \textnormal{MLP}(\mathbf{H}), \label{eqn:transformer_residual_mlp}
\end{align}
where $\textnormal{MSA}:\mathbb{R}^{n\times d}\to\mathbb{R}^{n\times d}$ is a multihead self-attention layer with $H$ heads of size $d_H$ and $\textnormal{MLP}:\mathbb{R}^{n\times d }\to\mathbb{R}^{n\times d}$ is a tokenwise MLP with hidden dimension $d_F$.

We now provide the complete definition of invariant graph networks (IGNs)~\cite{maron2019invariant, maron2019provably} and maximally expressive equivariant linear layers~\cite{maron2019invariant} summarized in Section~\ref{sec:preliminary}.
We first recall Definition~\ref{defn:invariant_graph_network} and~\ref{defn:equivariant_linear_layer}:
\begingroup
\def\thedefn{\ref{defn:invariant_graph_network}}
\begin{defn}
An order-$k$ Invariant Graph Network ($k$-IGN) is a function $F_k:\mathbb{R}^{n^k\times d_0}\to\mathbb{R}$ written as the following:
\begin{align}
    F_k = \textnormal{MLP}\circ L_{k\to 0}\circ L_{k\to k}^{(T)}\circ\sigma\circ...\circ\sigma\circ L_{k\to k}^{(1)},\tag{\ref{eqn:invariant_graph_network}}
\end{align}
where each $L_{k\to k}^{(t)}$ is equivariant linear layer~\cite{maron2019invariant} from $\mathbb{R}^{n^k\times d_{t-1}}$ to $\mathbb{R}^{n^k\times d_t}$, $\sigma$ is activation function, and $L_{k\to 0}$ is a invariant linear layer from $\mathbb{R}^{n^k\times d_T}$ to $\mathbb{R}$.
\end{defn}
\addtocounter{defn}{-1}
\endgroup
\begingroup
\def\thedefn{\ref{defn:equivariant_linear_layer}}
\begin{defn}
An equivariant linear layer is a function $L_{k\rightarrow l}:\mathbb{R}^{n^k\times d}\to\mathbb{R}^{n^l\times d'}$ written as follows for order-$k$ input $\mathbf{X}\in\mathbb{R}^{n^k\times d}$:
\begin{align}
    L_{k\rightarrow l}(\mathbf{X})_{\mathbf{i}} = \sum_{\mu}{\sum_{\mathbf{j}}{\mathbf{B}^{\mu}_{\mathbf{i}, \mathbf{j}}\mathbf{X}_{\mathbf{j}}w_{\mu}}} + \sum_{\lambda}{\mathbf{C}^{\lambda}_{\mathbf{i}}b_{\lambda}},\tag{\ref{eqn:equivariant_linear_layer}}
\end{align}
where $\mathbf{i}\in[n]^l,\mathbf{j}\in[n]^k$ are multi-indices, $w_\mu\in\mathbb{R}^{d\times d'}$, $b_\lambda\in\mathbb{R}^{d'}$ are weight and bias parameters, and $\mathbf{B}^\mu\in\mathbb{R}^{n^{l+k}}$ and $\mathbf{C}^\lambda\in\mathbb{R}^{n^l}$ are binary basis tensors corresponding to order-$(l+k)$ and order-$l$ equivalence classes $\mu$ and $\lambda$, respectively.
Invariant linear layer is a special case of $L_{k\rightarrow l}$ with $l=0$.
\end{defn}
\addtocounter{defn}{-1}
\endgroup

We now define \emph{equivalence classes} and \emph{basis tensors} mentioned briefly in Definition~\ref{defn:equivariant_linear_layer}.
The equivalence classes are defined upon a specific \emph{equivalence relation} $\sim$ on the index space of higher-order tensors as follows:
\begin{defn}\label{defn:equivalence_relation}
An order-$l$ equivalence class $\gamma\in[n]^l/_\sim$ is an equivalence class of $[n]^l$ under the equivalence relation $\sim$, where the equivalence relation $\sim$ on multi-index space $[n]^l$ relates $\mathbf{i}\sim\mathbf{j}$ if and only if $(i_1, ..., i_l) = (\pi(j_1), ..., \pi(j_l))$ for some node permutation $\pi\in S_n$.
\end{defn}
We note that a multi-index $\mathbf{i}$ has the same permutation-invariant \emph{equality pattern} to any $\mathbf{j}$ that satisfies $\mathbf{i}\sim\mathbf{j}$, \emph{i.e.}, $\mathbf{i}_{a} = \mathbf{i}_{b}\Leftrightarrow\mathbf{j}_{a} = \mathbf{j}_{b}$ for all $a, b\in [k]$.
Consequently, each equivalence class $\gamma$ in Definition~\ref{defn:equivalence_relation} is a distinct set of all order-$l$ multi-indices having a specific equality pattern.

Now, for each equivalence class, we define the corresponding \emph{basis tensor} as follows:
\begin{defn}\label{defn:basis_tensor}
An order-$l$ basis tensor $\mathbf{B}^\gamma\in\mathbb{R}^{n^l}$ corresponding to an order-$l$ equivalence class $\gamma$ is a binary tensor defined as follows:
\begin{align}\label{eqn:basis_tensor}
    \begin{array}{ll}
        \mathbf{B}_{\mathbf{i}}^{\gamma} = \left\{
        \begin{array}{cc}
            1   &  \text{\footnotesize{$\mathbf{i}\in\gamma$}} \\
            0   &  \text{\footnotesize{otherwise}}
        \end{array}\right.
    \end{array}
\end{align}
\end{defn}

For a given $l$, it is known that there exist $\textnormal{bell}(l)$ order-$l$ equivalence classes $\{\gamma_1, ..., \gamma_{\textnormal{bell}(l)}\}=[n]^l/_\sim$ regardless of $n$~\cite{maron2019invariant}.
This gives $\textnormal{bell}(l)$ order-$l$ basis tensors $\mathbf{B}^{\gamma_1}, ..., \mathbf{B}^{\gamma_{\textnormal{bell}(l)}}$ accordingly.
Thus, an equivariant linear layer $L_{k\to l}$ in Definition~\ref{defn:equivariant_linear_layer} has $\textnormal{bell}(l+k)$ weights and $\textnormal{bell}(l)$ biases.

Let us consider the first-order equivariant layer $L_{1\rightarrow 1}$ as an example.
We have $\textnormal{bell}(2)=2$ second-order equivalence classes $\gamma_1$ and $\gamma_2$ for the weight, with $\gamma_1$ the set of all $(i_1, i_2)$ with $i_1=i_2$ and $\gamma_2$ the set of all $(i_1, i_2)$ with $i_1\neq i_2$.
From Definition~\ref{defn:basis_tensor}, their corresponding basis tensors are $\mathbf{B}^{\gamma_1}=\mathbf{I}$ and $\mathbf{B}^{\gamma_2}=\mathbf{11}^\top - \mathbf{I}$.
Given a set of features $\mathbf{X}\in\mathbb{R}^{n\times d}$,
\begin{align}
    L_{1\rightarrow 1}(\mathbf{X}) = \mathbf{I}\mathbf{X}w_1+(\mathbf{11}^\top-\mathbf{I})\mathbf{X}w_2+\mathbf{1}b^{\top},
\end{align}
with two weights $w_1,w_2\in\mathbb{R}^{d\times d'}$, and a single bias $b\in\mathbb{R}^{d'}$.
For graphs ($k=l=2$), we have $\textnormal{bell}(4)=15$ weights and $\textnormal{bell}(2)=2$ biases.

\subsubsection{Proof of Lemma~\ref{lemma:approximation_equivariant_basis} (Section~\ref{sec:theoretical_results})}
To prove Lemma~\ref{lemma:approximation_equivariant_basis}, we need to show that each basis tensor $\mathbf{B}^\mu$ (Eq.~\eqref{eqn:basis_tensor}) in weights of equivariant linear layers (Eq.~\eqref{eqn:equivariant_linear_layer}) can be approximated by the self-attention coefficient $\boldsymbol{\alpha}^h$ (Eq.~\eqref{eqn:apdx_transformer_attention_coefficient}) to arbitrary precision up to normalization if its input is augmented by node and type identifiers (Section~\ref{sec:theoretical_results}).

From Definition~\ref{defn:basis_tensor}, each entry of basis tensor $\mathbf{B}_{\mathbf{i},\mathbf{j}}^\mu$ encodes whether $(\mathbf{i},\mathbf{j})\in\mu$ or not.
Here, our key idea is to break down the inclusion test $(\mathbf{i},\mathbf{j})\in\mu$ into equivalent but simpler Boolean tests that can be implemented in self-attention (Eq.~\eqref{eqn:apdx_transformer_multihead_self_attention}) as dot product of $\mathbf{i}$-th query and $\mathbf{j}$-th key followed by softmax.

To achieve this, we show some supplementary Lemmas.
We start with Lemma~\ref{lemma:equivalence_class_separation}, which comes from Lemma 1 of Kim~et~al.~(2021)~\cite{kim2021transformers} (we repeat their proof here for completeness).
\begin{lemma}\label{lemma:equivalence_class_separation}
For any order-$(l+k)$ equivalence class $\mu$, the set of all $\mathbf{i}\in[n]^l$ such that $(\mathbf{i},\mathbf{j})\in\mu$ for some $\mathbf{j}\in[n]^k$ forms an order-$l$ equivalence class.
Likewise, the set of all $\mathbf{j}$ such that $(\mathbf{i},\mathbf{j})\in\mu$ for some $\mathbf{i}$ forms an order-$k$ equivalence class.
\end{lemma}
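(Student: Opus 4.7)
The plan is to show the two projections (to the first $l$ coordinates and to the last $k$ coordinates) of the order-$(l+k)$ equivalence class $\mu \subseteq [n]^{l+k}$ each coincide with a single order-$l$ (respectively order-$k$) equivalence class under $\sim$ (Definition~\ref{defn:equivalence_relation}). I would prove only the first statement; the second is symmetric. Write $\pi_L(\mu) = \{\mathbf{i}\in[n]^l : \exists \mathbf{j}\in[n]^k,\ (\mathbf{i},\mathbf{j})\in\mu\}$. I need (i) any two elements of $\pi_L(\mu)$ are $\sim$-related, and (ii) every $\sim$-equivalent of an element of $\pi_L(\mu)$ is again in $\pi_L(\mu)$.

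For (i), take $\mathbf{i},\mathbf{i}'\in\pi_L(\mu)$ with witnesses $(\mathbf{i},\mathbf{j}),(\mathbf{i}',\mathbf{j}')\in\mu$. Since $\mu$ is an equivalence class of $[n]^{l+k}$, there is some $\pi\in S_n$ with $(\mathbf{i},\mathbf{j}) = \pi\cdot(\mathbf{i}',\mathbf{j}') = (\pi\cdot\mathbf{i}',\pi\cdot\mathbf{j}')$ (here $S_n$ acts coordinatewise as in Section~\ref{sec:preliminary}). Reading off the first $l$ coordinates yields $\mathbf{i} = \pi\cdot\mathbf{i}'$, hence $\mathbf{i}\sim\mathbf{i}'$. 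For (ii), let $\mathbf{i}\in\pi_L(\mu)$ with witness $(\mathbf{i},\mathbf{j})\in\mu$, and suppose $\mathbf{i}'\sim\mathbf{i}$, i.e.\ $\mathbf{i}' = \pi\cdot\mathbf{i}$ for some $\pi\in S_n$. Then $(\mathbf{i}',\pi\cdot\mathbf{j}) = \pi\cdot(\mathbf{i},\mathbf{j})$ lies in $\mu$ by closure of the equivalence class under the $S_n$-action, so $\mathbf{i}'\in\pi_L(\mu)$ with witness $\pi\cdot\mathbf{j}$.

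Combining (i) and (ii), $\pi_L(\mu)$ is precisely the $\sim$-equivalence class of any one of its elements, i.e.\ an order-$l$ equivalence class in $[n]^l/_\sim$. The same argument with the roles of the first $l$ and last $k$ coordinates swapped gives the statement for $\pi_R(\mu)$.

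The argument is essentially bookkeeping about the coordinatewise nature of the $S_n$-action; there is no real obstacle. The only subtlety worth stating cleanly is that $S_n$ acts on the product $[n]^{l+k}$ diagonally (same $\pi$ on every coordinate), which is exactly what forces the projection of $\mu$ to be a single equivalence class rather than a union of several — a non-diagonal action would allow $\mathbf{i}$ and $\mathbf{j}$ to be permuted independently and the projection could collapse multiple classes together.
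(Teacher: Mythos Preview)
Your proof is correct and essentially the same as the paper's. The paper fixes a reference element $(\mathbf{i}^1,\mathbf{j}^1)\in\mu$, lets $\gamma^l$ be the equivalence class of $\mathbf{i}^1$, and proves the biconditional $\mathbf{i}\in\gamma^l \Leftrightarrow \exists\,\mathbf{j}:(\mathbf{i},\mathbf{j})\in\mu$; its $(\Leftarrow)$ direction is your step~(i) and its $(\Rightarrow)$ direction is your step~(ii), both via the same coordinatewise $S_n$-action argument.
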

\begin{proof}
We only prove for $\mathbf{i}$ as proof for $\mathbf{j}$ is analogous.
For some $(\mathbf{i}^1, \mathbf{j}^1)\in\mu$, let us denote the equivalence class of $\mathbf{i}^1$ as $\gamma^l$ (\emph{i.e.}, $\mathbf{i}^1\in\gamma^l$).
It is sufficient that we prove $\mathbf{i}\in\gamma^l\Leftrightarrow\exists\mathbf{j}:(\mathbf{i},\mathbf{j})\in\mu$.

($\Rightarrow$)
For all $\mathbf{i}\in\gamma^l$, as $\mathbf{i}^1\sim\mathbf{i}$, there exists some $\pi\in S_n$ such that $\mathbf{i}=\pi(\mathbf{i}^1)$ by definition.
As $\pi$ acts on multi-indices entry-wise, we have $\pi(\mathbf{i}^1,\mathbf{j}^1)=(\mathbf{i},\pi(\mathbf{j}^1))$.
As $\pi(\mathbf{i}^1,\mathbf{j}^1)\sim(\mathbf{i}^1,\mathbf{j}^1)$ holds by definition, we have $(\mathbf{i},\pi(\mathbf{j}^1))\sim(\mathbf{i}^1,\mathbf{j}^1)$, and thus $(\mathbf{i},\pi(\mathbf{j}^1))\in\mu$.
Therefore, for all $\mathbf{i}\in\gamma^l$, by setting $\mathbf{j}=\pi(\mathbf{j}^1)$ we can always obtain $(\mathbf{i}, \mathbf{j})\in\mu$.

($\Leftarrow$)
For all $(\mathbf{i},\mathbf{j})\in\mu$, as $(\mathbf{i},\mathbf{j})\sim(\mathbf{i}^1,\mathbf{j}^1)$, there exists some $\pi\in S_n$ such that $(\mathbf{i}, \mathbf{j})=\pi(\mathbf{i}^1, \mathbf{j}^1)$.
This gives $\mathbf{i}=\pi(\mathbf{i}^1)$ and $\mathbf{j}=\pi(\mathbf{j}^1)$, leading to $\mathbf{i}\sim \mathbf{i}^1$ and therefore $\mathbf{i}\in\gamma^l$.
\end{proof}

Lemma~\ref{lemma:equivalence_class_separation} states that the equivalence classes $\gamma^l$ of $\mathbf{i}$ and $\gamma^k$ of $\mathbf{j}$ are identical for all $(\mathbf{i}, \mathbf{j})\in\mu$.
Based on this, we appropriately break down the test $(\mathbf{i}, \mathbf{j})\in\mu$ into a combination of several simpler tests, in particular including $\mathbf{i}\in\gamma^l$ and $\mathbf{j}\in\gamma^k$:

\begin{lemma}\label{lemma:equivalence_class_test_breakdown}
For a given order-$(l+k)$ equivalence class $\mu$, let $\gamma^l$ and $\gamma^k$ be equivalence classes of some $\mathbf{i}^1\in[n]^l,\mathbf{j}^1\in[n]^k$ respectively that satisfies $(\mathbf{i}^1, \mathbf{j}^1)\in\mu$.
Then, for any $\mathbf{i}\in[n]^l$ and $\mathbf{j}\in[n]^k$, $(\mathbf{i},\mathbf{j})\in\mu$ holds if and only if the following conditions both hold:
\begin{enumerate}
    \item $\mathbf{i}\in\gamma^l$ and $\mathbf{j}\in\gamma^k$
    \item $\mathbf{i}_a=\mathbf{j}_b\Leftrightarrow\mathbf{i}^2_a=\mathbf{j}^2_b$ for all $a\in[l]$, $b\in[k]$, and $(\mathbf{i}^2,\mathbf{j}^2)\in\mu$
\end{enumerate}
\end{lemma}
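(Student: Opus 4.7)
The plan is to prove the two implications separately, with the forward direction being essentially a bookkeeping exercise on top of Lemma~\ref{lemma:equivalence_class_separation} and the backward direction requiring a careful construction of a permutation that simultaneously aligns both $\mathbf{i}^1$ with $\mathbf{i}$ and $\mathbf{j}^1$ with $\mathbf{j}$.

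For the forward direction ($\Rightarrow$), suppose $(\mathbf{i},\mathbf{j})\in\mu$. By Lemma~\ref{lemma:equivalence_class_separation} applied to $\mu$, the collection of first coordinates of elements of $\mu$ forms the equivalence class of $\mathbf{i}^1$ (namely $\gamma^l$) and the collection of second coordinates forms $\gamma^k$; hence $\mathbf{i}\in\gamma^l$ and $\mathbf{j}\in\gamma^k$. For condition 2, take any $(\mathbf{i}^2,\mathbf{j}^2)\in\mu$, so there exists $\pi\in S_n$ with $(\mathbf{i}^2,\mathbf{j}^2)=\pi(\mathbf{i},\mathbf{j})$, giving $\mathbf{i}^2_a=\pi(\mathbf{i}_a)$ and $\mathbf{j}^2_b=\pi(\mathbf{j}_b)$. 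Since $\pi$ is a bijection on $[n]$, $\mathbf{i}_a=\mathbf{j}_b\Leftrightarrow\pi(\mathbf{i}_a)=\pi(\mathbf{j}_b)\Leftrightarrow\mathbf{i}^2_a=\mathbf{j}^2_b$.

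For the backward direction ($\Leftarrow$), which I expect to be the main obstacle, the goal is to exhibit a permutation $\pi\in S_n$ such that $\pi(\mathbf{i}^1,\mathbf{j}^1)=(\mathbf{i},\mathbf{j})$. First I would specialize condition 2 to the witness $(\mathbf{i}^2,\mathbf{j}^2)=(\mathbf{i}^1,\mathbf{j}^1)$, which yields the cross-equality statement $\mathbf{i}^1_a=\mathbf{j}^1_b\Leftrightarrow\mathbf{i}_a=\mathbf{j}_b$. Define a partial map $\pi_0$ on the set $S=\{\mathbf{i}^1_a:a\in[l]\}\cup\{\mathbf{j}^1_b:b\in[k]\}$ by setting $\pi_0(\mathbf{i}^1_a)=\mathbf{i}_a$ and $\pi_0(\mathbf{j}^1_b)=\mathbf{j}_b$. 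The hard part is to verify that $\pi_0$ is well-defined and injective on $S$: well-definedness on the $\mathbf{i}^1$-entries uses $\mathbf{i}^1\sim\mathbf{i}$ from condition 1 (so equality patterns of $\mathbf{i}^1$ and $\mathbf{i}$ coincide), well-definedness on the $\mathbf{j}^1$-entries uses $\mathbf{j}^1\sim\mathbf{j}$, and the mixed case $\mathbf{i}^1_a=\mathbf{j}^1_b\Rightarrow\mathbf{i}_a=\mathbf{j}_b$ is exactly the specialization of condition 2 above. Injectivity follows by the same three cases applied in the reverse direction: $\mathbf{i}\sim\mathbf{i}^1$, $\mathbf{j}\sim\mathbf{j}^1$, and the $\Leftarrow$ half of the cross-equality give back the necessary implications.

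Finally I would extend $\pi_0$ to a full permutation of $[n]$ by choosing any bijection between the complement $[n]\setminus S$ and the complement $[n]\setminus\pi_0(S)$ (these complements have equal cardinality because $\pi_0$ is an injection on a finite set), yielding the desired $\pi\in S_n$ with $\pi(\mathbf{i}^1,\mathbf{j}^1)=(\mathbf{i},\mathbf{j})$ and hence $(\mathbf{i},\mathbf{j})\sim(\mathbf{i}^1,\mathbf{j}^1)\in\mu$, so $(\mathbf{i},\mathbf{j})\in\mu$. The conceptual subtlety lives entirely in the well-definedness/injectivity check, where it is essential that condition 2 is stated with an existential over $(\mathbf{i}^2,\mathbf{j}^2)\in\mu$ rather than a universal, because we only need a single witness (namely $(\mathbf{i}^1,\mathbf{j}^1)$) to pin down the cross-equality pattern that $\pi$ must respect.
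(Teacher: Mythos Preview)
Your argument is correct. The forward direction matches the paper's essentially verbatim. For the backward direction, however, you take a genuinely different route: the paper encodes each equivalence class as an undirected graph on the coordinate positions (vertices $v_1,\dots,v_l,u_1,\dots,u_k$, with an edge whenever the class forces equality), observes that $\mathcal{E}^\mu$ partitions as $\mathcal{E}^l\cup\mathcal{E}^k\cup\mathcal{E}^C$ with $\mathcal{E}^C$ the cut-set, and then uses conditions~1 and~2 to show that the equality-pattern graph of $(\mathbf{i},\mathbf{j})$ has the same three pieces and hence equals $\mathcal{G}^\mu$. You instead construct the witnessing permutation directly, defining $\pi_0$ on the values appearing in $(\mathbf{i}^1,\mathbf{j}^1)$ and verifying well-definedness and injectivity case by case. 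Your approach is more elementary in that it avoids the auxiliary graph machinery and makes the role of each hypothesis explicit; the paper's approach has the advantage of cleanly separating the ``internal'' equality constraints (handled by condition~1) from the ``cross'' constraints (handled by condition~2) via the graph cut, which dovetails with how the scoring function in Definition~\ref{defn:scoring_function} is later organized.

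One small exposition point: your closing remark that ``it is essential that condition~2 is stated with an existential over $(\mathbf{i}^2,\mathbf{j}^2)\in\mu$'' is slightly off. Condition~2 is in fact stated with a \emph{universal} quantifier over $(\mathbf{i}^2,\mathbf{j}^2)\in\mu$; what you use is the trivial specialization of that universal to the single witness $(\mathbf{i}^1,\mathbf{j}^1)$. This does not affect your proof, since the universal certainly implies the instance you need (and, as all members of $\mu$ share the same equality pattern, the universal and existential versions are equivalent anyway).
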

\begin{proof}
($\Rightarrow$) If $(\mathbf{i},\mathbf{j})\in\mu$, from Lemma~\ref{lemma:equivalence_class_separation} it follows that $\mathbf{i}\in\gamma^l$ and $\mathbf{j}\in\gamma^k$.
Also, as all $(\mathbf{i}^2,\mathbf{j}^2)\in\mu$ including $(\mathbf{i},\mathbf{j})$ have the same equality pattern, it follows that for all $a\in[l]$, $b\in[k]$, and $(\mathbf{i}^2,\mathbf{j}^2)\in\mu$, if $\mathbf{i}^2_a=\mathbf{j}^2_b$ then $\mathbf{i}_a=\mathbf{j}_b$ and if $\mathbf{i}^2_a\neq\mathbf{j}^2_b$ then $\mathbf{i}_a\neq\mathbf{j}_b$.

($\Leftarrow$) We show that the conditions specify that the equivalence class of $(\mathbf{i},\mathbf{j})$ is $\mu$.

For this, it is convenient to represent an order-$l$ equivalence class $\gamma$ as an equivalent \emph{undirected graph} $\mathcal{G}=(\mathcal{V},\mathcal{E})$ defined on vertex set $\mathcal{V}=\{v_1, ..., v_l\}$ where the vertices $v_a$ and $v_b$ are connected, \emph{i.e.}, $(v_a, v_b)\in\mathcal{E}$ if and only if the equivalence class $\gamma$ specifies $\mathbf{i}_a=\mathbf{i}_b\forall \mathbf{i}\in\gamma$.
Then, for some multi-index $\mathbf{i}'\in[n]^l$, the inclusion $\mathbf{i}'\in\gamma$ holds if and only if the equivalence class of $\mathbf{i}'$ is represented as $\mathcal{G}$.

Given this, let us represent the equivalence classes $\gamma^l$, $\gamma^k$, and $\mu$ as graphs $\mathcal{G}^l$, $\mathcal{G}^k$, and $\mathcal{G}^\mu$ respectively:
\begin{align}
    \mathcal{G}^l&=(\mathcal{V}^l,\mathcal{E}^l)\textnormal{ where }\mathcal{V}^l=\{v_1, ..., v_l\},\\
    \mathcal{G}^k&=(\mathcal{V}^k,\mathcal{E}^k)\textnormal{ where }\mathcal{V}^k=\{u_1, ..., u_k\},\\
    \mathcal{G}^\mu&=(\mathcal{V}^\mu,\mathcal{E}^\mu)\textnormal{ where }\mathcal{V}^\mu=\mathcal{V}^l\cup\mathcal{V}^k=\{v_1, ..., v_l, u_1, ..., u_k\}.
\end{align}

From the precondition that $\gamma^l$ and $\gamma^k$ are equivalence classes of $\mathbf{i}^1\in[n]^l,\mathbf{j}^1\in[n]^k$ that satisfies $(\mathbf{i}^1, \mathbf{j}^1)\in\mu$, we can see that $(v_a, v_b)\in\mathcal{E}^l\Leftrightarrow (v_a, v_b)\in\mathcal{E}^\mu$ and $(u_a, u_b)\in\mathcal{E}^k\Leftrightarrow (u_a, u_b)\in\mathcal{E}^\mu$.
That is, if we consider $\mathcal{V}^l$ and $\mathcal{V}^k$ as a \emph{graph cut} of $\mathcal{G}^\mu$ and write the cut-set (edges between $\mathcal{V}^l$ and $\mathcal{V}^k$) as $\mathcal{E}^C=\{(v_a,u_b)|(v_a,u_b)\in\mathcal{E}^\mu\}$, we obtain a partition $\{\mathcal{E}^l,\mathcal{E}^k,\mathcal{E}^C\}$ of the edge set $\mathcal{E}^\mu$.

We now move to the conditions.

Let us assume the first condition that $\mathbf{i}\in\gamma^l$ and $\mathbf{j}\in\gamma^k$, with the equivalence classes represented as $\mathcal{G}^l$ and $\mathcal{G}^k$, respectively.
Now, let us consider the equivalence class of $(\mathbf{i}, \mathbf{j})$ represented by (unknown) graph $\mathcal{G}=(\mathcal{V},\mathcal{E})$.
Considering $\mathcal{V}^k$ and $\mathcal{V}^l$ as a graph cut of $\mathcal{G}$, we can see that $\mathcal{E}$ is partitioned as $\{\mathcal{E}^l,\mathcal{E}^k,\mathcal{E}^D\}$ where $\mathcal{E}^D$ is the cut-set (edges between $\mathcal{V}^l$ and $\mathcal{V}^k$).

Let us also assume the second condition $\mathbf{i}_a=\mathbf{j}_b\Leftrightarrow\mathbf{i}^2_a=\mathbf{j}^2_b$ for all $a\in[l]$, $b\in[k]$, and $(\mathbf{i}^2,\mathbf{j}^2)\in\mu$.
This directly implies that $e\in\mathcal{E}^C\Leftrightarrow e\in\mathcal{E}^D$, meaning that $\mathcal{E}^C=\mathcal{E}^D$.
As a result, we see that $\mathcal{G}$ and $\mathcal{G}^\mu$ are identical graphs, and therefore the equivalence class of $(\mathbf{i},\mathbf{j})$ is $\mu$ and $(\mathbf{i},\mathbf{j})\in\mu$ holds.

\begin{figure}[!t]
    \centering
    \includegraphics[width=0.99\textwidth]{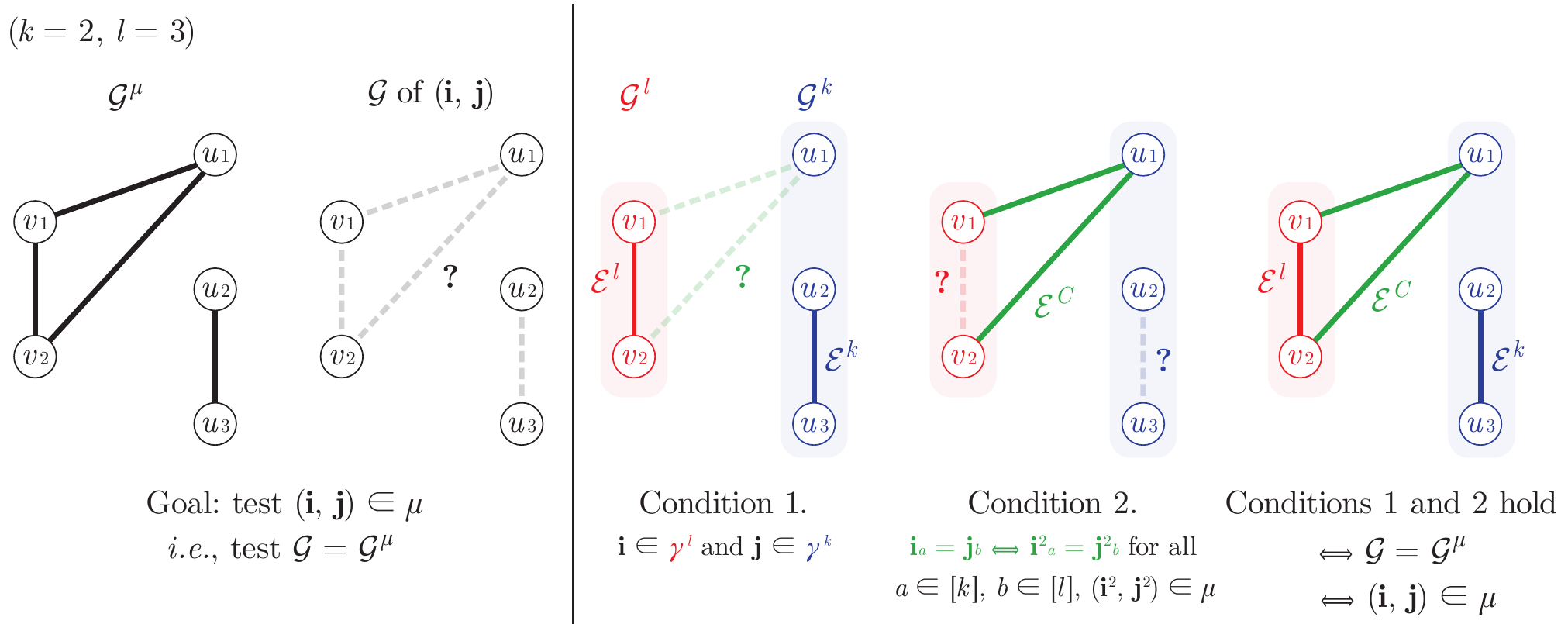}
    \caption{
    An exemplary illustration of testing $(\mathbf{i}, \mathbf{j})\in\mu$ as a combination of simpler tests, based on equivalence classes $\mu$, $\gamma^l$, and $\gamma^k$ represented as graphs $\mathcal{G}^\mu$, $\mathcal{G}^l$, and $\mathcal{G}^k$, respectively.
    }
    \label{fig:equivalence_class_graph}
\end{figure}
In Figure~\ref{fig:equivalence_class_graph}, we provide an exemplary illustration of testing $(\mathbf{i}, \mathbf{j})\in\mu$ following the above discussion.
\end{proof}

With Lemma~\ref{lemma:equivalence_class_test_breakdown}, we have a decomposition of $(\mathbf{i},\mathbf{j})\in\mu$ into independent conditions on $\mathbf{i}$ and $\mathbf{j}$ combined with pairwise conditions between $\mathbf{i}$ and $\mathbf{j}$.
In the following Definition~\ref{defn:scoring_function} and Property~\ref{property:equivalence_class_test_scoring_function}, we encode these tests into a single \emph{scoring function} that can be later implemented by self-attention.

\begin{defn}\label{defn:scoring_function}
A scoring function $\delta(\mathbf{i},\mathbf{j};\mu,\epsilon)$ is a map that, given an order-$(l+k)$ equivalence class $\mu$ and $\epsilon>0$, takes multi-indices $\mathbf{i}\in[n]^l, \mathbf{j}\in[n]^k$ and gives the following:
\begin{align}\label{eqn:scoring_function}
    \delta(\mathbf{i},\mathbf{j}; \mu, \epsilon) =
    \mathbbm{1}_{\mathbf{i}\in\gamma^l} + (1-\epsilon)\mathbbm{1}_{\mathbf{i}\notin\gamma^l} +
    \mathbbm{1}_{\mathbf{j}\in\gamma^k} + (1-\epsilon)\mathbbm{1}_{\mathbf{j}\notin\gamma^k} + \sum_{a\in[l]}\sum_{b\in[k]}\textnormal{sgn}(a, b)\mathbbm{1}_{\mathbf{i}_a=\mathbf{j}_b},
\end{align}
where $\mathbbm{1}$ is indicator, $\gamma^l$ and $\gamma^k$ are equivalence classes of $\mathbf{i}^1\in[n]^l,\mathbf{j}^1\in[n]^k$ such that $(\mathbf{i}^1, \mathbf{j}^1)\in\mu$, and the sign function $\textnormal{sgn}(\cdot,\cdot)$ is defined as follows:
\begin{align}\label{eqn:sign_function}
    \begin{array}{ll}
        \textnormal{sgn}(a, b) = \left\{
        \begin{array}{cc}
            +1   &  \mathbf{i}^2_a=\mathbf{j}^2_b\forall(\mathbf{i}^2,\mathbf{j}^2)\in\mu \\[+0.4em]
            -1   &  \mathbf{i}^2_a\neq\mathbf{j}^2_b\forall(\mathbf{i}^2,\mathbf{j}^2)\in\mu
        \end{array}\right..
    \end{array}
\end{align}
\end{defn}

An important property of the scoring function $\delta(\mathbf{i},\mathbf{j}; \mu)$ is that it gives the maximum possible value if and only if the input satisfies $(\mathbf{i},\mathbf{j})\in\mu$, as shown in the below Property~\ref{property:equivalence_class_test_scoring_function}.
\begin{property}\label{property:equivalence_class_test_scoring_function}
For given order-$(l+k)$ equivalence class $\mu$ and positive real number $\epsilon>0$, for any $\mathbf{i}\in[n]^l$ and $\mathbf{j}\in[n]^k$, $(\mathbf{i},\mathbf{j})\in\mu$ holds if and only if the scoring function $\delta(\mathbf{i},\mathbf{j};\mu,\epsilon)$ (Eq.~\eqref{eqn:scoring_function}) outputs the maximum possible value.
\end{property}
\begin{proof}
As shown in Lemma~\ref{lemma:equivalence_class_test_breakdown}, $(\mathbf{i},\mathbf{j})\in\mu$ holds if and only if the following two conditions are met.
\begin{enumerate}
    \item $\mathbf{i}\in\gamma^l$ and $\mathbf{j}\in\gamma^k$
    \item $\mathbf{i}_a=\mathbf{j}_b\Leftrightarrow\mathbf{i}^2_a=\mathbf{j}^2_b$ for all $a\in[l]$, $b\in[k]$, and $(\mathbf{i}^2,\mathbf{j}^2)\in\mu$
\end{enumerate}
When both conditions are satisfied, in Eq.~\eqref{eqn:scoring_function}, we always have $\mathbbm{1}_{\mathbf{i}\in\gamma^l} + (1-\epsilon)\mathbbm{1}_{\mathbf{i}\notin\gamma^l}=1$ and $\mathbbm{1}_{\mathbf{j}\in\gamma^k} + (1-\epsilon)\mathbbm{1}_{\mathbf{j}\notin\gamma^k}=1$.
We also have $\mathbbm{1}_{\mathbf{i}_a=\mathbf{j}_b}=1$ for $\textnormal{sgn}(a, b) = 1$ and $\mathbbm{1}_{\mathbf{i}_a=\mathbf{j}_b}=0$ for $\textnormal{sgn}(a, b) = -1$ for all $a\in[l],b\in[k]$.
As a result, Eq.~\eqref{eqn:scoring_function} gives a constant output for all $(\mathbf{i},\mathbf{j})\in\mu$.

On the other hand, if given $(\mathbf{i},\mathbf{j})$ violates any of the conditions (thus $(\mathbf{i},\mathbf{j})\notin\mu$), we either have $\mathbbm{1}_{\mathbf{i}\in\gamma^l} + (1-\epsilon)\mathbbm{1}_{\mathbf{i}\notin\gamma^l}=(1-\epsilon)$, or $\mathbbm{1}_{\mathbf{j}\in\gamma^k} + (1-\epsilon)\mathbbm{1}_{\mathbf{j}\notin\gamma^k}=(1-\epsilon)$, or $\mathbbm{1}_{\mathbf{i}_a=\mathbf{j}_b}=0$ for $\textnormal{sgn}(a, b) = 1$ or $\mathbbm{1}_{\mathbf{i}_a=\mathbf{j}_b}=1$ for $\textnormal{sgn}(a, b) = -1$ for some $a\in[l],b\in[k]$.
\emph{Any} of these violations decrements the output of Eq.~\eqref{eqn:scoring_function} by a positive (1 or $\epsilon$), resulting in a non-maximum output.

Thus, the scoring function $\delta(\mathbf{i},\mathbf{j};\mu,\epsilon)$ gives the maximum possible output if and only if $(\mathbf{i},\mathbf{j})\in\mu$.
\end{proof}

Now, we prove Lemma~\ref{lemma:approximation_equivariant_basis}.
\begingroup
\def\thelemma{\ref{lemma:approximation_equivariant_basis}}
\begin{lemma}
For all $\mathbf{X}\in\mathbb{R}^{n^k\times d}$ and their augmentation $\mathbf{X}^{in}$, self-attention coefficients $\boldsymbol{\alpha}^h$ (Eq.~\eqref{eqn:transformer_multihead_self_attention_recap}) computed with $\mathbf{X}^{in}w^{in}$ can approximate any basis tensor $\mathbf{B}^{\mu}\in\mathbb{R}^{n^{2k}}$ of order-$k$ equivariant linear layer $L_{k\to k}$ (Definition~\ref{defn:equivariant_linear_layer}) to arbitrary precision up to normalization.
\end{lemma}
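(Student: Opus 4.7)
The plan is to reduce the approximation problem to Property~\ref{property:equivalence_class_test_scoring_function}: if I can arrange the pre-softmax logits $\mathbf{S}_{\mathbf{i},\mathbf{j}}$ of a single head to satisfy $\mathbf{S}_{\mathbf{i},\mathbf{j}}=a\cdot\delta(\mathbf{i},\mathbf{j};\mu,\epsilon)+c_{\mathbf{i}}$ for some temperature parameter $a>0$ and some shift $c_{\mathbf{i}}$ that depends only on $\mathbf{i}$, then by Property~\ref{property:equivalence_class_test_scoring_function} the maximum of $\mathbf{S}_{\mathbf{i},\cdot}$ over $\mathbf{j}$ is attained exactly on $\{\mathbf{j}:(\mathbf{i},\mathbf{j})\in\mu\}$. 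Since row softmax is invariant to shifts in $\mathbf{j}$, sending $a\to\infty$ drives $\boldsymbol{\alpha}^h_{\mathbf{i},\cdot}$ to the hardmax, i.e.\ the row-normalized indicator of that set, which is $\mathbf{B}^{\mu}_{\mathbf{i},\cdot}$ up to the row-normalization factor appearing in the lemma statement.

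To realize the logit identity, I would first fix the type identifiers $\mathbf{E}^{\gamma_1},\ldots,\mathbf{E}^{\gamma_{\textnormal{bell}(k)}}$ to orthonormal vectors; since they are trainable, this is an admissible choice of parameters. Then I would design the composite maps $w^{in}w^Q_h$ and $w^{in}w^K_h$ with a block structure that allocates disjoint subspaces of the hidden dimension to independent sub-computations. A \emph{type block} routes the type-identifier slot of the query to a direction proportional to $\mathbf{E}^{\gamma^l}$ and the type slot of the key to $\mathbf{E}^{\gamma^k}$; paired with an additive bias on the query/key projections (as in~\cite{yun2020are}), orthonormality yields the contribution $\mathbbm{1}_{\mathbf{i}\in\gamma^l}+(1-\epsilon)\mathbbm{1}_{\mathbf{i}\notin\gamma^l}+\mathbbm{1}_{\mathbf{j}\in\gamma^k}+(1-\epsilon)\mathbbm{1}_{\mathbf{j}\notin\gamma^k}$ of Eq.~\eqref{eqn:scoring_function}. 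A second family of \emph{node-pair blocks}, one per $(a,b)\in[k]\times[k]$, routes the $a$-th node-identifier slot $\mathbf{P}_{i_a}$ of the query and the $b$-th node-identifier slot $\mathbf{P}_{j_b}$ of the key into a shared coordinate carrying the sign $\textnormal{sgn}(a,b)$ from Eq.~\eqref{eqn:sign_function}. Orthonormality of $\{\mathbf{P}_i\}$ then gives $\mathbf{P}_{i_a}^{\top}\mathbf{P}_{j_b}=\mathbbm{1}_{i_a=j_b}$, so this family reproduces the double sum $\sum_{a,b}\textnormal{sgn}(a,b)\mathbbm{1}_{\mathbf{i}_a=\mathbf{j}_b}$ in $\delta$. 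Finally, $w^{in}$ zeros out the data slot $\mathbf{X}$ before the query/key projection so that input content cannot contaminate $\mathbf{S}$, and the overall scalar $a$ is obtained by multiplying both $w^Q_h$ and $w^K_h$ by $\sqrt{a}$.

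The main obstacle is the bookkeeping of this block-structured construction: the type block and each of the $k^2$ node-pair blocks must be confined to disjoint hidden subspaces so that cross-block inner products vanish and the logit is exactly a sum of the intended contributions rather than being corrupted by spurious interactions. One also has to verify that the $(1-\epsilon)$ offsets, which in Eq.~\eqref{eqn:scoring_function} are what make the difference between inclusion and non-inclusion in $\gamma^l$ (resp.\ $\gamma^k$) a strictly positive gap rather than zero, can be produced consistently across all $\mathbf{i}$ and $\mathbf{j}$; this is where the query/key biases are needed, together with the observation that any global additive constant merely feeds into $c_{\mathbf{i}}$ and is killed by softmax. Once these pieces are in place the identity $\mathbf{S}_{\mathbf{i},\mathbf{j}}=a\cdot\delta(\mathbf{i},\mathbf{j};\mu,\epsilon)+c_{\mathbf{i}}$ is immediate, Property~\ref{property:equivalence_class_test_scoring_function} and the softmax-hardmax limit close the proof, and the argument applies uniformly to every basis tensor $\mathbf{B}^{\mu}$ of $L_{k\to k}$ by merely changing which $\gamma^l$, $\gamma^k$, and sign pattern $\textnormal{sgn}(\cdot,\cdot)$ the head's weights encode.
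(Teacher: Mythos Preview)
Your proposal is correct and follows essentially the same strategy as the paper: build block-structured query/key projections (with biases) so that the pre-softmax logit realizes the scaled scoring function $a\cdot\delta(\mathbf{i},\mathbf{j};\mu,\epsilon)$, invoke Property~\ref{property:equivalence_class_test_scoring_function}, and send $a\to\infty$ for the softmax-to-hardmax limit. The paper differs only in cosmetic choices---it fixes the type identifiers as radially equispaced unit vectors on a two-dimensional subspace (giving $\epsilon=1-\cos(2\pi/\textnormal{bell}(k))$ and an inequality $\tilde{\boldsymbol{\alpha}}^h_{\mathbf{i},\mathbf{j}}\le\frac{a}{\sqrt{d_H}}\delta$ off~$\mu$ rather than your exact equality) and zeros the data slot within $w_h^Q,w_h^K$ rather than in $w^{in}$, so that $\mathbf{X}$ survives for the value projection needed downstream in Theorem~\ref{thm:approximation_equivariant_layer}.
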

\addtocounter{lemma}{-1}
\endgroup
\begin{proof}
Let us first recall the node and type identifiers (Section~\ref{sec:theoretical_results}) for order-$k$ tensors $\mathbf{X}\in\mathbb{R}^{n^k\times d}$.
Node identifier $\mathbf{P}\in\mathbb{R}^{n\times d_p}$ is an orthonormal matrix with $n$ rows, and type identifier is a trainable matrix $\mathbf{E}\in\mathbb{R}^{\textnormal{bell}(k)\times d_e}$ with $\textnormal{bell}(k)$ rows $\mathbf{E}^{\gamma_1}, ..., \mathbf{E}^{\gamma_{\textnormal{bell}(k)}}$, each designated for an order-$k$ equivalence class $\gamma$.
For each multi-index $\mathbf{i}=(i_1, ..., i_k)\in[n]^k$, we augment the corresponding input tensor entry as $[\mathbf{X}_\mathbf{i}, \mathbf{P}_{i_1}, ...,\mathbf{P}_{i_k}, \mathbf{E}^{\gamma^\mathbf{i}}]$
where $\mathbf{i}\in\gamma^\mathbf{i}$, obtaining the augmented order-$k$ tensor $\mathbf{X}^{in}\in\mathbb{R}^{n^k\times(d+kd_p+d_e)}$.
We use a trainable projection $w^{in}\in\mathbb{R}^{(d+kd_p+d_e)\times d_\mathcal{T}}$ to map them to a hidden dimension $d_\mathcal{T}$.

We now use self-attention on $\mathbf{X}^{in}w^{in}$ to perform an accurate approximation of the equivariant basis.
Specifically, we use each self-attention matrix $\boldsymbol{\alpha}^h$ (Eq.~\eqref{eqn:apdx_transformer_attention_coefficient}) to approximate each basis tensor $\mathbf{B}^{\mu_h}$ of $L_{k\to k}$ (Eq.~\eqref{eqn:equivariant_linear_layer}) to arbitrary precision up to normalization.

Let us take $d_\mathcal{T}=(d+kd_p+d_e)+\textnormal{bell}(2k)d$, putting $\textnormal{bell}(2k)d$ extra channels on top of channels of the augmented input $\mathbf{X}^{in}$.
We now let $w^{in}=[\mathbf{I}, \mathbf{0}]$, where $\mathbf{I}\in\mathbb{R}^{(d+kd_p+d_e)\times (d+kd_p+d_e)}$ is an identity matrix and $\mathbf{0}\in\mathbb{R}^{(d+kd_p+d_e)\times (d_\mathcal{T}-(d+kd_p+d_e))}$ is a matrix filled with zeros.
With this, $\mathbf{X}' = \mathbf{X}^{in}w^{in}$ simply contains $\mathbf{X}^{in}$ in the first $(d+kd_p+d_e)$ channels and zeros in the rest.

\begin{figure}[!t]
    \centering
    \includegraphics[width=0.99\textwidth]{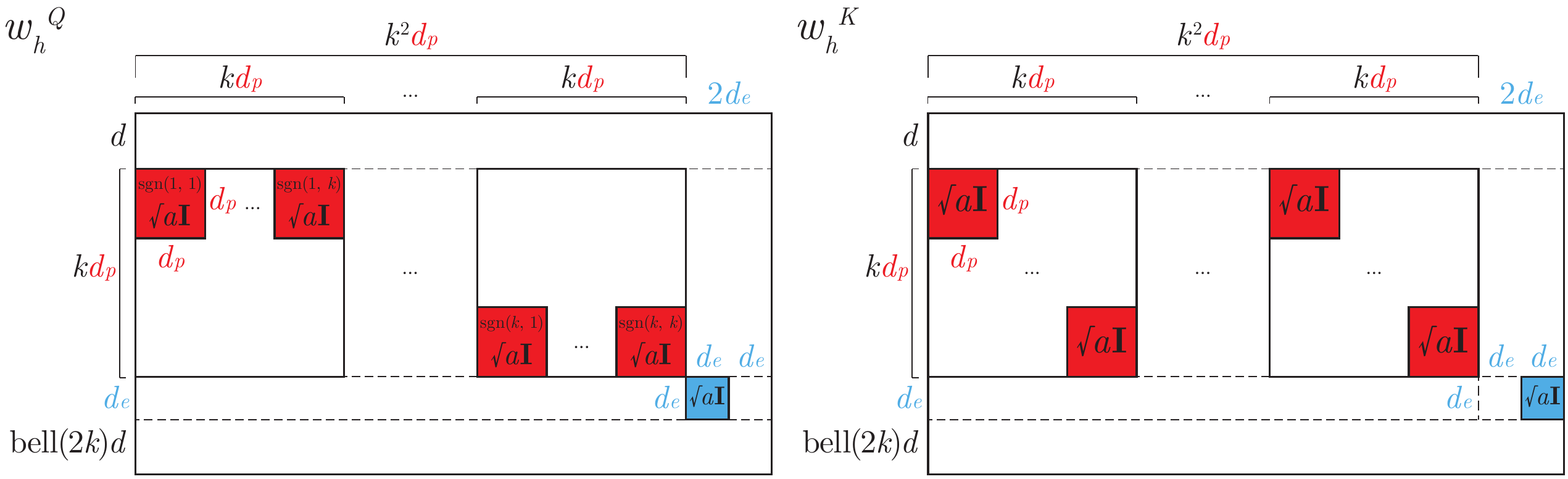}
    \caption{
    Query and key projection matrices $w_h^Q,w_h^K$ (Eq.~\eqref{eqn:whq}, Eq.~\eqref{eqn:whk}).
    Uncolored cells are zeros.
    }
    \label{fig:whqk}
\end{figure}
Now we pass $\mathbf{X}'$ to the self-attention layer in Eq.~\eqref{eqn:apdx_transformer_attention_coefficient}, where each self-attention matrix is given as $\boldsymbol{\alpha}^h=\textnormal{softmax}((\mathbf{X}'w_h^Q+b_h^Q)(\mathbf{X}'w_h^K+b_h^K)^\top/\sqrt{d_H})$.
The key idea is to set query and key projection parameters $w_h^Q,w_h^K\in\mathbb{R}^{d_\mathcal{T}\times d_H}$ and $b_h^Q,b_h^K\in\mathbb{R}^{d_H}$ appropriately so that the self-attention matrix $\boldsymbol{\alpha}^h$ approximates a given basis tensor $\mathbf{B}^\mu$ corresponding to an order-$2k$ equivalence class $\mu$.
Let $\gamma^Q$ and $\gamma^K$ be equivalence classes of some $\mathbf{i}^1,\mathbf{j}^1\in[n]^k$ respectively that satisfy $(\mathbf{i}^1,\mathbf{j}^1)\in\mu$ (see Lemma~\ref{lemma:equivalence_class_test_breakdown}).
We set head dimension $d_H = k^2d_p+2d_e$ and set $w_h^Q,w_h^K,b_h^Q,b_h^K$ as follows:
\begin{align}
    (w_h^Q)_{ij} &= \left\{
    \begin{array}{ll}
        \textnormal{sgn}(s, r)\sqrt{a}\mathbf{I}_{i-I,j-J} &
        \left\{
        \begin{array}{lll}
            \text{\footnotesize{$I<i\leq I+d_p$}} & \text{\footnotesize{for}} & \text{\footnotesize{$I=d+(s-1)d_p,$}}\\
            \text{\footnotesize{$J<j\leq J+d_p$}} & \text{\footnotesize{for}} & \text{\footnotesize{$J=(s-1)kd_p+(r-1)d_p,$}}\\
            \text{\footnotesize{for all $s,r\in[k]$}}
        \end{array}
        \right.
        \\
        \sqrt{a}\mathbf{I}_{i-I,j-J} &
        \left\{
        \begin{array}{lll}
            \text{\footnotesize{$I<i\leq I+d_e$}} & \text{\footnotesize{for}} & \text{\footnotesize{$I=d+kd_p,$}}\\
            \text{\footnotesize{$J<j\leq J+d_e$}} & \text{\footnotesize{for}} & \text{\footnotesize{$J=k^2d_p,$}}
        \end{array}
        \right.
        \\
        0 & \text{\footnotesize{otherwise}}
    \end{array}
    \right.,\label{eqn:whq}\\
    (w_h^K)_{ij} &= \left\{
    \begin{array}{ll}
        \sqrt{a}\mathbf{I}_{i-I,j-J}\hphantom{\textnormal{sgn}(s,r)} &
        \left\{
        \begin{array}{lll}
            \text{\footnotesize{$I<i\leq I+d_p$}} & \text{\footnotesize{for}} & \text{\footnotesize{$I=d+(r-1)d_p,$}}\\
            \text{\footnotesize{$J<j\leq J+d_p$}} & \text{\footnotesize{for}} & \text{\footnotesize{$J=(s-1)kd_p+(r-1)d_p,$}}\\
            \text{\footnotesize{for all $s,r\in[k]$}}
        \end{array}
        \right.
        \\
        \sqrt{a}\mathbf{I}_{i-I,j-J} &
        \left\{
        \begin{array}{lll}
            \text{\footnotesize{$I<i\leq I+d_e$}} & \text{\footnotesize{for}} & \text{\footnotesize{$I=d+kd_p,$}}\\
            \text{\footnotesize{$J<j\leq J+d_e$}} & \text{\footnotesize{for}} & \text{\footnotesize{$J=k^2d_p+d_e,$}}
        \end{array}
        \right.
        \\
        0 & \text{\footnotesize{otherwise}}
    \end{array}
    \right.,\label{eqn:whk}\\
    (b_h^Q)_j &= \left\{
    \begin{array}{ll}
        \sqrt{a}\mathbf{E}^{\gamma^K}_{j-J}\hphantom{\textnormal{sgn}(s,r)}\ \ \ \ \ \ &
        \begin{array}{lll}
             \text{\footnotesize{$J<j\leq J+d_e$}} & \text{\footnotesize{for}} & \text{\footnotesize{$J=k^2d_p$}}
        \end{array}
        \\
        0 & \text{\footnotesize{otherwise}}
    \end{array}
    \right.,\\
    (b_h^K)_j &= \left\{
    \begin{array}{ll}
        \sqrt{a}\mathbf{E}^{\gamma^Q}_{j-J}\hphantom{\textnormal{sgn}(s,r)}\ \ \ \ \ \ &
        \begin{array}{lll}
             \text{\footnotesize{$J<j\leq J+d_e$}} & \text{\footnotesize{for}} & \text{\footnotesize{$J=k^2d_p+d_e$}}
        \end{array}
        \\
        0 & \text{\footnotesize{otherwise}}
    \end{array}
    \right.,
\end{align}
where $a>0$ is a positive real, $\mathbf{I}$ is an identity matrix, and $\textnormal{sgn}(\cdot,\cdot)$ is the sign function defined in Eq.~\eqref{eqn:sign_function} (Definition~\ref{defn:scoring_function}).
In Figure~\ref{fig:whqk} we provide an illustration of the query and key weights $w_h^Q,w_h^K$.

With the parameters, $\mathbf{i}$-th query and $\mathbf{j}$-th key entries are computed as follows:
\begin{align}
    \mathbf{X}_\mathbf{i}'w_h^Q+b_h^Q &= \sqrt{a}[[\textnormal{sgn}(1,1)\mathbf{P}_{i_1}, ..., \textnormal{sgn}(1,k)\mathbf{P}_{i_1}], ..., [\textnormal{sgn}(k,1)\mathbf{P}_{i_k}, ..., \textnormal{sgn}(k,k)\mathbf{P}_{i_k}], \mathbf{E}^{\gamma^\mathbf{i}}, \mathbf{E}^{\gamma^K}],\\
    \mathbf{X}_\mathbf{j}'w_h^K+b_h^K&=\sqrt{a}[\overbrace{[\mathbf{P}_{j_1}, ..., \mathbf{P}_{j_k}], ..., [\mathbf{P}_{j_1}, ..., \mathbf{P}_{j_k}]}^{k\textnormal{ repeats}}, \mathbf{E}^{\gamma^Q}, \mathbf{E}^{\gamma^\mathbf{j}}].
\end{align}

Then, scaled pairwise dot product of query and key is given as follows:
\begin{align}\label{eqn:pairwise_dot_product}
    \frac{(\mathbf{X}_\mathbf{i}'w_h^Q+b_h^Q)^\top(\mathbf{X}_\mathbf{j}'w_h^K+b_h^K)}{\sqrt{d_H}} &= 
    \frac{a}{\sqrt{d_H}}\left(
    (\mathbf{E}^{\gamma^\mathbf{i}})^\top\mathbf{E}^{\gamma^Q} +
    (\mathbf{E}^{\gamma^\mathbf{j}})^\top\mathbf{E}^{\gamma^K} +
    \sum_{a\in[k]}\sum_{b\in[k]}\textnormal{sgn}(a,b)\mathbf{P}_{i_a}^\top\mathbf{P}_{j_b}
    \right).
\end{align}

We refer to the scaled dot product in Eq.~\eqref{eqn:pairwise_dot_product} as the \emph{unnormalized} attention coefficient $\tilde{\boldsymbol{\alpha}}^h_{\mathbf{i},\mathbf{j}}$.

\begin{figure}[!t]
    \centering
    \vspace{-0.1cm}
    \includegraphics[width=0.5\textwidth]{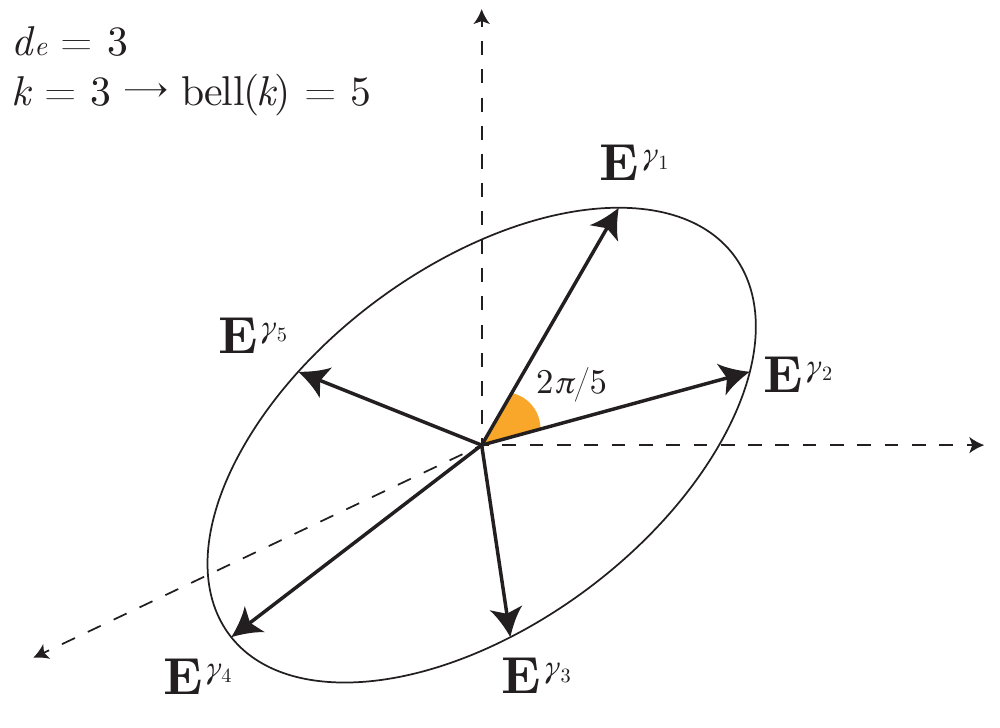}
    \vspace{-0.1cm}
    \caption{
    $k=3$ case example of $\textnormal{bell}(k)=5$ type identifiers embedded in $d_e=3$ dimensional space.
    }
    \vspace{-0.1cm}
    \label{fig:type_identifier}
\end{figure}
We now let the type identifiers $\mathbf{E}^{\gamma_1}, ..., \mathbf{E}^{\gamma_{\textnormal{bell}(k)}}$ be radially equispaced unit vectors on any two-dimensional subspace (Figure~\ref{fig:type_identifier}).
This guarantees that any pair of type identifiers $\mathbf{E}^{\gamma_1}, \mathbf{E}^{\gamma_2}$ with $\gamma_1\neq\gamma_2$ have dot product $(\mathbf{E}^{\gamma_1})^\top\mathbf{E}^{\gamma_2}\leq \cos{\left(2\pi/\textnormal{bell}(k)\right)}$.
By setting $\epsilon=1-\cos{\left(2\pi/\textnormal{bell}(k)\right)}>0$, this can be equivalently written as $(\mathbf{E}^{\gamma_1})^\top\mathbf{E}^{\gamma_2}\leq1-\epsilon$.
We additionally note that $(\mathbf{E}^{\gamma^\mathbf{i}})^\top\mathbf{E}^{\gamma^Q} = 1$ if and only if $\mathbf{i}\in\gamma^Q$ because $\gamma^\mathbf{i}=\gamma^Q\Leftrightarrow\mathbf{i}\in\gamma^Q$.

Combining the above, Eq.~\eqref{eqn:pairwise_dot_product}, and Eq.~\eqref{eqn:scoring_function}, we have the following:
\begin{align}\label{eqn:pairwise_dot_product_scoring_function}
    \tilde{\boldsymbol{\alpha}}^h_{\mathbf{i},\mathbf{j}} &= \frac{a}{\sqrt{d_H}}\delta(\mathbf{i},\mathbf{j};\mu,\epsilon)
    \ \ \text{\footnotesize{if $(\mathbf{i},\mathbf{j})\in\mu$}},\\
    \tilde{\boldsymbol{\alpha}}^h_{\mathbf{i},\mathbf{j}} &\leq \frac{a}{\sqrt{d_H}}\delta(\mathbf{i},\mathbf{j};\mu,\epsilon)
    \ \ \text{\footnotesize{otherwise}},
\end{align}
where $\epsilon=1-\cos{\left(2\pi/\textnormal{bell}(k)\right)}$ and $\delta(\mathbf{i},\mathbf{j}; \mu,\epsilon)$ is the scoring function in Eq.~\eqref{eqn:scoring_function} (Definition~\ref{defn:scoring_function}).

For a given query index $\mathbf{i}$, let us assume there exists at least one key index $\mathbf{j}$ such that $(\mathbf{i},\mathbf{j})\in\mu$~\footnote{If such key index $\mathbf{j}$ does not exist, corresponding basis tensor entries are $\mathbf{B}_{\mathbf{i},\mathbf{j}}^\mu=0\forall\mathbf{j}$, and approximation target cannot be defined as normalizing denominator $\sum_\mathbf{j}\mathbf{B}^\mu_{\mathbf{i},\mathbf{j}}$ is 0.
Thus we do not approximate for such $\mathbf{i}$, let attention row $\boldsymbol{\alpha}_{\mathbf{i},\cdot}$ have any finite values, and later silence their attention output by multiplying zero at MLP.\label{footnote:zero_row_approximation}}.
From Property~\ref{property:equivalence_class_test_scoring_function} and Eq.~\eqref{eqn:pairwise_dot_product_scoring_function}, all keys $\mathbf{j}$ that give $(\mathbf{i},\mathbf{j})\in\mu$ hold the same maximum value $\tilde{\boldsymbol{\alpha}}^h_{\mathbf{i},\mathbf{j}}= \frac{a}{\sqrt{d_H}}\delta(\mathbf{i},\mathbf{j};\mu,\epsilon)$, and any $(\mathbf{i},\mathbf{j})\notin\mu$ gives a value smaller at least by $\min{(1,\epsilon)}>0$.
Then, in softmax normalization, we send $a\to\infty$ by scaling up the query and key projection parameters.
This pushes softmax arbitrarily close to the hardmax operator, leaving only the maximal entries leading to the following:
\begin{align}\label{eqn:attention_basis_tensor_approximation}
    \boldsymbol{\alpha}^h_{\mathbf{i},\mathbf{j}} &= \frac{\exp(\tilde{\boldsymbol{\alpha}}^h_{\mathbf{i},\mathbf{j}})}
    {\sum_\mathbf{j}{\exp(\tilde{\boldsymbol{\alpha}}^h_{\mathbf{i},\mathbf{j}})}}
    \to\frac{\mathbbm{1}_{(\mathbf{i},\mathbf{j}\in\mu)}}{\sum_\mathbf{j}\mathbbm{1}_{(\mathbf{i},\mathbf{j}\in\mu)}}=\frac{\mathbf{B}^\mu_{\mathbf{i},\mathbf{j}}}{\sum_\mathbf{j}\mathbf{B}^\mu_{\mathbf{i},\mathbf{j}}}\textnormal{ as }a\to\infty.
\end{align}
Thus, as shown in Eq.~\eqref{eqn:attention_basis_tensor_approximation}, the attention coefficient $\boldsymbol{\alpha}^h$ can arbitrarily accurately approximate the normalized basis tensor $\mathbf{B}^\mu$ for given equivalence class $\mu$.
\end{proof}

\subsubsection{Proof of Theorem~\ref{thm:approximation_equivariant_layer} (Section~\ref{sec:theoretical_results})}
\begingroup
\def\thethm{\ref{thm:approximation_equivariant_layer}}
\begin{thm}
For all $\mathbf{X}\in\mathbb{R}^{n^k\times d}$ and their augmentation $\mathbf{X}^{in}$, a Transformer layer with $\textnormal{bell}(2k)$ self-attention heads that operates on $\mathbf{X}^{in}w^{in}$ can approximate an order-$k$ equivariant linear layer $L_{k\to k}(\mathbf{X})$ (Definition~\ref{defn:equivariant_linear_layer}) to arbitrary precision.
\end{thm}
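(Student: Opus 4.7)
The plan is to complete the construction from Lemma~\ref{lemma:approximation_equivariant_basis} by using the remaining components of a Transformer layer to convert approximations of normalized basis tensors into an accurate approximation of $L_{k\to k}(\mathbf{X})$. Recall that a Transformer layer consists of a multihead self-attention sublayer with a residual connection (Eq.~\eqref{eqn:transformer_residual_msa}) followed by a tokenwise MLP sublayer with a residual connection (Eq.~\eqref{eqn:transformer_residual_mlp}). I would allocate one head per weight basis tensor, giving $H=\textnormal{bell}(2k)$ heads associated with $\mathbf{B}^{\mu_1},\ldots,\mathbf{B}^{\mu_H}$, and set their query/key projections as in Lemma~\ref{lemma:approximation_equivariant_basis} so that $\boldsymbol{\alpha}^h$ approximates the row-normalized basis tensor $\mathbf{B}^{\mu_h}_{\mathbf{i},\mathbf{j}}/\sum_{\mathbf{j}'}\mathbf{B}^{\mu_h}_{\mathbf{i},\mathbf{j}'}$ to arbitrary precision.

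Next, I would widen the hidden dimension $d_\mathcal{T}$ to include disjoint "scratchpad" channels allocated to each head and an "output" block of $d'$ channels, and design the value/output projections $w_h^V w_h^O$ so that the contribution of head $h$ is written only into its own scratchpad. Since the augmented token $\mathbf{X}^{in}_\mathbf{j}$ contains $\mathbf{X}_\mathbf{j}$ linearly, $w_h^V w_h^O$ can be chosen to extract $\mathbf{X}_\mathbf{j}$ and apply $w_{\mu_h}$, and moreover to zero out both the raw-input block and the output block. Combined with Lemma~\ref{lemma:approximation_equivariant_basis}, the scratchpad of head $h$ at position $\mathbf{i}$ then tends to $(1/c_{\gamma^\mathbf{i},\mu_h})\sum_\mathbf{j}\mathbf{B}^{\mu_h}_{\mathbf{i},\mathbf{j}}\mathbf{X}_\mathbf{j}w_{\mu_h}$, where $c_{\gamma^\mathbf{i},\mu_h}=\sum_\mathbf{j}\mathbf{B}^{\mu_h}_{\mathbf{i},\mathbf{j}}$.

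The final step is to use the tokenwise MLP to invert the softmax normalization and assemble $L_{k\to k}(\mathbf{X})$. Two observations make this tractable: (i)~by permutation equivariance of each $\mathbf{B}^{\mu_h}$, the count $c_{\gamma^\mathbf{i},\mu_h}$ depends only on $n$ and the equivalence class $\gamma^\mathbf{i}$ of $\mathbf{i}$; (ii)~analogously $\sum_\lambda \mathbf{C}^\lambda_\mathbf{i} b_\lambda$ equals a vector $b_{\gamma^\mathbf{i}}$ depending only on $\gamma^\mathbf{i}$. Because the type identifier $\mathbf{E}^{\gamma^\mathbf{i}}$ is preserved in $\mathbf{X}'$ by the MSA residual and $\mathbf{E}$ takes only $\textnormal{bell}(k)$ distinct values, the MLP can realize (by universal approximation on a finite set) a mapping that reads $\mathbf{E}^{\gamma^\mathbf{i}}$ and emits the vector of scalars $\{c_{\gamma^\mathbf{i},\mu_h}\}_h$ as well as $b_{\gamma^\mathbf{i}}$. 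Using a small number of hidden units that multiply each scratchpad channel by its $c_{\gamma^\mathbf{i},\mu_h}$ (a scalar-vector product can be approximated arbitrarily well by a ReLU MLP on a compact domain), summing across heads into the output block, adding $b_{\gamma^\mathbf{i}}$ there, and subtracting the raw-input block in that block so as to cancel the outer residual of Eq.~\eqref{eqn:transformer_residual_mlp}, the final output on the output channels matches $\sum_\mu \sum_\mathbf{j}\mathbf{B}^\mu_{\mathbf{i},\mathbf{j}}\mathbf{X}_\mathbf{j}w_\mu + \sum_\lambda \mathbf{C}^\lambda_\mathbf{i} b_\lambda = L_{k\to k}(\mathbf{X})_\mathbf{i}$ up to arbitrary precision.

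The main obstacle is the denormalization: softmax forces each attention row to sum to one, whereas $L_{k\to k}$ uses unnormalized masked sums, and the required scalar $c_{\gamma^\mathbf{i},\mu_h}$ is $\mathbf{i}$-dependent. Attempting to fold this multiplier into $w_h^V w_h^O$ would fail because $w_h^V w_h^O$ does not see $\mathbf{i}$. Routing each head into its own scratchpad channels and performing the $\gamma^\mathbf{i}$-indexed scalar multiplication in the tokenwise MLP is the essential trick. A secondary nuisance is the degenerate case flagged in footnote~\ref{footnote:zero_row_approximation} where $c_{\gamma^\mathbf{i},\mu_h}=0$ and $\boldsymbol{\alpha}^h_{\mathbf{i},\cdot}$ is unconstrained; the same MLP multiplier is set to zero there, silencing the undefined row. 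Stacking these ingredients yields the claimed arbitrary-precision approximation of an arbitrary order-$k$ equivariant linear layer by a single Transformer layer.
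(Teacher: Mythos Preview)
Your proposal is correct and follows essentially the same approach as the paper: route each head's output $\sum_\mathbf{j}\boldsymbol{\alpha}^h_{\mathbf{i},\mathbf{j}}\mathbf{X}_\mathbf{j}w_{\mu_h}$ into disjoint per-head channels via $w_h^V w_h^O$, then use the tokenwise MLP (invoking universal approximation) to read the type identifier $\mathbf{E}^{\gamma^\mathbf{i}}$, multiply each head's scratchpad by the denormalization constant $c_{\gamma^\mathbf{i},\mu_h}=\sum_\mathbf{j}\mathbf{B}^{\mu_h}_{\mathbf{i},\mathbf{j}}$, sum the heads, add the class-dependent bias, and cancel the residual. The only cosmetic difference is that the paper writes the final result back into the first $d$ channels (overwriting $\mathbf{X}_\mathbf{i}$) and explicitly preserves the node and type identifiers, which matters for stacking layers in Theorem~\ref{thm:approximation_order_k_ign} but is not needed for the present statement.
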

\addtocounter{thm}{-1}
\endgroup
\begin{proof}
We continue from the proof of Lemma~\ref{lemma:approximation_equivariant_basis} and assume that each attention matrix $\boldsymbol{\alpha}^1,...,\boldsymbol{\alpha}^{\textnormal{bell}(2k)}$ in Eq.~\eqref{eqn:apdx_transformer_attention_coefficient} head-wise approximates each normalized basis tensor $\mathbf{B}^{\mu_1}, ..., \mathbf{B}^{\mu_{\textnormal{bell}(2k)}}$ respectively, \emph{i.e.}, $\boldsymbol{\alpha}_{\mathbf{i},\mathbf{j}}^h=\mathbf{B}^{\mu_h}_{\mathbf{i},\mathbf{j}}/\sum_\mathbf{j}\mathbf{B}^{\mu_h}_{\mathbf{i},\mathbf{j}}$.\footnote{
we handle the case $\sum_\mathbf{j}\mathbf{B}^{\mu_h}_{\mathbf{i},\mathbf{j}}=0$ later separately as mentioned in footnote~\ref{footnote:zero_row_approximation}.
\label{footnote:zero_row_approximation_recap}
}

Then, in Eq.~\eqref{eqn:apdx_transformer_multihead_self_attention} we use $d_v=d$ and set $w_h^V\in\mathbb{R}^{d_\mathcal{T}\times d}$ to $w_h^V=[\mathbf{I};\mathbf{0}]$, where $\mathbf{I}\in\mathbb{R}^{d\times d}$ is an identity matrix and $\mathbf{0}\in\mathbb{R}^{(d_\mathcal{T}-d)\times d}$ is a matrix filled with zeros.
With this, the value projection of each $\mathbf{i}$-th entry simply gives the original input features, $\mathbf{X}_\mathbf{i}'w_h^V=\mathbf{X}_\mathbf{i}$.

Then, we set output projections $w_h^O\in\mathbb{R}^{d\times d_\mathcal{T}}$ as follows:
\begin{align}
    (w_h^O)_{ij}= \left\{\begin{array}{ll}
            (w_{\mu_h})_{i,j-J} & \text{\footnotesize{$J<j\leq J+d$ for $J=(d+kd_p+d_e)+(h-1)d$}} \\
            0 & \text{\footnotesize{otherwise}}
        \end{array}\right.,
\end{align}
where $w_{\mu_1}, ..., w_{\mu_{\textnormal{bell}(2k)}}\in\mathbb{R}^{d\times d}$ are weight matrices of the given equivariant linear layer $L_{k\to k}$ in Eq.~\eqref{eqn:equivariant_linear_layer} (Definition~\ref{defn:equivariant_linear_layer}), each corresponding to equivalence classes $\mu_1,...,\mu_{\textnormal{bell}(2k)}$.

Then, output projection applied after value projection of each $\mathbf{i}$-th input entry gives the following:
\begin{align}
    \mathbf{X}_\mathbf{i}'w_h^Vw_h^O=\mathbf{X}_\mathbf{i}w_h^O=[\mathbf{0},\mathbf{0}_L,\mathbf{X}_\mathbf{i}w_{\mu_h},\mathbf{0}_R],
\end{align}
where $\mathbf{0}\in\mathbb{R}^{(d+kd_p+d_e)}$, $\mathbf{0}_L\in\mathbb{R}^{(h-1)d}$, $\mathbf{0}_R\in\mathbb{R}^{d_\mathcal{T}-(d+kd_p+d_e)-hd}$ are zero vectors.

Based on the results, we compute the MSA with skip connection $\mathbf{H} = \mathbf{X}' + \textnormal{MSA}(\mathbf{X}')$ (Eq.~\eqref{eqn:transformer_residual_msa}):
\begin{align}
    \mathbf{H}_\mathbf{i}
    &=  \mathbf{X}'_\mathbf{i} + \textnormal{MSA}(\mathbf{X}')_\mathbf{i}\\
    &= \left[
    \mathbf{X}_\mathbf{i},\mathbf{P}_{i_1},...,\mathbf{P}_{i_k},\mathbf{E}^{\gamma^\mathbf{i}},\mathbf{0}_1
    \right]\nonumber\\
    &+ \left[
    \mathbf{0}_2,
    \sum_\mathbf{j}{
    \frac{\mathbf{B}_{\mathbf{i},\mathbf{j}}^{\mu_1}}
    {\sum_\mathbf{j}\mathbf{B}_{\mathbf{i},\mathbf{j}}^{\mu_1}}
    \mathbf{X}_\mathbf{j}w_{\mu_1}
    },
    ...,
    \sum_\mathbf{j}{
    \frac{\mathbf{B}_{\mathbf{i},\mathbf{j}}^{\mu_{\textnormal{bell}(2k)}}}
    {\sum_\mathbf{j}\mathbf{B}_{\mathbf{i},\mathbf{j}}^{\mu_{\textnormal{bell}(2k)}}}
    \mathbf{X}_\mathbf{j}w_{\mu_{\textnormal{bell}(2k)}}
    }
    \right]\\
    & = \left[
    \mathbf{X}_\mathbf{i},\mathbf{P}_{i_1},...,\mathbf{P}_{i_k},\mathbf{E}^{\gamma^\mathbf{i}},
    \sum_\mathbf{j}{
    \frac{\mathbf{B}_{\mathbf{i},\mathbf{j}}^{\mu_1}}
    {\sum_\mathbf{j}\mathbf{B}_{\mathbf{i},\mathbf{j}}^{\mu_1}}
    \mathbf{X}_\mathbf{j}w_{\mu_1}
    },
    ...,
    \sum_\mathbf{j}{
    \frac{\mathbf{B}_{\mathbf{i},\mathbf{j}}^{\mu_{\textnormal{bell}(2k)}}}
    {\sum_\mathbf{j}\mathbf{B}_{\mathbf{i},\mathbf{j}}^{\mu_{\textnormal{bell}(2k)}}}
    \mathbf{X}_\mathbf{j}w_{\mu_{\textnormal{bell}(2k)}}
    }
    \right],
\end{align}
where $\mathbf{0}_1\in\mathbb{R}^{d_\mathcal{T}-(d+kd_p+d_e)}$, $\mathbf{0}_2\in\mathbb{R}^{(d+kd_p+d_e)}$ are zero vectors.

We use feedforward MLP (Eq.~\eqref{eqn:transformer_residual_mlp}) to denormalize and combine the result.
Specifically, we make the elementwise MLP approximate following $f:\mathbb{R}^{d_\mathcal{T}}\to\mathbb{R}^{d_\mathcal{T}}$ based on universal approximation~\cite{hanin2017approximating, hornik1989multilayer}:
\begin{align}
    f(\mathbf{H}_\mathbf{i})_j &= 
    \left\{\begin{array}{ll}
        - \mathbf{H}_{\mathbf{i},j}
        + \sum_{h\in[\textnormal{bell}(2k)]}{g(\mathbf{H}_\mathbf{i})_h\mathbf{H}_{\mathbf{i},j+J}}
        + b(\mathbf{H}_\mathbf{i})_j
        & \text{\footnotesize{$j \leq d$}} \\
        0
        & \text{\footnotesize{$d < j \leq (d + kd_p + d_e)$}} \\
        -\mathbf{H}_{\mathbf{i},j}
        & \text{\footnotesize{$j > (d + kd_p + d_e)$}}
    \end{array}\right.,\label{eqn:elementwise_mlp_approximation}\\
    g(\mathbf{H}_\mathbf{i})_h &= \sum_{\mathbf{j}}\mathbf{B}_{\mathbf{i},\mathbf{j}}^{\mu_h},\\
    b(\mathbf{H}_\mathbf{i})_j &= (b_{\gamma^{\mathbf{i}}})_j = (\sum_\gamma{\mathbf{C}_\mathbf{i}^\gamma}b_\gamma)_j,
\end{align}
where $J = (d+kd_p+d_e)+(h-1)d$, and $b_{\gamma_1},...,b_{\gamma_{\textnormal{bell}(k)}}$ are biases of the given equivariant linear layer $L_{k\to k}$ with corresponding basis tensors $\mathbf{C}^{\gamma_1}, ..., \mathbf{C}^{\gamma_{\textnormal{bell}(k)}}$ (Eq.~\eqref{eqn:equivariant_linear_layer}).

Within the function $f$, the auxiliary function $g:\mathbb{R}^{d_\mathcal{T}}\to\mathbb{R}^{\textnormal{bell}(2k)}$ computes head-wise attention denormalization factor\footnote{
Note that the $g(\mathbf{H}_\mathbf{i})_h$ gives $0$ for all $\mathbf{i}$ that $\sum_\mathbf{j}\mathbf{B}^{\mu_h}_{\mathbf{i},\mathbf{j}}=0$, which automatically handles the corner case as discussed at footnote~\ref{footnote:zero_row_approximation} and footnote~\ref{footnote:zero_row_approximation_recap}.
}
and $b:\mathbb{R}^{d_\mathcal{T}}\to\mathbb{R}^d$ computes bias.
As $n$ and $k$ are fixed constants, the outputs $g(\mathbf{H}_\mathbf{i})$ and $b(\mathbf{H}_\mathbf{i})$ only depend on the equivalence class $\gamma^\mathbf{i}$ of $\mathbf{i}$.
We note that the functions $g$ and $b$ can deduce the equivalence class from the input $\mathbf{H}_\mathbf{i}$, by extracting the type identifier $\mathbf{E}^{\gamma^{\mathbf{i}}}=\mathbf{H}_\mathbf{i}^\top[\mathbf{0}_3,\mathbf{I},\mathbf{0}_4]$ with $\mathbf{I}\in\mathbb{R}^{d_e\times d_e}$ an identity matrix and $\mathbf{0}_3\in\mathbb{R}^{d+kd_p}$, $\mathbf{0}_4\in\mathbb{R}^{\textnormal{bell}(2k)d}$ zero matrices.

Based on the results, we compute the feedforward MLP with skip connection $\mathcal{T}(\mathbf{X}')=\mathbf{H} + \textnormal{MLP}(\mathbf{H})$ (Eq.~\eqref{eqn:transformer_residual_mlp}), which is the output of Transformer layer $\mathcal{T}$:
\begin{align}
    \mathcal{T}(\mathbf{X}')_\mathbf{i}
    &= \mathbf{H}_\mathbf{i} + \textnormal{MLP}(\mathbf{H})_\mathbf{i}\\
    &= \mathbf{H}_\mathbf{i} + f(\mathbf{H}_\mathbf{i})\\
    &= \left[
    \mathbf{X}_\mathbf{i},
    \mathbf{P}_{i_1},...,\mathbf{P}_{i_k},
    \mathbf{E}^{\gamma^\mathbf{i}},
    \mathbf{S}_\mathbf{i}^1,
    ...,
    \mathbf{S}_\mathbf{i}^{\textnormal{bell}(2k)}
    \right]\nonumber\\
    &+ \left[
    -\mathbf{X}_\mathbf{i} + \sum_{h\in[\textnormal{bell}(2k)]}\sum_\mathbf{j}\mathbf{B}^{\mu_h}_{\mathbf{i},\mathbf{j}}\mathbf{X}_\mathbf{j}w_{\mu_h}
    + \sum_\gamma\mathbf{C}^\gamma_\mathbf{i}b_\gamma,
    \mathbf{0}_5,
    - \mathbf{S}_\mathbf{i}^1,
    ...,
    - \mathbf{S}_\mathbf{i}^{\textnormal{bell}(2k)}
    \right],\\
    &= \left[ \sum_{h\in[\textnormal{bell}(2k)]}\sum_\mathbf{j}\mathbf{B}^{\mu_h}_{\mathbf{i},\mathbf{j}}\mathbf{X}_\mathbf{j}w_{\mu_h}
    + \sum_\gamma\mathbf{C}^\gamma_\mathbf{i}b_\gamma,
    \mathbf{P}_{i_1},...,\mathbf{P}_{i_k},
    \mathbf{E}^{\gamma^\mathbf{i}},
    \mathbf{0}_6
    \right],\label{eqn:transformer_layer_output}
\end{align}
where we write $\mathbf{S}_\mathbf{i}^h = \sum_\mathbf{j}{\frac{\mathbf{B}_{\mathbf{i},\mathbf{j}}^{\mu_h}}{\sum_\mathbf{j}\mathbf{B}_{\mathbf{i},\mathbf{j}}^{\mu_h}}\mathbf{X}_\mathbf{j}w_{\mu_h}}$ and $\mathbf{0}_5\in\mathbb{R}^{kd_p+d_e}, \mathbf{0}_6\in\mathbb{R}^{(d_\mathcal{T}-(d+kd_p+d_e))}$ are zeros.

In Eq.~\eqref{eqn:transformer_layer_output}, note that the Transformer layer $\mathcal{T}(\mathbf{X}')_\mathbf{i}$ only updates the first $d$ channels of $\mathbf{X}'_\mathbf{i}$ from $\mathbf{X}_\mathbf{i}$ to $\sum_{\mu}\sum_\mathbf{j}\mathbf{B}^{\mu}_{\mathbf{i},\mathbf{j}}\mathbf{X}_\mathbf{j}w_{\mu} + \sum_\gamma\mathbf{C}^\gamma_\mathbf{i}b_\gamma$.
Therefore, with a simple projection $w^{out} = [\mathbf{I};\mathbf{0}]\in\mathbb{R}^{d_\mathcal{T}\times d}$ where $\mathbf{I}\in\mathbb{R}^{d\times d}$ is an identity matrix and $\mathbf{0}\in\mathbb{R}^{(d_\mathcal{T}-d)\times d}$ is a matrix filled with zeros, we can select the first $d$ channels of the output and finally obtain $\mathcal{T}(\mathbf{X}')w^{out}=L_{k\to k}(\mathbf{X})$.

In conclusion, a Transformer layer with $\textnormal{bell}(2k)$ self-attention heads that operates on augmented $\mathbf{X}'$ can approximate any given $L_{k\to k}(\mathbf{X})$ to arbitrary precision.
\end{proof}

\subsubsection{Proof of Theorem~\ref{thm:approximation_order_k_ign} (Section~\ref{sec:theoretical_results})}
\begingroup
\def\thethm{\ref{thm:approximation_order_k_ign}}
\begin{thm}
For all $\mathbf{X}\in\mathbb{R}^{n^k\times d}$ and their augmentation $\mathbf{X}^{in}$, a Transformer composed of $T$ layers that operates on $\mathbf{X}^{in}w^{in}$ followed by sum-pooling and MLP can approximate an $k$-IGN $F_k(\mathbf{X})$ (Definition~\ref{defn:invariant_graph_network}) to arbitrary precision.
\end{thm}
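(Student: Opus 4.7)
The plan is to compose $T$ Transformer layers followed by sum-pooling and a final MLP, mirroring the architecture of the $k$-IGN in Definition~\ref{defn:invariant_graph_network}. First, I would apply Theorem~\ref{thm:approximation_equivariant_layer} iteratively: the $t$-th Transformer layer approximates the composition $\sigma \circ L_{k\to k}^{(t)}$, where the elementwise activation $\sigma$ is absorbed into the token-wise MLP that follows multihead self-attention (this is justified by universal approximation of MLPs~\cite{hornik1989multilayer, hanin2017approximating}, since the proof of Theorem~\ref{thm:approximation_equivariant_layer} already constructs an MLP and one can compose $\sigma$ with it on the first $d_t$ coordinates). A crucial observation is that the construction in the proof of Theorem~\ref{thm:approximation_equivariant_layer} (see Eq.~\eqref{eqn:transformer_layer_output}) preserves the node identifiers $\mathbf{P}_{i_1},\ldots,\mathbf{P}_{i_k}$ and type identifier $\mathbf{E}^{\gamma^{\mathbf{i}}}$ in reserved channels of the output, so the next Transformer layer can reuse these auxiliary inputs to approximate $L_{k\to k}^{(t+1)}$ by exactly the same construction. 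After $T$ stacked layers, the first $d_T$ channels approximate $L_{k\to k}^{(T)}\circ\sigma\circ\cdots\circ\sigma\circ L_{k\to k}^{(1)}(\mathbf{X})$ to arbitrary precision.

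Second, I would implement the invariant layer $L_{k\to 0}$ and the final outer MLP using sum-pooling followed by an MLP. Writing $L_{k\to 0}(\mathbf{Y}) = \sum_{\mu}\sum_{\mathbf{j}}\mathbf{B}^{\mu}_{\mathbf{j}}\mathbf{Y}_{\mathbf{j}}w_{\mu}+\sum_\lambda b_\lambda$, the order-$k$ basis tensors $\{\mathbf{B}^\mu\}$ have disjoint supports that partition $[n]^k$ by equivalence class (Definition~\ref{defn:basis_tensor}), so for each token $\mathbf{j}$ there is a unique $\mu(\mathbf{j})=\gamma^{\mathbf{j}}$ with $\mathbf{B}^{\mu(\mathbf{j})}_{\mathbf{j}}=1$. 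Because the preserved type identifier $\mathbf{E}^{\gamma^{\mathbf{j}}}$ uniquely encodes $\mu(\mathbf{j})$, a token-wise MLP applied just before pooling can approximate, to arbitrary precision, the type-conditioned linear map $\mathbf{Y}_{\mathbf{j}}\mapsto \mathbf{Y}_{\mathbf{j}}w_{\mu(\mathbf{j})}$; this MLP can be realized either by an extra Transformer layer (using only its feedforward sublayer with zeroed attention) or as part of the post-pooling pipeline's preprocessing. Sum-pooling then gives $\sum_{\mathbf{j}}\mathbf{Y}_{\mathbf{j}}w_{\mu(\mathbf{j})}=\sum_\mu\sum_\mathbf{j}\mathbf{B}^\mu_\mathbf{j}\mathbf{Y}_\mathbf{j}w_\mu$, and the final MLP after pooling absorbs the constant bias $\sum_\lambda b_\lambda$ and approximates the outer MLP of the $k$-IGN.

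The main obstacle will be controlling the propagation of approximation errors through the $T$ Transformer layers, since each individual approximation guaranteed by Theorem~\ref{thm:approximation_equivariant_layer} and by MLP universal approximation is only accurate to arbitrary precision, not exact. I would handle this with a standard compact-domain continuity argument: assume inputs $\mathbf{X}$ lie in a compact set $K\subset\mathbb{R}^{n^k\times d_0}$ (which implies the intermediate activations of the target $k$-IGN also lie in a compact set by continuity of linear layers and $\sigma$), and inductively back-propagate a target tolerance $\epsilon$ through each layer using Lipschitz bounds on $L_{k\to k}^{(t)}$ and $\sigma$ to determine the required per-layer precision. Applying Theorem~\ref{thm:approximation_equivariant_layer} at each level with this refined tolerance, followed by universal approximation for the pooling-side MLPs, yields the desired uniform approximation of $F_k$ on $K$. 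This is the same strategy used in prior IGN universality arguments~\cite{maron2019on}, and combined with the per-layer construction above, it completes the proof.
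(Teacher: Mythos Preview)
Your proposal is correct and follows essentially the same route as the paper: iterate Theorem~\ref{thm:approximation_equivariant_layer} across the $T$ layers with the activation $\sigma$ absorbed into each layer's feedforward MLP, rely on the preserved node and type identifiers (Eq.~\eqref{eqn:transformer_layer_output}) to chain the constructions, and use the type identifier to resolve the per-equivalence-class weights of $L_{k\to 0}$ before sum-pooling. Your explicit error-propagation argument via compactness and Lipschitz bounds is a welcome addition that the paper leaves implicit.

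The one point to tighten is where your type-conditioned token-wise map $\mathbf{Y}_\mathbf{j}\mapsto \mathbf{Y}_\mathbf{j}w_{\mu(\mathbf{j})}$ lives. The statement fixes the architecture as exactly $T$ Transformer layers followed directly by sum-pooling and an MLP, so neither of your two suggested placements fits: an ``extra Transformer layer'' makes $T{+}1$ layers, and a ``post-pooling preprocessing'' step is no longer token-wise. The paper resolves this by folding the routing into the feedforward MLP of the $T$-th Transformer layer itself (Eq.~\eqref{eqn:ign_last_elementwise_mlp_approximation}): that MLP writes $L_{k\to k}^{(T)}(\mathbf{X})_\mathbf{i}$ into $\textnormal{bell}(k)$ class-specific channel blocks (selected by $\mathbf{C}^{\gamma_a}_\mathbf{i}$), so that after sum-pooling the final MLP can apply each $w_{\mu_a}$ to its own block and then the outer $\textnormal{MLP}_k$. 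Your pre-pool weighting variant is equally valid mathematically, but it too must be absorbed into the $T$-th layer's feedforward MLP to respect the stated architecture.
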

\addtocounter{thm}{-1}
\endgroup
\begin{proof}
We continue from the proof of Theorem~\ref{thm:approximation_equivariant_layer}, and assume that each Transformer layer $\mathcal{T}$ can approximate a given $L_{k\to k}$ by only updating the first $d$ channels.

Then, based on Theorem~\ref{thm:approximation_equivariant_layer} we assume the following for each $t<T$:
\begin{align}
    \mathcal{T}^{(t)}(\mathbf{X}')_\mathbf{i} &=
    \left[
    \sigma(L_{k\to k}^{(t)}(\mathbf{X}))_\mathbf{i},
    \mathbf{P}_{i_1},...,\mathbf{P}_{i_k},
    \mathbf{E}^{\gamma^\mathbf{i}},
    \mathbf{0}_6
    \right]
\end{align}
where $\mathbf{X}'_\mathbf{i}=[\mathbf{X}_\mathbf{i}, \mathbf{P}_{i_1}, ...,\mathbf{P}_{i_k}, \mathbf{E}^{\gamma^\mathbf{i}},\mathbf{0}_6]$.
While Theorem~\ref{thm:approximation_equivariant_layer} gives $L_{k\to k}^{(t)}(\mathbf{X})$ in the first $d$ channels, we add elementwise activation $\sigma(\cdot)$ by absorbing it into the elementwise MLP in Eq.~\eqref{eqn:elementwise_mlp_approximation}.
Then, leveraging the property that each Transformer layer $\mathcal{T}^{(t)}$ only updates the first $d$ channels, we stack $T-1$ Transformer layers $\mathcal{T}^{(1)}$, ..., $\mathcal{T}^{(T-1)}$ and obtain the following:
\begin{align}
    \mathcal{T}^{(T-1)}\circ...\circ\mathcal{T}^{(1)}(\mathbf{X}')_\mathbf{i} =
    \left[
    \sigma\circ L_{k\to k}^{(T-1)}\circ\sigma\circ...\circ\sigma\circ L_{k\to k}^{(1)}(\mathbf{X})_\mathbf{i},
    \mathbf{P}_{i_1},...,\mathbf{P}_{i_k},
    \mathbf{E}^{\gamma^\mathbf{i}},
    \mathbf{0}_6
    \right].\label{eqn:ign_1_to_t_minus_1}
\end{align}

For the last layer $\mathcal{T}^{(T)}$, we follow the procedure in the proof of Theorem~\ref{thm:approximation_equivariant_layer} to approximate $L_{k\to k}^{(T)}$, but slightly tweak Eq.~\eqref{eqn:elementwise_mlp_approximation} so that elementwise MLP copies each output entry $L_{k\to k}(\mathbf{X})^{(T)}_\mathbf{i}$ in appropriate reserved channels.
Specifically, we let the elementwise MLP approximate following $f'$:
\begin{align}
    f'(\mathbf{H}_\mathbf{i})_j &= 
    \left\{\begin{array}{ll}
        - \mathbf{H}_{\mathbf{i},j}
        & \text{\footnotesize{$j \leq D$}} \\
        -\mathbf{H}_{\mathbf{i},j} + \mathbf{C}_\mathbf{i}^{\gamma_a}\mathbf{F}_{\mathbf{i},j-(D+(a-1)d)}
        & \text{\footnotesize{$D + (a-1)d < j \leq D + ad$ for all $a\in[\textnormal{bell}(k)]$}} \\
        -\mathbf{H}_{\mathbf{i},j}
        & \text{\footnotesize{$D + \textnormal{bell}(k)d < j$}}
    \end{array}\right.,\label{eqn:ign_last_elementwise_mlp_approximation}
\end{align}
where $D=(d + kd_p + d_e)$ and we abbreviate $\mathbf{F}_{\mathbf{i},j} = \sum_{h\in[\textnormal{bell}(2k)]}{g(\mathbf{H}_\mathbf{i})_h\mathbf{H}_{\mathbf{i},j+J}}+ b(\mathbf{H}_\mathbf{i})_j$ with $J, g, b$ defined as same as in Eq.~\eqref{eqn:elementwise_mlp_approximation}.
Recall that $\mathbf{C}^{\gamma_a}_\mathbf{i}=1$ if and only if $\mathbf{i}\in\gamma_a$.
Therefore, with Eq.~\eqref{eqn:ign_last_elementwise_mlp_approximation}, we are simply duplicating each output entry $\mathbf{F}_{\mathbf{i}}=L_{k\to k}^{(T)}(\mathbf{X})_\mathbf{i}$ to spare channel indices reserved for the equivalence class of $\mathbf{i}$ ($\gamma_a$ that $\mathbf{i}\in\gamma_a$).

With the choice of $\mathcal{T}^{(T)}$, the layer output $\mathcal{T}^{(T)}(\mathbf{X}')=\mathbf{H} + \textnormal{MLP}(\mathbf{H})$ (Eq.~\eqref{eqn:transformer_residual_mlp}) is computed as:
\begin{align}
    \mathcal{T}^{(T)}(\mathbf{X}')_\mathbf{i}
    &= \mathbf{H}_\mathbf{i} + \textnormal{MLP}(\mathbf{H})_\mathbf{i}\\
    &= \mathbf{H}_\mathbf{i} + f'(\mathbf{H}_\mathbf{i})\\
    &= \left[\mathbf{0}_7,
    \mathbf{C}_\mathbf{i}^{\gamma_1}L_{k\to k}^{(T)}(\mathbf{X})_\mathbf{i},
    ...,
    \mathbf{C}_\mathbf{i}^{\gamma_\textnormal{bell}(k)}L_{k\to k}^{(T)}(\mathbf{X})_\mathbf{i},
    \mathbf{0}_8
    \right],\label{eqn:ign_t}
\end{align}
where $\mathbf{0}_7\in\mathbb{R}^{(d+kd_p+d_e)}, \mathbf{0}_8\in\mathbb{R}^{d_\mathcal{T}-(d+kd_p+d_e)-\textnormal{bell}(k)d}$ are zero vectors.

Then, by applying $\mathcal{T}^{(T)}$ (Eq.~\eqref{eqn:ign_t}) on top of $\mathcal{T}^{(T-1)}\circ...\circ\mathcal{T}^{(1)}$ (Eq.~\eqref{eqn:ign_1_to_t_minus_1}), we obtain the following:
\begin{align}
    \mathcal{T}^{(T)}\circ...\circ\mathcal{T}^{(1)}(\mathbf{X}')_\mathbf{i} =
    \left[
    \mathbf{0}_7,
    \mathbf{C}_\mathbf{i}^{\gamma_1}\mathbf{Y}_\mathbf{i},
    ...,
    \mathbf{C}_\mathbf{i}^{\gamma_\textnormal{bell}(k)}\mathbf{Y}_\mathbf{i},
    \mathbf{0}_8
    \right].
\end{align}
where we abbreviate $\mathbf{Y} = L_{k\to k}^{(T)}\circ\sigma\circ...\circ\sigma\circ L_{k\to k}^{(1)}(\mathbf{X})$.

The remaining step is to utilize $\textnormal{MLP}\circ\textnormal{sumpool}$ to approximate $\textnormal{MLP}_k\circ L_{k\to 0}$ that tops $F_k$.
By sum-pooling over all indices $\mathbf{i}$, we obtain the following:
\begin{align}
    \textnormal{sumpool}\circ\mathcal{T}^{(T)}\circ...\circ\mathcal{T}^{(1)}(\mathbf{X}') =
    \left[
    \mathbf{0}_7,
    \sum_\mathbf{i}\mathbf{C}_\mathbf{i}^{\gamma_1}\mathbf{Y}_\mathbf{i},
    ...,
    \sum_\mathbf{i}\mathbf{C}_\mathbf{i}^{\gamma_\textnormal{bell}(k)}\mathbf{Y}_\mathbf{i},
    \mathbf{0}_8
    \right].\label{eqn:sum_pool}
\end{align}

Now, we let the final $\textnormal{MLP}$ approximate the following function $f'':\mathbb{R}^{d_\mathcal{T}}\to\mathbb{R}^d$:
\begin{align}
    f''(\mathbf{X}) = \textnormal{MLP}_k\left(\sum_{a\in[\textnormal{bell}(k)]}\mathbf{X}^aw_{\mu_a}+b_f\right)\textnormal{ where }\mathbf{X}^{a}_j = \mathbf{X}_{D+(a-1)d+j}\textnormal{ for }j\in[d],
\end{align}
where $w_{\mu_1}, ..., w_{\mu_{\textnormal{bell}(k)}}\in\mathbb{R}^{d\times d}$ and $b_f\in\mathbb{R}^d$ are the weights and bias of the given invariant linear layer $L_{k\to 0}$, and each $\mathbf{X}^a\in\mathbb{R}^d$ is a chunk that coincides with reserved channels in Eq.~\eqref{eqn:ign_last_elementwise_mlp_approximation}.
By plugging in the sum-pooled representation in Eq.~\eqref{eqn:sum_pool}, we finally obtain the following:
\begin{align}
    \textnormal{MLP}\circ\textnormal{sumpool}\circ\mathcal{T}^{(T)}\circ...\circ\mathcal{T}^{(1)}(\mathbf{X}') &= f''\circ\textnormal{sumpool}\circ\mathcal{T}^{(T)}\circ...\circ\mathcal{T}^{(1)}(\mathbf{X}')\\
    &= 
    \textnormal{MLP}_k\left(\sum_{a\in[\textnormal{bell}(k)]}\sum_\mathbf{i}\mathbf{C}_\mathbf{i}^{\gamma_a}\mathbf{Y}_\mathbf{i}w_{\mu_a}+b_f\right)\\
    &= \textnormal{MLP}_k\circ L_{k\to 0}(\mathbf{Y})\\
    &= \textnormal{MLP}_k\circ L_{k\to 0}\circ L_{k\to k}^{(T)}\circ\sigma\circ...\circ\sigma\circ L_{k\to k}^{(1)}(\mathbf{X})\\
    &= F_k(\mathbf{X}),
\end{align}
where the last equality comes from Definition~\ref{defn:invariant_graph_network}.

Taken together, we arrive at the conclusion that $\textnormal{MLP}\circ\textnormal{sumpool}\circ\mathcal{T}^{(T)}\circ...\circ\mathcal{T}^{(1)}(\mathbf{X}')$ can approximate $F_k(\mathbf{X})$ to arbitrary precision.
\end{proof}

\subsection{Additional Discussion on Linear Attention for Graph Transformers (Section~\ref{sec:related_work})}\label{sec:apdx_extended_related_work}
We provide an additional discussion on related work, specifically on why Graphormer~\cite{ying2021do}, based on fully-connected self-attention on nodes, is not compatible with many linear attention methods that reduce the memory complexity from $\mathcal{O}(n^2)$ to $\mathcal{O}(n)$.
A range of prior graph Transformers including EGT~\cite{hussain2021edge}, GRPE~\cite{park2022grpe}, and SAN~\cite{kreuzer2021rethinking} can be analyzed analogously.
Let us first remind self-attention with query, key, value $\mathbf{Q},\mathbf{K},\mathbf{V}\in\mathbb{R}^{n\times d}$ and self-attention matrix $\boldsymbol{\alpha}\in\mathbb{R}^{n\times n}$:
\begin{align}
    \text{Att}(\mathbf{Q},\mathbf{K},\mathbf{V})_i = \sum_j\boldsymbol{\alpha}_{ij}\mathbf{V}_j\text{ where }\boldsymbol{\alpha}_{ij} = \frac{\text{exp}(\mathbf{Q}_i^\top\mathbf{K}_j/\sqrt{d})}{\sum_k\text{exp}(\mathbf{Q}_i^\top\mathbf{K}_k/\sqrt{d})}.
\end{align}

For graphs, as self-attention on nodes alone cannot recognize the edge connectivity, Graphormer incorporates the structural information of an input graph $\mathcal{G}$ into the self-attention matrix $\boldsymbol{\alpha}^\mathcal{G}\in\mathbb{R}^{n\times n}$ via attention bias matrix $\mathbf{b}^\mathcal{G}\in\mathbb{R}^{n\times n}$ (referred to as the edge and spatial encoding) as the following:
\begin{align}
    \boldsymbol{\alpha}^\mathcal{G}_{ij} = \frac{\text{exp}(\mathbf{Q}_i^\top\mathbf{K}_j/\sqrt{d} +\mathbf{b}_{ij}^\mathcal{G})}{\sum_k\text{exp}(\mathbf{Q}_i^\top\mathbf{K}_k/\sqrt{d} +\mathbf{b}_{ij}^\mathcal{G})}.\label{eqn:self_attention_bias}
\end{align}

Unfortunately, this modification immediately precludes the adaptation of many efficient attention techniques developed for pure self-attention.
As representative examples, we take Performer~\cite{choromanski2021rethinking}, Linear Transformer~\cite{katharopoulos2020transformers}, Efficient Transformer~\cite{shen2021efficient}, and Random Feature Attention~\cite{peng2021random}.
The methods are based on kernelization of the $\text{Att}(\cdot)$ operator as the following:
\begin{align}
    \text{Att}_\phi(\mathbf{Q},\mathbf{K},\mathbf{V})_i &= \sum_j\frac{\phi(\mathbf{Q}_i)^\top\phi(\mathbf{K}_j)}{\sum_k\phi(\mathbf{Q}_i)^\top\phi(\mathbf{K}_k)}\mathbf{V}_j = \frac{\phi(\mathbf{Q}_i)^\top\left(\sum_j\phi(\mathbf{K}_j)\mathbf{V}_j^\top\right)}{\phi(\mathbf{Q}_i)^\top\left(\sum_k\phi(\mathbf{K}_k)\right)}.\label{eqn:kernel_attention}
\end{align}
As the above factorization of $\text{exp}(\cdot)$ into a pairwise dot product eliminates the need to explicitly compute the attention matrix, it reduces both time and memory cost of self-attention to $\mathcal{O}(n)$.
Yet, in Eq.~\eqref{eqn:self_attention_bias}, since the bias $\mathbf{b}_{ij}^\mathcal{G}$ is added to the dot product \emph{before} $\text{exp}(\cdot)$, it is required that the full attention matrix $\boldsymbol{\alpha}^\mathcal{G}\in\mathbb{R}^{n\times n}$ is always explicitly computed.
Thus, Graphormer and related variations are unable to utilize the method and are bound to $\mathcal{O}(n^2)$.

While above discussion regards kernelization, a wide range of other efficient Transformers, including Set Transformer~\cite{lee2019set}, LUNA~\cite{ma2021luna}, Linformer~\cite{wang2020linformer}, Nyströmformer~\cite{xiong2021nystromformer}, Perceiver~\cite{andrew2021perceiver}, and Perceiver-IO~\cite{andrew2021perceiverio} are not applicable to Graphormer due to similar reasons.

\subsection{Experimental Details (Section~\ref{sec:experiments})}\label{sec:apdx_experimental_details}
We provide detailed information on the datasets and models used in our experiments in Section~\ref{sec:experiments}.
Dataset statistics can be found in Table~\ref{table:dataset_statistics}.

\subsubsection{Implementation Details of Node and Type Identifiers}\label{sec:apdx_node_type_identifiers}
In most of our experiments on graph data $(k=2)$, we fix the Transformer encoder configuration and experiment with choices of node identifiers $\mathbf{P}\in\mathbb{R}^{n\times d_p}$ and type identifiers $\mathbf{E}\in\mathbb{R}^{2\times d_e}$ (Section~\ref{sec:methods}).

For type identifiers $\mathbf{E}$, we set $d_e$ equal to the hidden dimension $d$ of the main encoder $d_e=d$ and initialize and train them jointly with the model.

For orthonormal node identifiers $\mathbf{P}$, we use normalized orthogonal random features (ORFs) or Laplacian eigenvectors obtained as follows:
\begin{itemize}
    \item For orthogonal random features (ORFs), we use rows of random orthogonal matrix $\mathbf{Q}\in\mathbb{R}^{n\times n}$ obtained with QR decomposition of random Gaussian matrix $\mathbf{G}\in\mathbb{R}^{n\times n}$~\cite{yu2016orthogonal, choromanski2017the}.
    \item For Laplacian eigenvectors, we perform eigendecomposition of graph Laplacian matrix, \emph{i.e.}, rows of $\mathbf{U}$ from $\boldsymbol{\Delta}=\mathbf{I}-\mathbf{D}^{-1/2}\mathbf{A}\mathbf{D}^{-1/2}=\mathbf{U}^\top\boldsymbol{\Lambda}\mathbf{U}$, where $\mathbf{A}\in\mathbb{R}^{n\times n}$ is adjacency matrix, $\mathbf{D}$ is degree matrix, and $\boldsymbol{\Lambda}$, $\mathbf{U}$ correspond to eigenvalues and eigenvectors respectively~\cite{dwivedi2020benchmarking}.
\end{itemize}
The model expects $d_p$-dimensional node identifiers $\mathbf{P}\in\mathbb{R}^{n\times d_p}$, while ORF and Laplacian eigenvectors are $n$-dimensional.
To resolve this, if $n < d_p$, we zero-pad the channels.
If $n > d_p$, for ORF we randomly sample $d_p$ channels and discard the rest, and for Laplacian eigenvectors we use $d_p$ eigenvectors with the smallest eigenvalues following common practice~\cite{dwivedi2020benchmarking, lim2022sign, kreuzer2021rethinking}.

As the Laplacian eigenvectors are defined up to the factor $\pm1$ after normalized to unit length~\cite{dwivedi2020benchmarking}, we randomly flip their signs during training.
For PCQM4Mv2 (Section~\ref{sec:experiment_pcqm4mv2}), we apply random dropout on eigenvectors during training, similar to 2D channel dropout in ConvNets~\cite{tompson2015efficient}.
In our experiments with PCQM4Mv2, we find that both sign flip and eigenvector dropout work as effective regularizers and improves performance on validation data.

\subsubsection{Second-Order Equivariant Basis Approximation (Section~\ref{sec:experiment_synthetic})}\label{sec:apdx_experiment_details_synthetic}
\paragraph{Dataset}
\begin{table}[!t]
\caption{Statistics of the datasets.}
\label{table:dataset_statistics}
\begin{subtable}[h]{.5\linewidth}
    \caption{Statistics of Barab\'asi-Albert random graph dataset.}
    \centering
    \label{table:ba_dataset}
        \begin{tabular}{ll}
        \Xhline{2\arrayrulewidth}\\[-1em]
        \\[-1em] Dataset & Barab\'asi-Albert \\
        \\[-1em]\Xhline{2\arrayrulewidth}\\[-1em]
        \\[-1em] Size & 1280 \\
        \\[-1em] Average \# node & 14.9 \\
        \\[-1em] Average \# edge & 47.8 \\
        \\[-1em]\Xhline{2\arrayrulewidth}
    \end{tabular}
\end{subtable}\hfill
\begin{subtable}[h]{.5\linewidth}
    \caption{Statistics of PCQM4Mv2 dataset.}
    \centering
    \label{table:regression_dataset}
        \begin{tabular}{ll}
        \Xhline{2\arrayrulewidth}\\[-1em]
        \\[-1em] Dataset & PCQM4Mv2 \\
        \\[-1em]\Xhline{2\arrayrulewidth}\\[-1em]
        \\[-1em] Size & 3.7M \\
        \\[-1em] Average \# node & 14.1 \\
        \\[-1em] Average \# edge & 14.6 \\
        \\[-1em]\Xhline{2\arrayrulewidth}
    \end{tabular}
\end{subtable}
\end{table}
For the equivariant basis approximation experiment, we use a synthetic dataset containing Barab\'asi-Albert (BA) random graphs~\cite{barabasi1999emergence}.
With $\mathcal{U}$ denoting discrete uniform distribution, each graph is generated by first sampling the number of nodes $n\sim\mathcal{U}(10, 20)$ and the number for preferential attachment $k\sim\mathcal{U}(2, 3)$, then iteratively adding $n$ nodes by linking each new node to $k$ random previous nodes.
We do not utilize node or edge attributes and only use edge connectivity.
We generate 1152 graphs for training and 128 for testing.
Further dataset statistics is provided in Table~\ref{table:ba_dataset}.

\paragraph{Architecture}
Each model tested in Table~\ref{table:equivariant_basis_approximation_std} is a single multihead self-attention layer (Eq.~\eqref{eqn:transformer_residual_msa}) with hidden dimension $d=1024$, heads $H=\textnormal{bell}(2+2)=15$, and head dimension $d_H = 128$.
As for the node identifier dimension, we use $d_p=24$ for ORF and $d_p=20$ for Laplacian eigenvectors.

\paragraph{Experimental Setup}
We experiment with sparse or dense input graph representations.
For sparse input, we embed each graph with $n$ nodes and $m$ edges into $\mathbf{X}^{in}\in\mathbb{R}^{(n+m)\times(2d_p+d_e)}$.
For dense input, we use all $n^2$ pairwise edges and obtain $\mathbf{X}^{in}\in\mathbb{R}^{n^2\times(2d_p+d_e)}$, so that sparse edge connectivity is only used for obtaining Laplacian node identifiers.

For both sparse and dense inputs, we follow the standard procedure in Section~\ref{sec:methods} to use node and type identifiers to obtain $\mathbf{X}^{in}\in\mathbb{R}^{N\times(2d_p+d_e)}$ where $N=(n+m)$ or $n^2$, and project it to dimension $d$ with trainable projection $w^{in}$.
We also utilize a special token \texttt{[null]} with trainable embedding $\mathbf{X}_\texttt{[null]}\in\mathbb{R}^d$ (we shortly explain its use) to obtain the final input $[\mathbf{X}_\texttt{[null]};\mathbf{X}^{in}w^{in}]\in\mathbb{R}^{(1+N)\times d}$.

For an input $[\mathbf{X}_\texttt{[null]};\mathbf{X}^{in}w^{in}]$,
the goal is to supervise each of the $H=15$ self-attention heads with attention matrices $\boldsymbol{\alpha}^1,...,\boldsymbol{\alpha}^{15}$ to explicitly approximate row-normalized version of each equivariant basis $\mathbf{B}^{\mu_1}, ..., \mathbf{B}^{\mu_{15}}\in\mathbb{R}^{N\times N}$ (Eq.~\eqref{eqn:equivariant_linear_layer}) on the $N=(n+m)$ or $n^2$ input tokens (except \texttt{[null]}).
An issue in supervision is that for rows of $\mathbf{B}^\mu$ that only contains zeros, normalization is not defined.
To sidestep this, for such rows we simply supervise to attend to the special \texttt{[null]} token only.
For all other rows of $\mathbf{B}^\mu$ that contains nonzero entry, we supervise the model to ignore \texttt{[null]} token.

\paragraph{Training and Evaluation}
We train and evaluate all models with L2 loss between attention matrix $\boldsymbol{\alpha}^h$ and normalized basis tensor $\mathbf{B}^{\mu_h}$ (involving \texttt{[null]} token) averaged over heads $h=1,...,15$.
We train all models with AdamW optimizer~\cite{loshchilov2019decoupled} on 4 RTX 3090 GPUs each with 24GB.
For sparse inputs we use batch size 512, and for dense inputs we use batch size 256 due to increased memory cost.
We train all models for 3k steps (which takes about $\sim$1.5 hours) and apply linear learning rate warmup for 1k steps up to 1e-4 followed by linear decay to 0.
For all models, we use dropout rate of 0.1 on the input $[\mathbf{X}_\texttt{[null]};\mathbf{X}^{in}w^{in}]$ to prevent overfitting.

\subsubsection{Large-Scale Graph Learning (Section~\ref{sec:experiment_pcqm4mv2})}\label{sec:apdx_experiment_details_pcqm4mv2}
\paragraph{Dataset}
For large-scale learning, we use the PCQM4Mv2 quantum chemistry regression dataset from the OGB-LSC benchmark~\cite{hu2021ogb} that contains 3.7M molecular graphs.
Along with graph structure, we utilize both node and edge features \emph{e.g.}, atom and bond types following our standard procedure in Section~\ref{sec:methods}.
Dataset statistics is provided in Table~\ref{table:regression_dataset}.

\paragraph{Architecture}
All our models in Table~\ref{table:pcqm4mv2} (under \emph{Pure Transformers}) have the same encoder configuration following Graphormer~\cite{ying2021do}, with 12 layers, hidden dimension $d=768$, heads $H=32$, and head dimension $d_H=24$.
We adopt PreLN~\cite{xiong2020on} that places layer normalization before MSA layer (Eq.~\eqref{eqn:transformer_residual_msa}), MLP layer (Eq.~\eqref{eqn:transformer_residual_mlp}), and the final output projection after the last encoder layer.
We implement MLP (Eq.~\eqref{eqn:transformer_residual_mlp}) as a stack of two linear layers with GeLU nonlinearity~\cite{hendrycks2018gaussian} in between.
As for node identifier dimension, we use $d_p=64$ for ORF and $d_p=16$ for Laplacian eigenvectors.

As an additional GNN baseline, we run Graph Attention Network (GATv2)~\cite{velikovic2018graph, brody2022how} under several configurations.
For GAT and GAT-VN in Table~\ref{table:pcqm4mv2}, we use 5-layer GATv2 with hidden dimension 600 and a single attention head, having 6.7M parameters in total.
For GAT-VN (large), we use a 10-layer GATv2 with hidden dimension 1200 and a single attention head, having 55.2M parameters in total.
For GAT-VN and GAT-VN (large), we use virtual node that helps modeling global interaction~\cite{hu2021ogb}.

\paragraph{Training and Evaluation}
We mainly report and compare the Mean Absolute Error (MAE) on the validation data, and report MAE on the hidden test data if possible.
We train all models with L1 loss using AdamW optimizer~\cite{loshchilov2019decoupled} with gradient clipping at global norm 5.0.
We use batch size 1024 and train the models on 8 RTX 3090 GPUs with 24GB for $\sim$3 days.
We train our models for 1M iterations, and apply linear lr warmup for 60k iterations up to 2e-4 followed by linear decay to 0.
For our models in Table~\ref{table:pcqm4mv2} except TokenGT~(Lap)~+~Performer, we use the following regularizers:
\begin{itemize}
    \item Attention and MLP dropout rate 0.1
    \item Weight decay 0.1
    \item Stochastic depth~\cite{huang2016deep, steiner2021how} with linearly increasing layer drop rate, reaching 0.1 at last layer
    \item Eigenvector dropout rate 0.2 for TokenGT~(Lap) (see Appendix~\ref{sec:apdx_node_type_identifiers})
\end{itemize}

For TokenGT~(Lap)~+~Performer in Table~\ref{table:pcqm4mv2}, we load a trained model checkpoint of TokenGT~(Lap), change its self-attention to FAVOR+ kernelized attention of Performer~\cite{choromanski2021rethinking} that can provably accurately approximate softmax attention, and fine-tune it with AdamW optimizer for 0.1M training steps with 1k warmup iterations and cosine learning rate decay.
With batch size 1024 on 8 RTX 3090 GPUs, fine-tuning takes $\sim12$ hours.
We do not use stochastic depth and eigenvector dropout for fine-tuning.
For GAT baselines in Table~\ref{table:pcqm4mv2}, we use batch size 256 and train the models for 100 epochs with initial learning rate 0.001 decayed with a factor of 0.25 every 30 epochs.

\subsection{Additional Experimental Results (Section~\ref{sec:experiments})}\label{sec:apdx_additional_results}
\begin{figure}[!p]
\centering
    \rotatebox{270}{
        \begin{minipage}{0.99\textheight}
            \includegraphics[width=0.99\textheight]{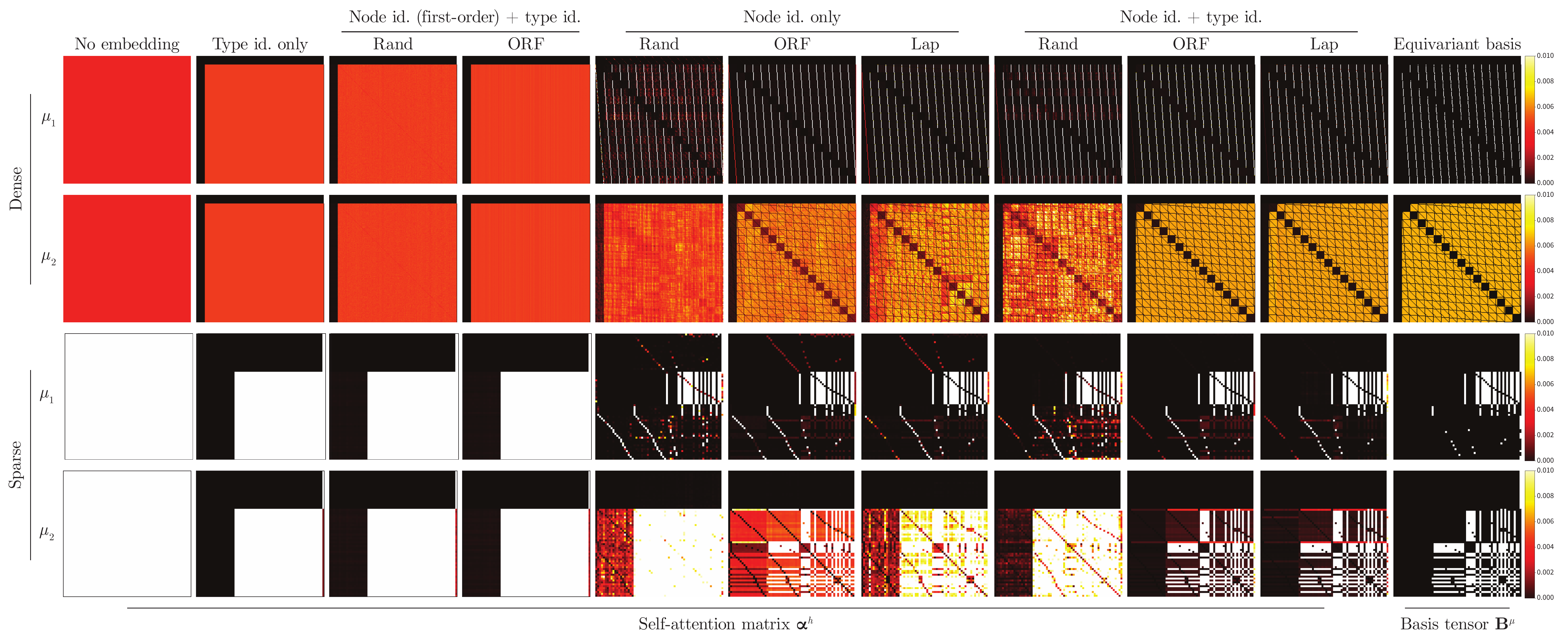}
            \caption{
            Self-attention maps learned under various node and type identifier configurations for two target equivariant basis tensors (out of 15), for both dense and sparse inputs.
            For better visualization, we clamp the entries by 0.01.
            Self-attention learns acute pattern coherent to equivariant basis when orthonormal node identifiers and type identifiers are provided both as input.
            }\label{fig:self_attention_maps_more}
        \end{minipage}
    }
\end{figure}
\begin{figure}[!t]
    \centering
    \includegraphics[width=0.7\textwidth]{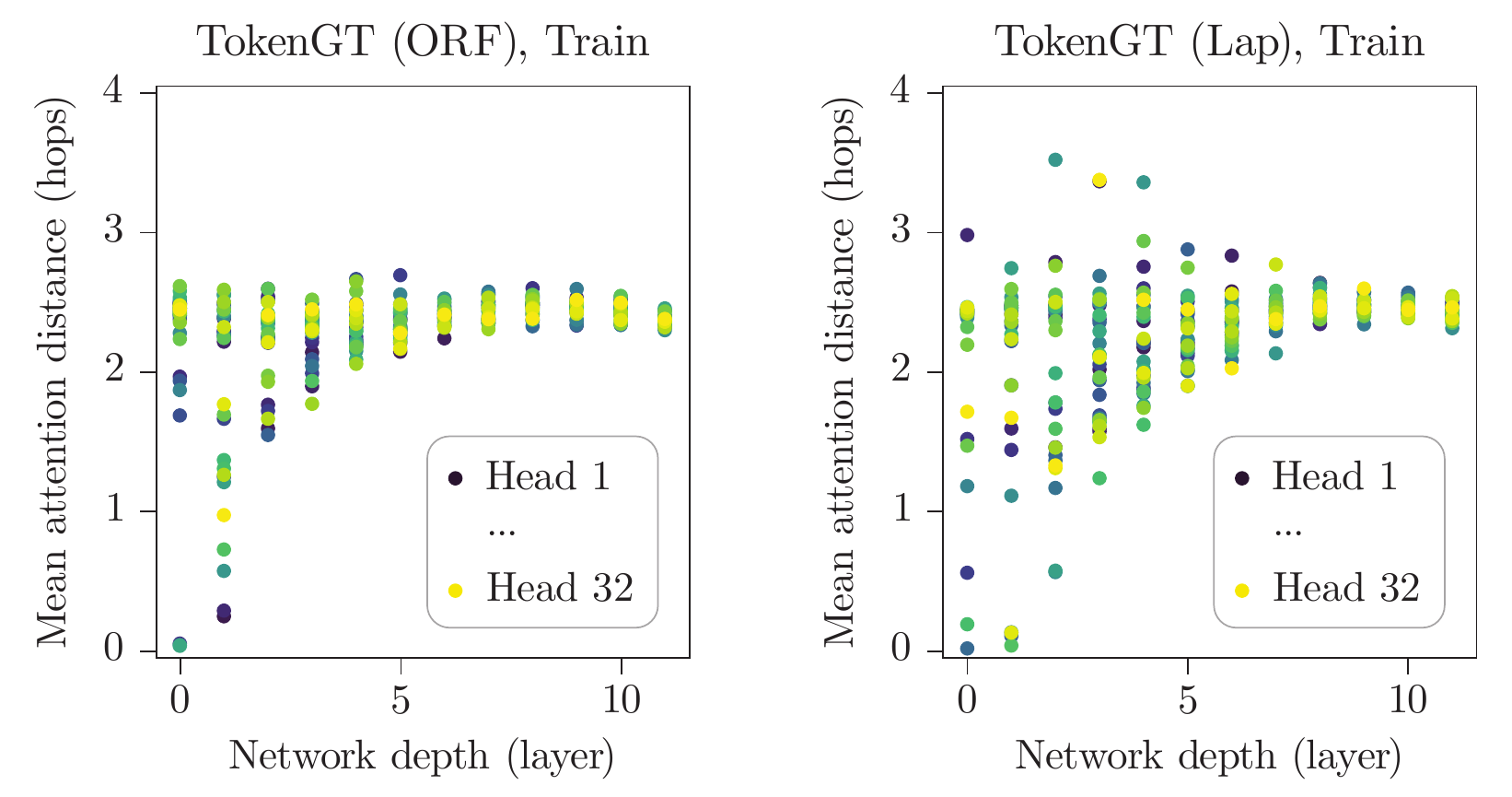}
    \vspace{-0.2cm}
    \caption{
    Attention distance by head and network depth, measured for entire PCQM4Mv2 training set.
    Each dot shows mean attention distance in hops across graphs of a head at a layer.
    }
    \label{fig:self_attention_distance_more}
\vspace{-0.2cm}
\end{figure}
We report additional experimental results and discussions that could not be included in the main text due to space restriction.

\subsubsection{Second-Order Equivariant Basis Approximation (Section~\ref{sec:experiment_synthetic})}\label{sec:apdx_additional_results_synthetic}
In addition to the Figure~\ref{fig:self_attention_maps} in the main text that shows learned self-attention maps for dense input, in Figure~\ref{fig:self_attention_maps_more}, we provide an extended visualization of self-attention maps for both dense and sparse inputs.
Consistent to Lemma~\ref{lemma:approximation_equivariant_basis} and Table~\ref{table:equivariant_basis_approximation_std}, self-attention achieves accurate approximation of equivariant basis only when both the orthonormal node identifiers (ORF or Lap) and type identifiers are given.

\subsubsection{Large-Scale Graph Learning (Section~\ref{sec:experiment_pcqm4mv2})}\label{sec:apdx_additional_results_pcqm4mv2}
In addition to the Figure~\ref{fig:self_attention_distance} in the main text that shows attention distance measured for the PCQM4Mv2 validation data, in Figure~\ref{fig:self_attention_distance_more}, we provide an extended figure of attention distance measured for the entire training set that contains $\sim$3M graphs.
Overall we find similar trends as analyzed in Section~\ref{sec:experiment_pcqm4mv2}.

\subsubsection{Transductive Node Classification on Large Graphs (Section~\ref{sec:experiments})}\label{sec:apdx_additional_results_transductive_node_classification}
%While we acknowledge that our empirical evaluation is conducted only on PCQM4Mv2, we would like to note that PCQM4Mv2 is one of the largest-scale graph dataset up to date containing 3.8M graphs~\cite{hu2021ogb}. This makes it one of the few suitable benchmark to test our model as Transformers are generally designed to work with extremely large-scale data~\cite{brown2020language, dosovitskiy2021animage}.
\begin{table}[!t]
\caption{Statistics of the transductive node classification datasets.}
\label{table:dataset_statistics_transductive}
\centering
\label{table:transductive_dataset}
\begin{adjustbox}{width=0.7\textwidth}
\begin{tabular}{lllllll}
    \Xhline{2\arrayrulewidth}\\[-1em]
    \\[-1em] Dataset & CS & Physics & Photo & Computers & Chameleon & Crocodile \\
    \\[-1em]\Xhline{2\arrayrulewidth}\\[-1em]
    \\[-1em] \# nodes & 18,333 & 34,493 & 7,650 & 13,752 & 2,277 & 11,631 \\
    \\[-1em] \# edges & 81,894 & 247,962 & 119,081 & 245,861 & 36,101 & 180,020 \\
    \\[-1em] \# classes & 15 & 5 & 8 & 10 & 6 & 6 \\
    \\[-1em]\Xhline{2\arrayrulewidth}
\end{tabular}
\end{adjustbox}
\end{table}
\begin{table}[!t]
\vspace{-0.2cm}
\caption{
Transductive node classification.
OOM denotes out-of-memory error on a 24GB RTX 3080 GPU.
We report aggregated test accuracy at best validation accuracy over 7 randomized runs.
}
\vspace{-0.2cm}
\centering
\begin{adjustbox}{width=0.99\textwidth}
\begin{tabular}{lcccccc}\label{table:transductive_node_classification}
    \\\Xhline{2\arrayrulewidth}\\[-1em]
    & CS & Physics & Photo & Computers & Chameleon & Crocodile \\
    \Xhline{2\arrayrulewidth}\\[-1em]
    GCN & 0.895 $\pm$ 0.004 & 0.932 $\pm$ 0.004 & 0.926 $\pm$ 0.008 & 0.873 $\pm$ 0.004 & 0.593 $\pm$ 0.01 & 0.660 $\pm$ 0.01 \\
    GAT & 0.893 $\pm$ 0.005 & 0.937 $\pm$ 0.01 & 0.947 $\pm$ 0.006 & \textbf{0.914 $\pm$ 0.002} & 0.632 $\pm$ 0.011 & 0.692 $\pm$ 0.017 \\
    GIN & 0.895 $\pm$ 0.005 & 0.886 $\pm$ 0.046 & 0.886 $\pm$ 0.017 & 0.362 $\pm$ 0.051 & 0.479 $\pm$ 0.027 & 0.515 $\pm$ 0.041 \\
    Graphormer & 0.791 $\pm$ 0.015 & OOM & 0.894 $\pm$ 0.004 & 0.814 $\pm$ 0.013 & 0.457 $\pm$ 0.011 & 0.489 $\pm$ 0.014 \\
    \Xhline{2\arrayrulewidth}\\[-1em]
    TokenGT (Near-ORF) + Performer & 0.882 $\pm$ 0.007 & 0.931 $\pm$ 0.009 & 0.872 $\pm$ 0.011 & 0.82 $\pm$ 0.019 & 0.568 $\pm$ 0.019 & 0.583 $\pm$ 0.024 \\
    TokenGT (Lap) + Performer & 0.902 $\pm$ 0.004 & 0.941 $\pm$ 0.007 & 0.919 $\pm$ 0.009 & 0.86 $\pm$ 0.012 & 0.637 $\pm$ 0.032 & 0.638 $\pm$ 0.025 \\
    TokenGT (Lap) + Performer + SEB & \textbf{0.903 $\pm$ 0.004} & \textbf{0.950 $\pm$ 0.003} & \textbf{0.949 $\pm$ 0.007} & 0.912 $\pm$ 0.006 & \textbf{0.653 $\pm$ 0.029} & \textbf{0.718 $\pm$ 0.012} \\
    \Xhline{2\arrayrulewidth}
\end{tabular}
\end{adjustbox}
\vspace{-0.35cm}
\end{table}
While our main experiment in Section~\ref{sec:experiment_pcqm4mv2} focuses on graph-level predictions, TokenGT can in principle be applied to a more broad class of node-level or edge-level graph understanding tasks by putting prediction head on appropriate output tokens.
To demonstrate this, we conduct additional experiments on a variety of transductive node classification datasets.
In contrast to PCQM4Mv2, they involve large graphs with up to tens of thousands of nodes, posing a challenge to $\mathcal{O}(n^2)$ complexity methods such as graph Transformers that rely on dense attention bias.

\paragraph{Dataset} We use transductive node classification datasets, where each data is represented as a node in a large-scale graph, including co-authorship (CS, Physics)~\cite{schur2018pitfalls}, co-purchase (Photo, Computers)~\cite{schur2018pitfalls}, and Wikipedia page networks (Chameleon, Crocodile)~\cite{rozemberczki2021multi}.
We randomly split the dataset into train, validation, and test sets by randomly reserving 30 random nodes per class for validation and test respectively, and use the rest of the nodes for training.
Dataset statistics is provided in Table~\ref{table:dataset_statistics_transductive}.

\paragraph{Approach}
We utilize simple variants of TokenGT with Performer kernel attention of $\mathcal{O}(n+m)$ complexity.
Due to the large number of nodes $n$, an immediate challenge for TokenGT is dealing with the orthonormality assumption on the node identifiers (Lemma~\ref{lemma:approximation_equivariant_basis}) as the maximal number of orthonormal node identifiers is bounded by dimension $d_p$.
In this case, it is reasonable to introduce \emph{near-orthonormal} vectors as node identifiers, as it is theoretically guaranteed that we can draw an exponential number $\mathcal{O}(e^{\Omega(d_p)})$ of $d_p$-dimensional near-orthonormal vectors~\cite{gorban2016approximation}.
For \emph{TokenGT (Near-ORF)}, we use $d_p = 64$-dimensional random node identifiers where each entry is sampled from $\{-1/d_p,+1/d_p\}$ with coin toss~\cite{gorban2016approximation}.
For \emph{TokenGT (Lap)}, we use a subset of the Laplacian eigenvectors as node identifiers, specifically $d_p/2$ eigenvectors with lowest eigenvalues and $d_p/2$ eigenvectors with highest eigenvalues, and choose $d_p$ among $64$-$100$ based on validation performance.

While \emph{Near-ORF} and \emph{Lap} can theoretically serve as an efficient low-rank approximation for orthonormal node identifiers, their approximation can affect the quality of modeled equivariant basis (Section~\ref{sec:theory}).
In particular, equivariant basis ($\mu$) represented as \textbf{sparse} basis tensor ($\mathbf{B}^\mu$; Definition~\ref{defn:basis_tensor}) are expected to be affected more, as they require most entries to be zero.
To remedy this, we take a simple approach of residually adding one of such sparse equivariant operators $\mathbf{X}_{ii}\mapsto\mathbf{X}_{ii} + \sum_{j\neq i}\mathbf{X}_{ij}$ explicitly after each Transformer layer.
We denote this variant as \emph{TokenGT (Lap) + Performer + SEB}, where SEB abbreviates sparse equivariant basis.
This fix is minimal, easy to implement, and highly efficient as it only requires a single \texttt{torch.coalesce()} call, and also empirically effective.

\paragraph{Architecture}
All our models in Table~\ref{table:transductive_node_classification} utilize a linear prediction head on the node tokens obtained at the final Transformer layer to perform node-level classification.
We perform an exploratory hyperparameter search over the number of layers from $2$-$4$, heads $H$ from $1$-$4$, hidden dimension $d$ from $128$-$1024$, and dropout rate from $\{0.1, 0.5\}$, based on validation performance.

We employ strong message-passing GNN and graph Transformer baselines, including GCN~\cite{kipf2017semi}, GAT~\cite{velikovic2018graph}, GIN~\cite{xu2019how} which has 2-WL expressiveness similar to ours, and Graphormer~\cite{ying2021do} based on fully-connected node self-attention.
For message-passing GNNs, we use a 4-layer architecture and search hidden dimension $d$ from $\{64, 1024\}$ based on validation performance.
For Graphormer, we perform an exploratory search on the number of layers from $1$-$4$, heads $H$ from $1$-$4$, and hidden dimension $d$ from $128$-$1024$ based on validation performance.
We apply 0.5 dropout for all baselines.

\paragraph{Training and Evaluation}
We report and compare classification accuracy on the test nodes at best validation accuracy aggregated over 7 randomized runs.
We train all models with node-level categorical cross-entropy loss using Adam optimizer~\cite{kingma2015adam} on a single RTX 3090 GPU with 24GB.
We train all models with a learning rate of 1e-3 for 300 epochs.

\paragraph{Results}
The results are in Table~\ref{table:transductive_node_classification}. Graphormer~\cite{ying2021do} suffers out-of-memory in the Physics dataset mainly due to the spatial encoding that requires $\mathcal{O}(n^2)$ memory.
By constraining the model capacity appropriately, we were able to run Graphormer on other datasets.
However, we observe a low performance, presumably due to the memory cost that prevents depth and head scaling.
As the spatial encoding is incorporated into the model via attention bias, the model strictly requires $\mathcal{O}(n^2)$ memory and cannot be easily made more efficient.
On the other hand, TokenGT variants are able to utilize Performer attention with $\mathcal{O}(m+n)$ cost, which allows using larger models to achieve the best performance in all but one dataset (Computers, where the performance is on par with the best model).

\subsection{Additional Discussion on Performance on PCQM4Mv2 (Section~\ref{sec:experiment_pcqm4mv2})}\label{sec:apdx_extended_discussion_pcqm4mv2}
As in the Table~\ref{table:pcqm4mv2} in the main text, TokenGT currently shows a slightly lower performance compared to the Graphormer and its successors in the PCQM4Mv2 benchmark.
We conjecture this is partly because we intentionally keep its components simple to faithfully adhere to the equivariance theory.
We discuss some engineering approaches that may enhance the performance of TokenGT \emph{at the cost of differentiating from the theory}.
We consider engineering TokenGT to match or outperform sophisticated graph Transformers as a promising and important next research direction.

\paragraph{Node Identifiers} Our best performing TokenGT (Lap) currently uses Laplacian eigenvectors~\cite{dwivedi2020benchmarking} as the node identifiers, which has been criticized for issues such as loss of structural information~\cite{kreuzer2021rethinking} and sign ambiguity~\cite{lim2022sign}.
Thus, one could try to relax the theoretical requirement for orthonormality of node identifiers and incorporate more powerful node positional encodings~\cite{kreuzer2021rethinking, lim2022sign} as node identifiers, which could potentially yield better performance in practice.

\paragraph{(Hyper)Edge Tokens} TokenGT currently treats an undirected input edge $(u, v)$ as if both directions $(u, v)$ and $(v, u)$ are present, leading to a pair of edge tokens $[\mathbf{X}_{(u,v)}, \mathbf{P}_u, \mathbf{P}_v]$ and $[\mathbf{X}_{(v,u)}, \mathbf{P}_v, \mathbf{P}_u]$.
Similarly, an undirected order-$k$ input hyperedge $(v_1, ..., v_k)$ of an higher-order hypergraph is parsed to all possible orderings of node identifiers.
While this is a common characteristic of tensor-based permutation equivariant neural networks~\cite{maron2019invariant,maron2019on,keriven2019universal,serviansky2020set,maron2019provably,kim2021transformers}, they can lead to memory overhead and redundancy since multiple tokens represent an identical undirected edge.
To avoid this, one can use a single token for each undirected (hyper)edge and pool the node identifiers as $\sum_{i=1}^k\rho(\mathbf{P}_{v_i})$.
Combined with powerful node identifiers, this approach could potentially enhance the model performance.

% camera ready
% \newpage
% \input{checklist}
% \newpage
% \pagenumbering{arabic}
% \input{appendix}

\end{document}